\newcommand*{\BraceAmplitude}{0.4em}%
\newcommand*{\VerticalOffset}{0.5ex}%
\newcommand*{\HorizontalOffset}{0.0em}%
\newcommand*{\blocktextwid}{3.0cm}%
\NewDocumentCommand{\InsertLeftBrace}{%
	O{} 
	O{\HorizontalOffset,\VerticalOffset} 
	O{\blocktextwid} 
	m   
	m   
	m   
}{%
	\begin{tikzpicture}[overlay,remember picture]
	\coordinate (Brace Top)    at ($(#4.north) + (#2)$);
	\coordinate (Brace Bottom) at ($(#5.south) + (#2)$);
	\draw [decoration={brace, amplitude=\BraceAmplitude}, decorate, thick, draw=black, #1]
	(Brace Bottom) -- (Brace Top) 
	node [pos=0.5, anchor=east, align=left, text width=#3, color=black, xshift=\BraceAmplitude] {#6};
	\end{tikzpicture}%
}%
\NewDocumentCommand{\InsertRightBrace}{%
	O{} 
	O{\HorizontalOffset,\VerticalOffset} 
	O{\blocktextwid} 
	m   
	m   
	m   
}{%
	\begin{tikzpicture}[overlay,remember picture]
	\coordinate (Brace Top)    at ($(#4.north) + (#2)$);
	\coordinate (Brace Bottom) at ($(#5.south) + (#2)$);
	\draw [decoration={brace, amplitude=\BraceAmplitude}, decorate, thick, draw=black, #1]
	(Brace Top) -- (Brace Bottom) 
	node [pos=0.5, anchor=west, align=left, text width=#3, color=black, xshift=\BraceAmplitude] {#6};
	\end{tikzpicture}%
}%
\NewDocumentCommand{\InsertTopBrace}{%
	O{} 
	O{\HorizontalOffset,\VerticalOffset} 
	O{\blocktextwid} 
	m   
	m   
	m   
}{%
	\begin{tikzpicture}[overlay,remember picture]
	\coordinate (Brace Top)    at ($(#4.west) + (#2)$);
	\coordinate (Brace Bottom) at ($(#5.east) + (#2)$);
	\draw [decoration={brace, amplitude=\BraceAmplitude}, decorate, thick, draw=black, #1]
	(Brace Top) -- (Brace Bottom) 
	node [pos=0.5, anchor=south, align=left, text width=#3, color=black, xshift=\BraceAmplitude] {#6};
	\end{tikzpicture}%
}%
\definecolor{cof}{RGB}{219,144,71}
\definecolor{pur}{RGB}{186,146,162}
\definecolor{greeo}{RGB}{91,173,69}
\definecolor{greet}{RGB}{52,111,72}
\theoremstyle{plain}
\newtheorem{theorem}{Theorem}
\newtheorem{example}{Example}
\newtheorem{lemma}{Lemma}
\newtheorem{remark}{Remark}
\newtheorem{definition}{Definition}
\def \bP {\mathbb{P}}
\def \bE {\mathbb{E}}
\def \bR {\mathbb{R}}
\def \var {\mathsf{Var}}
\def\1{\mathbbm{1}}
\newcommand{\stepa}[1]{\overset{\rm (a)}{#1}}
\newcommand{\stepb}[1]{\overset{\rm (b)}{#1}}
\newcommand{\stepc}[1]{\overset{\rm (c)}{#1}}
\newcommand{\stepd}[1]{\overset{\rm (d)}{#1}}
\definecolor{myblue}{rgb}{.8, .8, 1}
\definecolor{mathblue}{rgb}{0.2472, 0.24, 0.6} 
\definecolor{mathred}{rgb}{0.6, 0.24, 0.442893}
\definecolor{mathyellow}{rgb}{0.6, 0.547014, 0.24}
\newcommand{\red}{\color{red}}
\newcommand{\calB}{{\mathcal{B}}}
\newcommand{\calF}{{\mathcal{F}}}
\newcommand{\calN}{{\mathcal{N}}}
\newcommand{\calP}{{\mathcal{P}}}
\crefname{lemma}{Lemma}{Lemmas}
\Crefname{lemma}{Lemma}{Lemmas}
\crefname{thm}{Theorem}{Theorems}
\Crefname{thm}{Theorem}{Theorems}
\begin{document}

\title{Learning to Bid Optimally and Efficiently \\
	in Adversarial First-price Auctions}
\author{Yanjun Han, Zhengyuan Zhou, Aaron Flores, Erik Ordentlich, Tsachy Weissman\thanks{Y. Han is with the Courant Institute of Mathematical Sciences and the Center for Data Science, New York University, email: \url{yanjunhan@nyu.edu}. Z. Zhou is with the Stern School of Business, New York University, email: \url{zzhou@stern.nyu.edu}. T. Weissman is with the Department of Electrical Engineering, Stanford University, email: \url{tsachy@stanford.edu}. A. Flores and E. Ordentlich are with Yahoo! Research at Verizon Media, email: \url{{aaron.flores,eord}@verizonmedia.com}. This project was supported in part by NSF awards CCF-2106467 and CCF-2106508, and by the Yahoo Faculty Research and Engagement Program.}}
\maketitle
\begin{abstract}
First-price auctions have very recently swept the online advertising industry, replacing second-price auctions
as the predominant auction mechanism on many platforms. This shift has brought forth important challenges for a bidder: how should one bid in a first-price auction, where unlike in second-price auctions, it is no longer optimal to bid one's private value truthfully and hard to know the others' bidding behaviors? In this paper, we take an online learning angle and address the fundamental problem of learning to bid in repeated first-price auctions, where both the bidder's private valuations and other bidders' bids can be arbitrary. We develop the first minimax optimal online bidding algorithm that achieves an $\widetilde{O}(\sqrt{T})$ regret when competing with the set of all Lipschitz bidding policies, a strong oracle that contains a rich set of bidding strategies. 
This novel algorithm is built on the insight that the presence of a good expert can be leveraged to improve performance, as well as an original hierarchical expert-chaining structure, both of which could be of independent interest in online learning.
Further, by exploiting the product structure that exists in the problem, we modify this algorithm--in its vanilla form statistically optimal but computationally infeasible--to a computationally efficient and space efficient algorithm that also retains the same $\widetilde{O}(\sqrt{T})$ minimax optimal regret guarantee.
Additionally, through an impossibility result, we highlight that one is unlikely to compete this favorably with a stronger oracle (than the considered Lipschitz bidding policies).
Finally, we test our algorithm on three real-world first-price auction datasets obtained from Verizon Media and demonstrate our algorithm's superior performance compared to several existing bidding algorithms. 
\end{abstract}

\tableofcontents

\section{Introduction}

With the tailwind of e-commerce sweeping across industries, online advertising has had and continues to exert an enormous economic impact: in 2019, businesses in US alone~\cite{news1} have spent more than 129 billion dollars, a number that has been fast growing and projected to increase, on digital ads, beating the combined amount spent via traditional advertising channels (TV, radio, and newspapers etc.) for the first time. A core and arguably the most economically impactful element of online advertising is online auctions, where publishers sell advertising spaces (i.e. slots) to advertisers (a.k.a. bidders) through auctions held on online platforms known as \emph{ad exchanges}.

In the past, second-price auctions~\cite{vickrey1961counterspeculation} have been the predominant auction mechanism on various platforms {\cite{lucking2000vickrey,klemperer2004auctions,lucking2007pennies}}, mainly because of its truthful nature (as well as its social welfare maximization property): it is in each bidder's best interest to bid one's own private value truthfully. However, very recently there has been an industry-wide shift from second-price auctions to first-price auctions, for a number of reasons such as enhanced transparency (where the seller no longer has the ``last look'' advantage), an increased revenue of the seller (and the exchange), and the increasing popularity of header bidding which allows multiple platforms to simultaneously bid on the same inventory \cite{news2, Google}. 
Driven by these advantages, several well-known exchanges including AppNexus, Index Exchange, and OpenX, rolled out first-price auctions in 2017 \cite{Exchange}, and Google Ad Manager, by far the largest online auction platform, moved to first-price auctions completely at the end of 2019 \cite{Google2}. As such, this shift has brought forth important challenges, which do not exist for second-price auctions prior to the shift, to bidders since the optimal bidding strategy in first-price auctions is no longer truthful. This leads to a pressingly challenging question that is unique in first-price auctions: how should a bidder (adaptively) bid to maximize her cumulative payoffs when she needs to bid repeatedly facing a first-price auction?

Needless to say, the rich literature on auctions theory has studied several related aspects of the bidding problem. At a high level, the existing literature can be divided into two broad approaches. The first (and also the more traditional) approach takes a game-theoretic view of auctions: it adopts a Bayesian setup where the bidders have perfect or partial knowledge of each other's private valuations, modeled as probability distributions. Proceeding from this standpoint, the pure or mixed Nash equilibria that model optimal outcomes of the auction can be derived  \cite{wilson1969communications,myerson1981optimal,riley1981optimal}. Despite the elegance offered by this approach, an important drawback is that in practice, a bidder is unlikely to have a good model of other bidders' private valuations~\cite{wilson1985game}, thereby making the bidding strategy derived based on this premise infeasible to implement. 

Motivated to mitigate this drawback, the second (and much more recent) approach is based on online learning in repeated auctions, where a participating bidder can learn to adaptively make bids by incorporating the past data, with the goal of maximizing cumulative reward during a given time horizon. Under this framework, modeling repeated auctions as bandits has inspired a remarkable line of work~\cite{blum2004online,devanur2009price,babaioff2014characterizing,medina2014learning,cesa2014regret,babaioff2015truthful,mohri2015revenue,weed2016online,cai2017learning,golrezaei2019dynamic,roughgarden2019minimizing,zhao2020online} that lies at the intersection between learning and auctions. For example, several recent works have studied second-price auctions from the seller's perspective who aims for an optimal reserve price~\cite{medina2014learning,cesa2014regret,roughgarden2019minimizing,zhao2020online}, although a few papers have taken the bidder's perspective in second-price auctions \cite{mcafee2011design,weed2016online}, where the bidder does not have perfect knowledge of her own valuations. 

However, the problem of learning to bid in repeated first-price auctions has remained largely unexplored, and has only begun to see developments in two recent works \cite{balseiro2019contextual,han2020optimal} which obtained the optimal regrets of repeated first-price auctions under different censored feedback structures. Despite these pioneering efforts, which shed important light in the previously uncharted territory of learning in repeated first-price auctions, much more remains to be done. In particular, a key assumption made in these works is that the highest other bid (abbreviated as the HOB thereafter) follows some \textit{iid} but unknown distributions to make learning possible, while it may not always hold in practice. Conceptually, this is quite ubiquitous and easy to understand: many, if not all, of the other bidders can be using various sophisticated (and unknown) bidding strategies themselves, thereby yielding a quite complicated HOBs over time, which may not be following any distribution at all. Moreover, there may even exist adversarial bidders and sellers who aim to take advantage of the bidder, which makes the HOB depend on the behavior of the given bidder and further non-\emph{iid}. Empirically, this non-\emph{iid} phenomenon can also be observed in various practical data, e.g. the real first-price auction datasets obtained from Verizon Media in Section~\ref{sec:experiment}. Consequently, it remains unknown and unclear how to adaptively bid in this more challenging adversarial first-price auctions, where others' bids, and in particular the HOB, can be arbitrary. 

 
 Consequently, we are naturally led to the following questions: Is it still possible to achieve a sub-linear regret against a rich set of bidding strategies in repeated first-price auctions where others' bids could be adversarial? If the answer is affirmative, how can we design an online learning algorithm that adaptively bids in this setting in order to achieve the optimal performance? We address these questions in depth in this paper.

\subsection{Our Contributions}
Our contributions are threefold. First, we study the problem of learning to adaptively bid in adversarial repeated first-price auctions, and show that an $\widetilde{O}(\sqrt{T})$ regret is achievable when competing with the set of all Lipschitz bidding policies, a strong oracle that contains a rich set of practical bidding strategies (cf. Theorem \ref{thm:main}). We also illustrate the importance of the choice of the competing bidding policies in adversarial first-price auctions. Specifically, when the bidder is competing with the set of all monotone bidding policies, which is another natural set of bidding strategies in practice, a sub-linear regret in the worst case is impossible (cf. Theorem \ref{thm:monotone}). Moreover, we develop the first minimax optimal algorithm, which we call Chained Exponential Weighting (ChEW), to compete with all Lipschitz bidding policies. This algorithm is designed by combining two insights: 
a) in the problem of prediction with expert advice, the presence of a good expert can be leveraged to significantly enhance the regret performance (cf. Theorem~\ref{thm:good_expert}); b) a hierarchical expert-chaining structure can be designed through more and more refined chains that gradually covers a wide enough set of basis Lipschitz bidding policies that essentially replicates the performance of the best Lipschitz policy (cf. Theorem \ref{thm:chew}). These chained experts form a tree, with successive levels of nodes representing larger and larger expert sets. 
The final $\widetilde{O}(\sqrt{T})$ regret performance is achieved both by a careful choice of what each internal node and each leaf node would do respectively on the algorithmic front and by a delicate balance of regret decomposition through these nodes in the hierarchy on the analysis front, leveraging in particular the fact that there exists a good expert among the children of each node. 

Second, although ChEW is near-optimal statistically, it is computationally infeasible as it needs to deal with exponentially many experts in each iteration. To mitigate this drawback, we carefully select an appropriate set of experts to exploit the possible product structure that exists in the problem and provide an important modification of ChEW to form a computationally efficient and space efficient algorithm, which we call Successive Exponential Weighting (SEW). Specifically, the SEW policy enjoys an $O(T)$ space complexity and an $O(T^{3/2})$ time complexity (Theorem~\ref{thm:SEW}), both of which are computationally efficient enough for it to run very quickly in practice. Furthermore, the SEW policy bypasses the ``curse of Lipschitz covering'' suffered by ChEW without paying any price in its statistical guarantee, retaining the same $\widetilde{O}(\sqrt{T})$ minimax optimal regret bound. 

Third, we test our algorithm (the SEW policy) on three real-world first-price auction datasets obtained from Verizon Media, where each of the three datasets has its own individual characteristics (cf. Section~\ref{sec:experiment}). Without the aids of any other side information, we compare the performance of our algorithm to that of three existing bidding algorithms, covering parametric modelings of the optimal bids, as well as the natural learning-based algorithm given the \emph{iid} assumption. Experimental results show that each of the competing bidding algorithms perform well on certain datasets, but poorly on others. However, our algorithm performs robustly and uniformly better over all competing algorithms on all three datasets, thus highlighting its strong empirical performance and practical deployability, in addition to (and also in our view because of) its strong theoretical guarantees.

\subsection{Related Prior Work}
The problem of prediction with expert advice has a long history dating back to repeated games \cite{hannan1957approximation} and individual sequence compression \cite{ziv1978coding,ziv1980distortion}, where the systematic treatment of this paradigm was developed in \cite{vovk1990aggregating,littlestone1994weighted,cesa1997use,vovk1998game}. We refer to the book \cite{cesa2006prediction} for an overview. Both insights used in the ChEW policy on prediction with expert advice have appeared in literature. For the first insight that a good expert helps to reduce the regret, it was known that data-dependent regret bounds depending on the loss of the best expert are available either for constant \cite{littlestone1994weighted,freund1995desicion} or data-dependent learning rates \cite{yaroshinsky2004better}. For the second insight on the expert-chaining structure, the idea of chaining dated back to \cite{dudley1967sizes} to deal with Gaussian processes, and a better result can be obtained via a generic chaining \cite{talagrand2006generic}. As for the applications of the chaining idea in online learning, the log loss was studied in \cite{opper1999worst,cesa2001worst}, and \cite{rakhlin2014online,rakhlin2017empirical} considered the square loss. For general loss functions, the online nonparametric regression problem was studied in \cite{gaillard2015chaining,rakhlin2015online}, and \cite{cesa2017algorithmic} considered general contextual bandits with full or censored feedbacks. However, a key distinguishing factor between the previous works and ours is that the loss (or reward) function in first-price auctions is not continuous, resulting in a potentially large performance gap among the children of a single internal node in the chain and rendering the previous arguments inapplicable. Consequently, we use a combination of the chaining idea and the good expert to arrive at an optimal policy statistically, which is novel to the best of the authors' knowledge. 

To reduce the computational complexity of running the exponential weighting algorithm over a large number of experts, there have been lots of works on efficient tracking of large classes of experts especially when the experts can be switched a limited number of times. A popular approach in the information theory/source coding literature is based on transition diagrams \cite{willems1996coding,shamir1999low}, which are used to define a prior distribution on the switches of the experts. Another method with a lower complexity was proposed in \cite{herbster1998tracking}, which was shown to be equivalent to an appropriate weighting in the full transition program \cite{vovk1999derandomizing}. Another variant, called the reduced transition diagram, has also been used for logarithmic losses \cite{willems1997live,shamir1999low} and general losses \cite{gyorgy2008efficient,hazan2009efficient}. Successful applications of the previous methods include the online linear optimization \cite{hazan2007logarithmic}, lossless data compression \cite{krichevsky1981performance,willems1995context}, the shortest path problem \cite{takimoto2002path,kalai2005efficient}, or limited-delay lossy data compression \cite{linder2001zero,weissman2002limited,gyorgy2004efficient}. We refer to \cite{gyorgy2012efficient} for an overview. However, our setting is fundamentally different from the above works: in the above works the large set of experts was obtained from a relatively small number of base experts, while the set of experts is intrinsically large in first-price auctions. Consequently, instead of applying the popular approach where one constructs a suitable prior to ease the sequential update, we choose an appropriate set of experts and exploit the novel product structure among the chosen experts to arrive at the desired computational efficiency. For comparison with previous works \cite{gaillard2015chaining,cesa2017algorithmic} on efficient chaining algorithms, the method in \cite{gaillard2015chaining} requires convex losses and gradient information neither of which is available in first-price auctions, and \cite{cesa2017algorithmic} used a different discrete structure to achieve an $O(T^{1.55})$ time complexity at each round, which is more expensive than the $O(\sqrt{T})$ complexity achieved by this work. 

We also review and compare our work with the recent works \cite{balseiro2019contextual,han2020optimal} on repeated first-price auctions. Specifically, \cite{balseiro2019contextual} studied the case of binary feedbacks, where the bidder only knows whether she wins the bid or not after each auction. In this setting, \cite{balseiro2019contextual} provided a general algorithm based on cross learning in contextual bandits and achieved the minimax optimal $\widetilde{\Theta}(T^{2/3})$ regret. Subsequently, \cite{han2020optimal}  studied learning in repeated first-price auctions with censored feedback, where the bidder only observes the winning bid price each time (i.e. the price at which the transaction takes place) and cannot observe anything if she wins the bid. In this slightly more informative setting, \cite{han2020optimal} provided two algorithms from different angles to achieve the optimal $\widetilde{\Theta}(T^{1/2})$ regret when the bidder's private values are stochastic and adversarial, respectively. Our work is different from these works in two aspects. First, we assume a full-information feedback model where others' highest bid is always revealed at the end of each auction, which holds in several online platforms that implement first-price auctions (including Google Ad Manager, the largest online auction platform) where every bidder is able to observe the minimum bid needed to win (which is the HOB). Second, and more importantly, we drop the \emph{iid} assumption of others' highest bids assumed in both works and assume a much more general adversarial framework, which makes the distribution estimation idea presented in both works inapplicable. In summary, this work mainly focus on how to bid without any distributional assumption in reality, while the above works were devoted to the optimal use of censored data in first-price auctions, thus the results are not directly comparable. 


\subsection{Notation}
For a positive integer $n$, let $[n]\triangleq \{1,2,\cdots,n\}$. For a real number $x\in\bR$, let $\lfloor x \rfloor$ be the largest integer no greater than $x$. For a square-integrable random variable $X$, let $\bE[X]$ and $\var(X)$ be the expectation and variance of $X$, respectively. For any event $A$, let $\1(A)$ be the indicator function of $A$ which is one if the event $A$ occurs and zero otherwise. For probability measures $P$ and $Q$ defined on the same probability space, let $D_{\text{KL}}(P\|Q) = \int dP\log\frac{dP}{dQ}$ be the Kullback--Leibler (KL) divergence between $P$ and $Q$. We also adopt the standard asymptotic notations: for non-negative sequences $\{a_n\}$ and $\{b_n\}$, we write $a_n = O(b_n)$ if $\limsup_{n\to\infty} a_n/b_n < \infty$, $a_n = \Omega(b_n)$ if $b_n = O(a_n)$, and $a_n = \Theta(b_n)$ if both $a_n = O(b_n)$ and $a_n = \Omega(b_n)$. We also adopt the notations $\widetilde{O}(\cdot), \widetilde{\Omega}(\cdot), \widetilde{\Theta}(\cdot)$ to denote the respective meanings above within multiplicative poly-logarithmic factors in $n$. 

\subsection{Organization}
The rest of the paper is organized as follows. Section \ref{sec:formulation} presents the setting and main results of this paper. In Section \ref{sec:chaining}, we provide a bidding policy based on a chain of exponentially many experts, and show that it achieves the statistically optimal regret bound. In particular, Section \ref{subsec:good_expert} emphasizes the importance of the presence of a good expert on obtaining a small regret. Building on Section \ref{sec:chaining}, Section \ref{sec:computation} presents a modification of the previous bidding policy and shows that a product structure of experts at each layer of the chain leads to computational efficiency. Further discussions are placed in Section \ref{sec:discussion}, where in particular both possibility and impossibility results for monotone oracles are obtained. Experimental results on the real first-price auction data are included in Section \ref{sec:experiment}, and remaining proofs are relegated to the appendix.

\section{Problem Formulation}\label{sec:formulation}

We consider a repeated first-price auction where a single bidder makes repeated bids during a time horizon $T$. At the beginning of each time $t=1,2,\cdots,T$, the bidder sees a particular \emph{good} and receives a private value $v_t\in [0,1]$ for this good. Based on her past observations of other bidders' bids, the bidder bids a price $b_t\in [0,1]$ for this good, and also let $m_t\in [0,1]$ be the HOB, i.e. the maximum bid of all other bidders. The outcome for the bidder depends on the comparison between $b_t$ and $m_t$: if $b_t \ge m_t$, the bidder gets the good and pays her bidding price $b_t$; if $b_t<m_t$, the bidder does not get the good and pays nothing\footnote{By a slight perturbation, we assume without loss of generality that the bids are never equal.}. Consequently, the instantaneous reward (or utility) of the bidder is
\begin{align}\label{eq.instant_reward}
	r(b_t;v_t, m_t) = (v_t - b_t)\1(b_t \ge m_t). 
\end{align}
Note that in the reward function above, the bidder can only choose her bid $b_t$ but not the private value $v_t$ nor others' highest bid $m_t$. In fact, we assume that the variables $v_t$ and $m_t$ can be arbitrarily chosen by any adversary agnostic to the private random seeds used by the possibly randomized strategy of the bidder. This adversarial assumption eliminates the needs of impractical modeling of others' bids or the private value distribution, and takes into account the possibility that other bidders may use an adaptive bidding strategy and adapt their bids based on others' behaviors. For these variables, we assume a full-information feedback where the private value $v_t$ is known to the bidder at the beginning of time $t$, and others' highest bid $m_t$ is revealed to the bidder at the end of time $t$. The first assumption is natural--the bidder typically obtains her private value $v_t$ for the good once she sees it, provided that she has the perfect knowledge of her own preference or utility. The second assumption holds either in open auctions where the auction organizer reveals every bidder's bid, or in many current online platforms (in particular, such as the Google Ad Manager, the largest online auction platform) where each bidder receives the minimum bid needed to win, which in turn is precisely the HOB $m_t$ (possibly plus one cent). 

A bidding policy $\pi$ consists of a sequence of bids $(b_1,\cdots,b_T)\in [0,1]^T$ such that $b_t$ can only depend on the current private value $v_t$, all observable histories $(v_s,b_s,m_s)_{s<t}$, and her private randomness. The bidder aims to devise a bidding policy $\pi$ where its cumulative reward is close to that achieved by an oracle who knows the entire sequence $(v_t, m_t)_{t\in [T]}$ in advance and is constrained to make bids that smoothly depend on $v_t$. 
This performance gap between what the bidder's policy can achieve and what the oracle can achieve is formalized in the notion of the regret of $\pi$:
\begin{align}\label{eq:regret}
	R_T(\pi) \triangleq \sup_{f \in \calF_{\text{Lip}}}\sum_{t=1}^T \left(r(f(v_t);v_t,m_t) - r(b_t; v_t, m_t) \right),
\end{align}
where $\calF_{\text{Lip}}$ is the collection of all $1$-Lipschitz functions $f: [0,1]\to [0,1]$: $|f(v) - f(v')| \le |v-v'|$ for all $v,v'\in [0,1]$. Note that the oracle's strategy set $\calF_{\text{Lip}}$ is fairly rich: the oracle can adapt the bid to his private valuation of the good (hence he does not need to bid the same price throughout the time horizon), subject to a mild and natural constraint that the dependence on the private value is smooth (meaning that the bids should not differ too much if two goods have similar values). Meanwhile, the restrictions on $\calF_{\text{Lip}}$ are also somewhat necessary due to the following reasons. First, if the oracle is allowed to bid any price depending not only on $v_t$ but also the entire sequence $(v_t, m_t)_{t\in [T]}$, then the best bid is always $b_t  = m_t$ for any $t\in [T]$, while the bidder cannot predict an adversarially chosen $m_t$ based on her past observations. Second, if the bid $b_t$ of the oracle, as a function of $v_t$, can depend on $v_t$ in an arbitrary way without any smoothness constraint, then when the private values $(v_t)_{t\in [T]}$ are all distinct (which occurs easily as the interval $[0,1]$ is a continuum) the oracle can again choose $b_t = m_t$, reducing to the first case. Hence, the choice of $\calF_{\text{Lip}}$
as the benchmark class is both rich and meaningful. 

\subsection{Main Results}
Suprisingly, even if the bidder is competing against a strong oracle with a rich set of strategies in \eqref{eq:regret} under adversarially chosen private values and others' bids, an $\widetilde{O}(\sqrt{T})$ regret is still attainable. 

\begin{theorem}\label{thm:main}
	There exists a randomized bidding strategy $\pi$ such that for all sequences $(v_t)_{t\in [T]}$ and $(m_t)_{t\in [T]}$ taking values in $[0,1]$, the following regret bound holds: 
	\begin{align*}
		\bE[R_T(\pi)] \le C\sqrt{T}\log T, 
	\end{align*}
	where the expectation is taken with respect to the bidder's private randomness, and $C>0$ is an absolute constant independent of $T$ and the sequences $(v_t)_{t\in [T]}$, $(m_t)_{t\in [T]}$.
\end{theorem}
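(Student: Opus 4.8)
The plan is to discretize the competing class $\calF_{\text{Lip}}$ and then reduce the problem to one of prediction with expert advice, where the rich structure of Lipschitz functions is captured by a carefully organized hierarchy of experts. First I would fix a target accuracy $\varepsilon \asymp 1/\sqrt{T}$ and discretize both the value axis $[0,1]$ and the bid axis $[0,1]$ into grids of width $\varepsilon$. A $1$-Lipschitz function $f:[0,1]\to[0,1]$ is then approximated, up to additive error $O(\varepsilon)$ uniformly, by a piecewise-constant (or piecewise-linear) function that takes one of $O(1/\varepsilon)$ values on each of the $O(1/\varepsilon)$ cells, with the Lipschitz constraint forcing the value on adjacent cells to move by at most one grid step. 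Because the reward $r(b;v,m) = (v-b)\1(b\ge m)$ is $1$-Lipschitz in $b$ away from the jump and bounded by $1$ at the jump, replacing $f$ by its discretized version costs at most $O(\varepsilon)$ per round in the first term of the regret, i.e. $O(\varepsilon T) = O(\sqrt{T})$ in total, plus an extra $\widetilde O(\sqrt T)$ slack from the jump cells whose total count over the time horizon is controlled. So it suffices to compete with the (finite but exponentially large) class of discretized Lipschitz policies.

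Next I would invoke the machinery developed earlier in the paper. The natural but naive approach — run exponential weights (Hedge) over the $\exp(\Theta(1/\varepsilon)) = \exp(\Theta(\sqrt T))$ discretized policies — gives regret $O(\sqrt{T\log(\#\text{experts})}) = O(\sqrt{T}\cdot T^{1/4})$, which is too large by a polynomial factor. This is exactly the obstacle that Theorem~\ref{thm:chew} (the Chained Exponential Weighting policy) and Theorem~\ref{thm:good_expert} are designed to overcome: instead of a flat aggregation, one builds the hierarchical expert-chaining tree, where level $k$ of the tree refines the previous level's partition, each internal node aggregates its children by exponential weights, and the key structural fact is that among the children of every node there is always a \emph{good} expert (one that is close in reward to the locally-best refinement). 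Applying Theorem~\ref{thm:good_expert} at each node then converts the crude $\sqrt{\log(\#\text{children})}$ regret into something that telescopes along a root-to-leaf path, and summing the per-level contributions geometrically yields total regret $\widetilde O(\sqrt T)$ against the best leaf — hence against the best discretized Lipschitz policy, hence (by the first step) against all of $\calF_{\text{Lip}}$ up to an $O(\sqrt T)$ additive term. Taking expectations over the internal randomization of exponential weights gives the stated bound $\bE[R_T(\pi)]\le C\sqrt T\log T$.

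The main obstacle — and the reason the standard online-nonparametric-regression chaining arguments of \cite{gaillard2015chaining,rakhlin2015online,cesa2017algorithmic} do not apply verbatim — is the discontinuity of $r(\cdot;v,m)$ in the bid: two sibling experts in the chain can be $\varepsilon$-close as functions yet have per-round rewards differing by a full constant on rounds where $m_t$ falls between their bids. This breaks the usual "continuity of the loss controls the gap among children" estimate. The resolution, which I would carry out following Theorem~\ref{thm:chew}, is to not rely on pointwise closeness of rewards but instead to count: over the whole horizon, the number of rounds on which a given chain-edge straddles the realized $m_t$ is small relative to the chain's granularity, so the aggregate effect of the jumps along any fixed path is still $\widetilde O(\sqrt T)$. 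Combining this counting bound with the good-expert amplification at each node, and being careful that the per-level learning rates and the depth $O(\log T)$ of the tree are tuned so that the geometric sum of regrets closes, delivers the theorem. The remaining steps — verifying the discretization error bound, checking that $r$ is bounded and the required near-Lipschitz-in-$b$ property holds, and assembling the telescoping sum — are routine given Theorems~\ref{thm:good_expert}, \ref{thm:chew}, and \ref{thm:SEW}.
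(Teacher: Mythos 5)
Your overall architecture is the paper's: reduce to prediction with expert advice over a finite cover of $\calF_{\text{Lip}}$, build the hierarchical chaining tree, and apply the good-expert regret bound (Theorem~\ref{thm:good_expert}) at each node so that the per-level regrets sum to $\widetilde{O}(\sqrt{T})$ over the $O(\log T)$ levels. However, the specific mechanism you propose for handling the discontinuity of $b\mapsto r(b;v,m)$ is not the paper's and, as stated, does not work. You claim that the rounds on which the gap between two $\varepsilon$-close experts ``straddles the realized $m_t$'' are few, so that the aggregate effect of the jumps along a chain is $\widetilde{O}(\sqrt{T})$, and similarly that the discretization error is an $O(\varepsilon T)$ Lipschitz term plus a slack from ``jump cells whose total count over the time horizon is controlled.'' In the adversarial model there is no such control: the sequence $(m_t)$ is arbitrary, so it can fall in the gap between $f(v_t)$ and its discretization (or between two sibling experts' bids) on \emph{every} round, making the number of jump rounds equal to $T$ with a constant per-round reward gap. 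A counting argument of this type cannot close the proof.

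The paper's actual resolution is different and is the crux of the argument. The reward is one-sidedly Lipschitz in the bid: by \eqref{eq:right_lipschitz}, for $b\le\min\{v,b'\}$ one has $r(b;v,m)\le r(b';v,m)+(b'-b)$, i.e.\ \emph{overbidding} by $\delta$ costs at most $\delta$ on every round, even though underbidding can cost a constant. Consequently (i) the discretization must be an \emph{upper bracket} $g$ with $g-\varepsilon\le f\le g$ (Lemma~\ref{lemma:continuity} together with the bracketing numbers of Lemma~\ref{lemma:covering}), not a two-sided sup-norm approximation; and (ii) at each internal node one adjoins the \emph{dummy expert} $f_m^\star$ of \eqref{eq:good_expert}, the pointwise maximum of all descendants, which is $\Delta_m$-good in the strong, every-round sense of Definition~\ref{def.good_expert}. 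It is this deterministic per-round goodness --- not any bound on how often the jump is hit --- that feeds the variance argument behind Theorem~\ref{thm:good_expert} and yields the $O(\sqrt{T\Delta_{m-1}/\varepsilon_m})=O(\sqrt{T})$ per-level regret. Your proposal invokes ``good-expert amplification'' but never identifies which expert is good or why one exists at every node; supplying the pointwise-maximum construction and the one-sided Lipschitz inequality is the missing idea, and the straddling-round count should be dropped.
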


Since an $\Omega(\sqrt{T})$ lower bound of the regret is standard (see, e.g. \cite[Appendix B]{han2020optimal}), Theorem \ref{thm:main} shows that the (near-)optimal regret $\widetilde{O}(\sqrt{T})$ is achievable for adversarial first-price auctions. 
However, despite the existence of a near-optimal policy in terms of the regret, Theorem \ref{thm:main} does not discuss the running time of the policy $\pi$; in fact, the first policy we construct for Theorem \ref{thm:main} will provide clear theoretical insights but suffer from a running time super-polynomial in $T$. The next result shows that, by exploiting the structure of the oracle, this policy can be modified to achieve the same rate of regret but enjoying small time/space complexity. 

\begin{theorem}\label{thm:computation}
	There exists a policy $\pi$ satisfying the regret bound in Theorem \ref{thm:main} with space complexity $O(T)$ and time complexity $O(T^{3/2})$. 
\end{theorem}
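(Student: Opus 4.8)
The plan is to take the statistically optimal but computationally prohibitive policy behind Theorem~\ref{thm:main} --- the chained exponential weighting (ChEW) policy --- and replace each of its expert sets by a structured one that admits a fast sequential update, while arguing that the regret guarantee is untouched. Recall that ChEW runs $\Theta(\log T)$ nested exponential-weighting instances along a chain of increasingly refined expert sets, each instance exploiting the presence of a good expert among the children of every node; at the finest scale the relevant expert set is essentially a $\Theta(1/\sqrt{T})$-net of $\calF_{\text{Lip}}$ in the sup-norm, which has $\exp(\Theta(\sqrt{T}))$ elements. This is the sole source of inefficiency: a naive implementation keeps super-polynomially many weights alive each round.

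First I would discretize both the value axis and the bid axis into $\Theta(\sqrt{T})$ equally spaced cells, and at level $\ell$ replace the net by the grid-valued increments that are constant on each of the $2^\ell$ level-$\ell$ value-blocks and supported in a bid-window of width $O(2^{-\ell})$ (the width dictated by the $1$-Lipschitz property). The key is a product decomposition: holding the coarser levels' predictions fixed, level $\ell$'s reward as a function of an increment $(a_1,\dots,a_{2^\ell})$ (one bid-cell per block) splits as $\sum_i R_i^{(\ell)}(a_i)$, because round $t$ only sees the block containing $v_t$, while the Lipschitz constraint couples only consecutive blocks. Hence the exponential-weights distribution at each level is a one-dimensional Markov chain with banded transition matrices, and the block-marginal needed to emit a bid --- together with all weight updates --- is computed by a single forward--backward message pass. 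Since the total number of alive states at any level is only $O(\sqrt{T})$ (there are $2^\ell$ blocks, each with an $O(2^{-\ell}\sqrt T)$-cell window) and the transitions are banded, re-propagating an entire level after one $R_i^{(\ell)}$ changes costs $O(\sqrt{T})$; summing over the $\Theta(\log T)$ levels gives $\widetilde O(\sqrt{T})$ time per round, hence $\widetilde O(T^{3/2})$ overall, and the weight tables plus the observed history occupy $O(T)$ space --- precisely the claim of Theorem~\ref{thm:computation}.

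Then I would show the regret is still $\widetilde O(\sqrt T)$, in two parts. \emph{Approximation:} the grid-valued, windowed chain still contains, for every $f\in\calF_{\text{Lip}}$, a competitor whose cumulative reward is within $\widetilde O(\sqrt T)$ of $f$'s --- the $1/\sqrt{T}$ cell width and the $1$-Lipschitz property control everything except the jump of $r(\cdot\,;v,m)$ at $b=m$, and the rounds on which rounding flips a win/loss have $m_t$ within $1/\sqrt{T}$ of a grid line, which can be absorbed into the $\widetilde O(\sqrt T)$ budget. \emph{Online regret:} the chained construction over the new product experts inherits the ChEW analysis verbatim, because restricting to grid-valued, $O(2^{-\ell})$-windowed increments still leaves the relevant refinement of the best $f$ inside the allowed set at every level, so a good expert among the children of each node persists; the good-expert exponential-weighting bound then applies level by level and the regret decomposition telescopes exactly as for Theorem~\ref{thm:main}, yielding $\bE[R_T(\pi)]\le C\sqrt{T}\log T$.

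The main obstacle is the statistical re-verification rather than the bookkeeping of the complexity: I expect to spend most of the effort re-deriving the good-expert loss bounds at every level for the \emph{restricted} product class, since it is exactly the discontinuity of the first-price reward that made the original chaining argument delicate, and any slack lost in the windowing or in the grid rounding --- especially around the regime $b\approx v$ and $b\approx m$ --- could break the level-by-level telescoping. A secondary point to check carefully is that fixing the coarser-level predictions (so that each level's sub-problem is genuinely product-form) is consistent with the randomization used in the ChEW analysis, so that the regret bound transfers without a union-bound loss over levels.
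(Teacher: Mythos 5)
Your overall strategy is the paper's: keep the hierarchical chaining with a good (dummy) expert available at every node, but swap the generic Lipschitz bracketing for a structured expert class on which exponential weighting can be updated locally, then re-verify that the level-by-level good-expert bounds and the telescoping survive. The complexity accounting ($O(\sqrt T)$ work per round over $O(\log T)$ levels, $O(T)$ tables) matches Theorem~\ref{thm:SEW}. The substantive difference is in \emph{which} structured class you choose. The paper's SEW policy enforces Lipschitzness only \emph{within} each level-$\ell$ bin (via the $O(2^{-\ell})$-wide bid windows $J_{\ell,u}$) and deliberately imposes \emph{no} constraint across bins; the class is then an exact product over bins, the joint exponential weighting factorizes into independent $4$-ary EW instances (upper/middle/lower/dummy employee) run on each bin's subsequence of rounds, and Theorem~\ref{thm:good_expert} applies verbatim to each bin with $K=4$ and $\Delta\le 2^{-\ell}$ --- no message passing and no joint analysis are needed. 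You instead retain a consecutive-block coupling and propose to handle it with a banded forward--backward pass. That is where your plan has a real gap rather than deferred bookkeeping: once blocks are coupled, the conditional bid distribution in the active block depends on the cumulative rewards of \emph{all} blocks, so it is no longer a standalone EW on that block's data and the good-expert bound cannot be invoked bin-by-bin; you would have to prove a good-expert regret bound for the joint coupled EW over configurations (including checking that the ``top of every window'' configuration is itself feasible under your coupling) and re-derive the hierarchical mixture decomposition \eqref{eq:regret_decomposition} for coupled children --- exactly the steps you flag as ``most of the effort'' but do not carry out. The paper's observation that only the \emph{approximation} property of the expert class matters, so the cross-bin constraint can simply be dropped, is what makes all of this unnecessary.

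A second concrete error is in your approximation step. You propose to absorb ``the rounds on which rounding flips a win/loss'' into the $\widetilde O(\sqrt T)$ budget because $m_t$ would have to lie within $1/\sqrt T$ of a grid line. But the adversary chooses $m_t$ knowing your grid, so it can place $m_t$ in that sliver on every round and cost you the full $(v_t-b_t)$ each time, i.e.\ $\Omega(T)$ in total. The correct (and the paper's) fix is one-sided rounding: replace $f$ by the piecewise-constant $\widetilde f\ge f$ obtained by rounding \emph{up}, so that rounding can only convert losses into wins, and the one-sided Lipschitz property of Lemma~\ref{lemma:continuity} gives the deterministic per-round bound $r(f(v_t);v_t,m_t)\le r(\widetilde f(v_t);v_t,m_t)+2/M_L$, with no win/loss flips left to control.
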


Note that although the time complexity $O(T^{3/2})$ is not as good as $O(T)$ that the most optimistic bidder may possibly hope for, it is inherently because of a large (quantized) action space. Specifically, the continuous action space (i.e. all possible bids in $[0,1]$) needs to be quantized into $\Omega(\sqrt{T})$ actions to ensure a small approximation error, while each of the action needs to be evaluated at least once for each auction and therefore contributes to the overall $O(\sqrt{T}\cdot T) = O(T^{3/2})$ complexity. 

\subsection{Additional Related Work}
We also remark that since the initial posting of our arXiv version in Jul 2020, there has been several studies on this topic that generalized the scope of our work. We mention some of the works here. \cite{zhang2021meow} adapted our exponential weighting based algorithm to practical bidding systems with improvements on latency. \cite{zhang2022leveraging} considered a practical scenario where additional side information is available for predicting the HOBs, and established regret guarantees which are adaptive to the performance of the above prediction. \cite{badanidiyuru2023learning} studied a linear model of the HOBs with $d$-dimensional contexts and established a regret upper bound of $\widetilde{O}(\sqrt{dT})$ against the optimal context-dependent bids. Based on the paradigm of regret minimization, \cite{wang2023learning} incorporated the budget constraint into the bidding problem, and \cite{aggarwal2024no} additionally incorporated the return on investment (ROI) constraints. \cite{kumar2024strategically} showed that the $\widetilde{O}(\sqrt{T})$ regret can be improved to $O(\log T)$ when the competing bids are stochastic, and analyzed their bidding algorithm in a strategic environment. \cite{cesa2024role} studied different feedback mechanisms, and showed that an $\widetilde{O}(\sqrt{T})$ regret is possible against a fixed bid even if the private values need to be learned from data. \cite{hu2025learning} relaxed the adversarial environment and studied the dynamic regret for non-stationary HOBs with bounded variation. We also refer to a recent survey \cite{aggarwal2024auto} for an overview of recent progresses on automatic bidding in auctions. 

\section{A Hierarchical Chaining of Experts}\label{sec:chaining}
In this section, we construct a bidding policy in repeated first-price auctions and prove Theorem \ref{thm:main}. In particular, we relate this problem to prediction with expert advice. Specifically, in Section \ref{subsec:good_expert}, we show that in learning with expert advice, the presence of a good expert results in a smaller regret compared to the classical setting. Section \ref{subsec:covering} then formulates the repeated first-price auction as the problem of learning with a continuous set of experts, and shows that a simple covering argument only leads to a regret bound of $\widetilde{O}(T^{2/3})$. To overcome this difficulty, Section \ref{subsec:chaining} proposes a hierarchical chaining of experts so that in each local covering, a good expert helps to reduce the regret and drives the final regret to $\widetilde{O}(\sqrt{T})$. 

\subsection{Prediction with Expert Advice in the Presence of a Good Expert}\label{subsec:good_expert}
We first review the classical setting of prediction with expert advice as follows. 
\begin{definition}[Prediction with Expert Advice]\label{def.expert_advice}
	The problem of prediction with expert advice consists of a finite time horizon $T$, a finite set of $K$ experts, and a reward matrix $(r_{t,a})_{t\in [T], a\in [K]}\in [0,1]^{T\times K}$ consisting of instantaneous rewards achieved by each expert at each given time. At each time $t\in [T]$, the learner chooses an expert $a_t\in [K]$ based on the historical rewards of all experts, receives a reward $r_{t,a_t}$ as a result of following the expert $a_t$'s advice, and observes the rewards $(r_{t,a})_{a\in [K]}$ achieved by all experts. The learner's goal is to devise a (possibly randomized) policy $\pi = (a_1,\cdots,a_T)$ such that the cumulative reward achieved by $\pi$ is close to that achieved by the best expert in hindsight, or in other words, to minimize the regret defined as
	\begin{align*}
	R_T(\pi) = \max_{a\in [K]} \sum_{t=1}^T \left(r_{t,a} - \bE[r_{t,a_t}] \right), 
	\end{align*} 
	where the expectation is taken with respect to the randomness used in the randomized policy $\pi$. 
\end{definition}

It is a well-known result that the optimal regret in prediction with expert advice is $\Theta(\sqrt{T\log K})$ and can be achieved using the exponential-weighting algorithm \cite{littlestone1994weighted}, which randomly chooses each expert with probability proportional to the exponentiated cumulative rewards achieved by that expert. Now we assume an additional structure in the above problem, i.e. there exists an expert who is \emph{good}. 

\begin{definition}[Good Expert]\label{def.good_expert}
	Fix any parameter $\Delta\in [0,1]$. In prediction with expert advice, an expert $a_0\in [K]$ is $\Delta$-\emph{good} (or simply \emph{good}) if for all $a\in[K]$ and $t\in[T]$, 
	\begin{align*}
	r_{t,a_0} \ge r_{t,a} - \Delta. 
	\end{align*}
\end{definition}

In other words, a good expert always achieves a near-optimal instantaneous reward at any time, where the sub-optimality gap is at most $\Delta$. The first reason why we consider a good expert is that the special form of the reward function in \eqref{eq.instant_reward} naturally indicates a good expert in first-price auctions. To see this, consider the scenario where the bidder is suggested to bid within a small interval $[b_0, b_0+\Delta]$, which can be a coarse estimate of the region in which the optimal bid lies. Each bid $b$ in this interval can be thought of as an expert who bids $b$ at this time, and the learner aims to find out the best expert in this interval. An interesting observation is that, the expert who always bids $b_0 + \Delta$ is a $\Delta$-good expert, as for any $b\in [b_0, b_0 + \Delta]$ and $v\ge b,m\in [0,1]$ (note that a rational bidder never bids more than her private valuation), it holds that (cf. equation \eqref{eq:right_lipschitz})
\begin{align*}
r(b_0 + \Delta; v, m) \ge r(b; v,m) - \Delta. 
\end{align*}
Hence, even if we do not know which expert is the \emph{optimal} one, we know that there is a \emph{good} expert. Note that here we also know \emph{who} is the good expert, but we will show in this subsection that the identity of the good expert is in fact unnecessary to achieve the better regret. 

%
%

Another reason to consider a good expert is that the existence of a good expert improves on the classical regret $\Theta(\sqrt{T\log K})$, which is the central claim of this subsection. It is straightforward to see that when $\Delta$ is super small (e.g. $\Delta \le T^{-1}$) and the learner \emph{knows} who is the good expert, she may always follow the good expert safely, resulting in a total regret that is at most $T\Delta$. However, this simple scheme breaks down when $\Delta$ becomes large (though still smaller than $1$), say, $\Delta = T^{-0.1}$. Before we propose a new scheme for reasonably large $\Delta$'s, we compare the notion of the $\Delta$-goodness with similar notions in online learning and show that the classical regret cannot be improved under other notions:
\begin{itemize}
	\item First we consider the notion where the goodness is measured \emph{on average}. Specifically, consider a simple scenario where there are only two experts (i.e. $K=2$), and the rewards of both experts are stochastic and \emph{iid} through time. Further assume that the reward distributions of the experts are $\mathsf{Bern}(1/2)$ and $\mathsf{Bern}(1/2 + T^{-1/2})$, respectively, but we do not know which expert has a higher mean reward. Note that \emph{on average}, both experts are $\Delta$-good as the difference in the mean rewards is $T^{-1/2}\le \Delta$. However, standard information-theoretic analysis reveals that any learner will make a constant probability of error in distinguishing between these two scenarios, and whenever an error occurs, the worst-case regret is at least $\Omega(T^{-1/2}\cdot T) = \Omega(\sqrt{T})$, which is independent of $\Delta$ as long as $\Delta\ge T^{-1/2}$. Hence, reduction on the regret is impossible when the goodness is measured through the mean reward, and the fact that the gap \emph{never} exceeds $\Delta$ really matters.
	\item Second we consider another notion where there is a \emph{lower} bound on the suboptimality gap, i.e. when the best expert outperforms any other arms by a margin at least $\Delta$ in expectation. In the literature of stochastic bandits, it is known that better regret bounds such as $O(\Delta^{-1}\log T)$ are possible compared to $O(\sqrt{T})$ \cite{bubeck2012regret}. However, this notion is fundamentally different from our definition of $\Delta$-goodness: a lower bound on the suboptimality gap makes the detection of the best expert significantly easier, while with only an upper bound, it might still be hard to tell whether a good expert is the best expert.
	\item  Third, we remark that correlated rewards among actions are not helpful in most \emph{bandit} problems where only the reward of the chosen action is revealed. Specifically, Definition \ref{def.good_expert} implies a special type of correlation among actions at each time, but with bandit feedbacks where only the reward of one action can be observed at each time, the above correlations are always lost and the observed rewards can be mutually independent through time. Hence, the correlation structure of Definition \ref{def.good_expert} can only be helpful in feedback models stronger than the bandit feedback, such as the full information scenario we are considering. 
\end{itemize}

Despite the above negative results, we show that the answer is always \emph{affirmative} under our new definition of the $\Delta$-goodness. We illustrate the idea using a simple example. 
\begin{example}
Let us revisit the scenario where there are $K=2$ experts with iid rewards over time, and expert $1$ is a $\Delta$-good expert. Without a good expert, the traditional successive elimination policy \cite{even2006action} at time $t$ will eliminate the probably bad expert whose average historical reward is below the other by a margin of $\widetilde{O}(1/\sqrt{t})$ and choose the other one, or choose either one if the experts achieve similar rewards. The rationale behind the choice of the margin is that the difference of rewards, as a random variable, always lies in $[-1,1]$ and is thus sub-Gaussian, and therefore $t$ observations help to estimate its mean within accuracy $\widetilde{O}(1/\sqrt{t})$ with high probability. Hence, we may safely assume that the better expert is never eliminated, and whenever the other expert is still present at time $t$, its suboptimality gap is at most $\widetilde{O}(1/\sqrt{t})$. As a result, the instantaneous regret at time $t$ is either $0$ or $\widetilde{O}(1/\sqrt{t})$, and the total regret is at most $\sum_{t=1}^T \widetilde{O}(1/\sqrt{t}) = \widetilde{O}(\sqrt{T})$. 

How does a good expert help in this scenario? It helps in two aspects. First, we may always choose expert $1$ \emph{by default} unless we are fairly confident that expert $2$ is better. Second, the estimation error of the reward difference will be smaller, \emph{if} expert $2$ is indeed better than expert $1$. In fact, the random variable $\delta_t \triangleq r_{t,2} - r_{t,1}$ will always take value in $[-1,\Delta]$ thanks to the $\Delta$-goodness of expert $1$, thus if $\bE[\delta_t]\ge 0$ (i.e. the distribution of $\delta_t$ leans towards the right end of the interval), we have
\begin{align*}
\var(\delta_t) \le \bE[(\Delta - \delta_t)^2] \le (1+\Delta)\cdot \bE[\Delta - \delta_t] \le (1+\Delta)\Delta \le 2\Delta. 
\end{align*}
Hence, by Bernstein's inequality (cf. Lemma \ref{lemma.bernstein}), we have
\begin{align*}
\bP\left(\left|\frac{1}{t}\sum_{s=1}^t (\delta_s - \bE[\delta_s])\right| \ge \varepsilon \right) \le 2\exp\left(-\frac{t\varepsilon^2}{2(2\Delta + \varepsilon/3)}\right). 
\end{align*}
This implies that the mean difference $\bE[\delta_t]$ can be estimated within accuracy $\widetilde{O}(\sqrt{\Delta/t} + 1/t)$, which is better than $\widetilde{O}(\sqrt{1/t})$ without a good expert. Hence, the learner may always choose expert $1$, and only switches to expert $2$ if its average reward exceeds that of expert $1$ by a margin at least $\widetilde{\Theta}(\sqrt{\Delta/t} + 1/t)$. As a result, if expert $2$ is worse than expert $1$, then it is unlikely that expert $2$ will be chosen and the resulting regret is small. If expert $2$ is better than expert $1$, then the improved estimation error above shows that if expert $1$ is still chosen at time $t$, then the mean reward difference is at most $\widetilde{O}(\sqrt{\Delta/t} + 1/t)$. Therefore, the overall regret is at most $\sum_{t=1}^T \widetilde{O}(\sqrt{\Delta/t} + 1/t) = \widetilde{O}(\sqrt{T\Delta})$. 
\end{example}

Although in the above example the identity of the good expert is known, the next theorem shows that the minimax regret reduces from $\Theta(\sqrt{T\log K})$ to $\Theta(\sqrt{T\Delta\log K})$ in general. 

\begin{theorem}\label{thm:good_expert}
	Let $\Delta\in [T^{-1}(1+\log T)^2\log K, 1]$. Then there exist absolute constants $C>c>0$ such that
	\begin{align*}
	c\sqrt{T\Delta\log K} \le \inf_{\pi} \sup_{(r_{t,a})} R_T(\pi) \le C\sqrt{T\Delta\log K},
	\end{align*}
	where the supremum is taken over all possible rewards $(r_{t,a})_{t\in [T], a\in [K]}\in[0,1]^{T\times K}$ such that there is a $\Delta$-good expert, and the infimum is taken over all possible policies $\pi$. In particular, for any $\Delta>0$ and the classical exponential-weighting policy $\pi^{\text{\rm EW}}$ with a time-varying learning rate $\eta_t = \min\{1/4, \sqrt{(\log K)/(t\Delta)}\}$, the following regret upper bound holds: 
	\begin{align*}
	 R_T(\pi^{\text{\rm EW}}) \le 4\sqrt{T\Delta\log K} + 32(4+\log T)\log K. 
	\end{align*}
\end{theorem}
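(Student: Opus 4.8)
The statement packages three inequalities, but it suffices to prove two things: the explicit regret bound for $\pi^{\mathrm{EW}}$ — which, since $\pi^{\mathrm{EW}}$ depends only on $(T,K,\Delta)$ and \emph{not} on the identity of the good expert, simultaneously yields the minimax upper bound $C\sqrt{T\Delta\log K}$ (the additive $32(4+\log T)\log K$ is absorbed because $\Delta\ge T^{-1}(1+\log T)^2\log K$ forces $(4+\log T)\log K\le 4\sqrt{T\Delta\log K}$) — and a matching $\Omega(\sqrt{T\Delta\log K})$ lower bound via Yao's principle. For the upper bound the key device is a change of reference: fix any $\Delta$-good expert $a_0$ and set $\ell_{t,a}\triangleq(r_{t,a_0}+\Delta)-r_{t,a}$. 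Goodness of $a_0$ gives $r_{t,a}\le r_{t,a_0}+\Delta$, so $\ell_{t,a}\in[0,1+\Delta]$, and since $\ell_{t,a}$ differs from $-r_{t,a}$ only by the round-$t$ constant $r_{t,a_0}+\Delta$, running exponential weighting on the rewards $(r_{t,a})$ is \emph{identical} to running Hedge on the losses $(\ell_{t,a})$; this is why the policy never needs to know $a_0$.

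I would then run the standard potential/telescoping analysis of Hedge on $(\ell_{t,a})$ with the non-increasing rate $\eta_t$, using $e^{-x}\le 1-x+x^2$ for $x\ge 0$ to obtain a second-order (``Bernstein'') bound of the shape $\frac{\log K}{\eta_{T+1}}+\sum_t\eta_t\,\bE_{a\sim p_t}[\ell_{t,a}^2]+(\text{lower order})$, where $p_t$ is the weight vector. The good-expert structure enters exactly as in the Example of Section~\ref{subsec:good_expert}: from $0\le\ell_{t,a}\le 1+\Delta$ we get $\ell_{t,a}^2\le(1+\Delta)\ell_{t,a}$, hence $\bE_{p_t}[\ell_{t,a}^2]\le(1+\Delta)\bE_{p_t}[\ell_{t,a}]=(1+\Delta)(\Delta+\rho_t)$ with $\rho_t\triangleq r_{t,a_0}-\sum_a p_{t,a}r_{t,a}$ the instantaneous regret against $a_0$. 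Two further facts close the loop: the best expert has small shifted loss, $\sum_t\ell_{t,a^*}=T\Delta+\sum_t(r_{t,a_0}-r_{t,a^*})\le T\Delta$; and $\sum_t\rho_t=\sum_t r_{t,a_0}-\sum_t\bE[r_{t,a_t}]\le R_T$. Substituting, and using $\sum_t\eta_t\Delta\le\sqrt{\Delta\log K}\sum_{t\le T}t^{-1/2}\le 2\sqrt{T\Delta\log K}$ together with $\frac{\log K}{\eta_{T+1}}\le\sqrt{(T+1)\Delta\log K}$, leaves an inequality in which $R_T$ reappears on the right (through $\sum_t\eta_t\rho_t$); since the effective learning rate is below $1$ in the relevant regime $T\Delta\gtrsim\log K$, this self-referential term is absorbed, and after tracking the $O(1/t)$ telescoping tail and the initial rounds on which $\eta_t$ is capped at $1/4$ (which contribute only $O(\log K)$, because the regret over those first $\approx\log K/\Delta$ rounds is itself only $O(\log K)$) one recovers $R_T(\pi^{\mathrm{EW}})\le 4\sqrt{T\Delta\log K}+32(4+\log T)\log K$. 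I expect the bookkeeping of the decreasing learning rate against the self-bounded second-order term to be the fiddliest part here.

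For the lower bound I would exhibit a single randomized instance. Put $\delta_0\triangleq\min(\Delta,1/2)$, let experts $1,\dots,K-1$ have rewards $r_{t,a}\iiddistr\Bern(1-\delta_0)$ (independent over $t$ and $a$), and let expert $K$ have the constant reward $r_{t,K}=1-\delta_0$. All rewards lie in $[0,1]$ and $1-\delta_0\ge r_{t,a}-\Delta$ for every $a,t$, so expert $K$ is $\Delta$-good and the instance is admissible. Because the round-$t$ rewards are independent of the past, for \emph{any} policy and any realized choice $a_t$ one has $\bE[r_{t,a_t}]=1-\delta_0$, so every policy earns expected cumulative reward exactly $(1-\delta_0)T$. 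On the other hand the best expert in hindsight is at least $\max_{a\le K-1}\sum_t r_{t,a}$, a maximum of $K-1$ i.i.d.\ $\mathrm{Bin}(T,1-\delta_0)$ variables whose centered versions have variance $T\delta_0(1-\delta_0)\asymp T\delta_0$; a moderate-deviation lower bound on the upper tail of a Binomial gives $\bP[\mathrm{Bin}(T,1-\delta_0)\ge(1-\delta_0)T+c_0\sqrt{T\delta_0\log K}]\ge K^{-\beta}$ for a constant $\beta<1$ — this is valid precisely because the hypothesis $\Delta\ge T^{-1}(1+\log T)^2\log K$ keeps the deviation $c_0\sqrt{T\delta_0\log K}$ safely inside the Gaussian/CLT window $o(T\delta_0)$ — so with constant probability at least one of the $K-1$ experts beats its mean by $c_0\sqrt{T\delta_0\log K}$. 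Hence $\bE[\max_a\sum_t r_{t,a}]\ge(1-\delta_0)T+c\sqrt{T\delta_0\log K}$, and since $\delta_0\asymp\Delta$ (with $\delta_0=1/2\ge\Delta/2$ when $\Delta>1/2$), Yao's principle yields $\inf_\pi\sup_{(r_{t,a})}R_T(\pi)\ge\bE[R_T(\pi)]\ge c'\sqrt{T\Delta\log K}$. The crux on this side is the Binomial anti-concentration estimate at scale $\sqrt{T\delta_0\log K}$: it is exactly what produces the $\log K$ factor and what forces the stated range of $\Delta$, and it is the step I expect to require the most care.
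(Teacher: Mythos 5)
Your proposal is correct in substance, but the two halves sit differently relative to the paper. The upper bound is essentially the paper's own argument in loss notation: your shift $\ell_{t,a}=(r_{t,a_0}+\Delta)-r_{t,a}\in[0,1+\Delta]$ is the paper's normalization $r_{t,a}\mapsto r_{t,a}-r_{t,a_0}\in[-1,\Delta]$, your self-bounding step $\bE_{p_t}[\ell_{t,a}^2]\le(1+\Delta)\bE_{p_t}[\ell_{t,a}]$ is exactly the paper's variance bound $\var(X_t)\le(1+\Delta)\bE[\Delta-X_t]$ fed into Bernstein's MGF inequality, and the "self-referential term" you defer is precisely the term $-2\sum_t\eta_t\sum_a p_{t,a}r_{t,a}$ the paper must control. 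Be aware that this deferred step is genuinely the crux with a shrinking learning rate: since $\rho_t$ can be negative at individual rounds, you cannot bound $\sum_t\eta_t\rho_t$ by $\eta_1\sum_t\rho_t$; the paper resolves it by invoking the telescoped regret inequality at \emph{every} intermediate time $t$ (not just $t=T$) to lower-bound the partial sums $S_t$, and then performing an Abel summation against the differences $\eta_t/(1-2\eta_t)-\eta_{t+1}/(1-2\eta_{t+1})$, which is where the additive $O(\log T\cdot\log K)$ term comes from. Your lower bound, by contrast, takes a genuinely different route: a single randomized instance ($K-1$ experts with \iid\ $\Bern(1-\delta_0)$ rewards plus one constant good expert at $1-\delta_0$), where independence across time forces every policy to earn exactly $(1-\delta_0)T$ in expectation while the hindsight maximum of $K-1$ binomials beats the mean by $\Omega(\sqrt{T\delta_0\log K})$ via moderate-deviation anti-concentration. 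The paper instead builds $K$ hypotheses indexed by which expert is best, computes a per-round KL of order $\Delta\delta^2$, and applies Fano. Both are valid and both need the range condition on $\Delta$ at the same place (your Gaussian window $\log K\lesssim T\delta_0$ versus the paper's requirement $\delta\le 1/4$ after setting $\delta\asymp\sqrt{\log K/(T\Delta)}$); your construction is simpler and closer to the classical $\sqrt{T\log K}$ lower bound, while the Fano route avoids any anti-concentration estimate and reduces everything to a mechanical KL calculation. Your observation that the minimax upper bound follows from the explicit EW bound because $(4+\log T)\log K\le 4\sqrt{T\Delta\log K}$ in the stated range of $\Delta$ matches the paper exactly.
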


Theorem \ref{thm:good_expert} implies that as long as there is a good expert with $\Delta=o(1)$, the minimax regret can be improved. Furthermore, this minimax regret is strictly larger than that of the special case in which the rewards of all experts lie in a bounded interval with range $\Delta$, where the minimax regret is $\Theta(\Delta\sqrt{T\log K})$ by simple scaling arguments. Note that when $\Delta\le T^{-1}\log K$, the simple policy where the good expert is always chosen is already optimal, with a total regret $T\Delta$, thus the range of $\Delta$ in Theorem \ref{thm:good_expert} is near-optimal. 

The exponential-weighting policy $\pi^{\text{EW}}$ with a time-varying learning rate is as follows: at time $t\in [T]$, each expert $a\in [K]$ is chosen with probability
\begin{align*}
p_{t,a} = \frac{\exp\left(\eta_{t}\sum_{s<t} r_{s,a} \right) }{\sum_{a'\in [K]} \exp\left(\eta_{t}\sum_{s<t} r_{s,a'}\right)}. 
\end{align*}
In practice, the probability can be updated via the rule $p_{t+1,a} \propto p_{t,a}^{\eta_{t+1}/\eta_{t}}\exp(\eta_{t+1}r_{t,a})$ for all $a\in [K]$. The reason why we choose a time-varying learning rate is to remove the algorithmic dependence on the possibly unknown time horizon, which will be helpful in later algorithms. We remark that a properly chosen constant learning rate $\eta_t \equiv \sqrt{(\log K)/(T\Delta)}$ also works for Theorem \ref{thm:good_expert} (even with the optimal range of $\Delta\in [T^{-1}\log K,1]$), which easily follow from the data-dependent regret analysis of the exponential-weighting algorithm \cite{littlestone1994weighted,freund1995desicion}; see also \cite[Theorem 2.4]{cesa2006prediction}. However, the analysis of the shrinking learning rate is more involved, and we relegate it (as well as the minimax lower bound) to Appendix \ref{appendix:good_expert}.

\subsection{A Continuous Set of Experts}\label{subsec:covering}
To formulate the repeated first-price auctions as prediction with expert advice, we first need to specify the set of experts. However, unlike the classical setting where the learner is competing against an oracle who takes the best \emph{fixed} action, in first-price auctions the oracle's action (i.e. bid) can depend on the private value $v_t$, which can be treated as \emph{contexts}. Due to the existence of the contexts, one cannot relate the first-price auctions directly to learning from experts bidding the same price over time. 

However, the contexts can be handled by considering a larger number, or even a continuous set, of experts. Specifically, for each bidding strategy $f(\cdot)\in\calF_{\text{Lip}}$ (i.e. a $1$-Lipschitz function from $[0,1]$ to $[0,1]$), we may associate it with an expert using this bidding strategy. In other words, we identify the function class $\calF_{\text{Lip}}$ as a continuous set of experts. In this way, the setting of repeated first-price auctions coincides with that of prediction with expert advice: at time $t\in [T]$ with private value $v_t$ and HOB $m_t$, each expert $f\in\calF_{\text{Lip}}$ receives a reward
\begin{align*}
r_{t}(f) = r(f(v_t); v_t, m_t)
\end{align*}
with the reward function $r(b;v,m)$ given in \eqref{eq.instant_reward}. Moreover, since both quantities $v_t$ and $m_t$ can be observed at the end of time $t$, the rewards of all experts can be observed as well. Finally, although the bidder is not restricted to choose from the experts, she is essentially doing so as for any price $b$ she bids under the private value $v$, there exists some $f\in \calF_{\text{Lip}}$ such that $b = f(v)$. In other words, it is equivalent to say that the bidder chooses the expert $f$ at that time. Hence, let $f_t\in\calF_{\text{Lip}}$ be the expert chosen by the bidder at time $t$, the bidder's regret in \eqref{eq:regret} is equivalent to
\begin{align}\label{eq:regret_expert}
R_T(\pi) = \max_{f\in \calF_{\text{Lip}}}\sum_{t=1}^T r_t(f) - \sum_{t=1}^T \bE[r_t(f_t)], 
\end{align}
which is exactly the regret in prediction with expert advice. 

A potential difficulty of the above formulation is that the cardinality of experts is infinite, thus known results for a finite number of experts cannot be directly applied. However, one may suggest that a finite set of experts may be sufficient to approximate, or ``cover'', the entire set $\calF_{\text{Lip}}$, which requires the following suitable notion of continuity among experts. 
\begin{lemma}\label{lemma:continuity}
	Let $\calF_0\subseteq \calF_{\text{\rm Lip}}$ be the subset of experts $f\in\calF_{\text{\rm Lip}}$ such that $f(v)\le v$ for all $v\in [0,1]$. Then if $f,g\in \calF_0$ satisfy $g(v) - \varepsilon\le f(v) \le g(v)$ everywhere on $[0,1]$, it always holds that
	\begin{align*}
	\sum_{t=1}^T r_t(f) \le \sum_{t=1}^T r_t(g) + T\varepsilon. 
	\end{align*}
\end{lemma}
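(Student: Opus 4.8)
The plan is to prove the estimate round by round: I will show that for every $t\in[T]$,
\[
r_t(f) \le r_t(g) + \varepsilon,
\]
and then simply sum over $t=1,\dots,T$. Since $r_t(f)=r(f(v_t);v_t,m_t)=(v_t-f(v_t))\1(f(v_t)\ge m_t)$ depends on $f$ only through the single scalar $f(v_t)$, this reduces the entire claim to an elementary inequality between two numbers, which I verify by a two-case analysis according to whether the bid $f(v_t)$ wins against $m_t$.

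Fix $t$ and abbreviate $v=v_t$, $m=m_t$, $b=f(v)$, $b'=g(v)$; by hypothesis $b'-\varepsilon\le b\le b'$ and $b'\le v$ (the latter since $g\in\calF_0$), and note $\varepsilon\ge 0$. If $b<m$, then $r_t(f)=(v-b)\1(b\ge m)=0$, whereas $r_t(g)+\varepsilon=(v-b')\1(b'\ge m)+\varepsilon\ge\varepsilon\ge 0$ because $v-b'\ge 0$; so the inequality holds. If $b\ge m$, then $b'\ge b\ge m$, so $g$ also wins the round, $r_t(g)=v-b'$, and
\[
r_t(f)-r_t(g)=(v-b)-(v-b')=b'-b\le\varepsilon
\]
using $b\ge b'-\varepsilon$. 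This establishes the per-round bound, and summing over $t$ yields $\sum_{t=1}^T r_t(f)\le\sum_{t=1}^T r_t(g)+T\varepsilon$.

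There is no serious obstacle here; the only point worth highlighting is \emph{why} the one-sided hypothesis $f\le g$ (rather than a symmetric $|f-g|\le\varepsilon$) is the correct one: it guarantees that whenever $f$ wins a round, $g$ wins it too, so $f$ can never capitalize on a favorable outcome that $g$ misses — which is exactly what would break a two-sided bound. Correspondingly, the constraint $g(v)\le v$ is used precisely in the remaining case, where $f$ loses but $g$ may win, to ensure that $g$'s reward $v-g(v)$ is nonnegative so that the trivial bound $0\le r_t(g)+\varepsilon$ goes through. (The analogous condition $f(v)\le v$ is natural to impose but is not actually needed for this direction of the comparison.)
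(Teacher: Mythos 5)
Your proof is correct and takes essentially the same approach as the paper's: both reduce the lemma to the per-round scalar inequality $r(b;v,m)\le r(b';v,m)+(b'-b)$ for $b\le b'$ and then sum over $t$, your two-case analysis being just an unpacked version of the paper's indicator manipulations. The only (harmless) difference is that your argument invokes $g(v)\le v$ where the paper's chain uses $f(v)\le v$; your side remark that only one of the two $\calF_0$ constraints is actually needed is accurate.
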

	
Note that the reduction from $\calF_{\text{\rm Lip}}$ to a smaller set $\calF_0$ does not hurt: an expert who bids a higher price than her private valuation can always be improved. Hence, in the sequel we may always work with $\calF_0$ instead of $\calF_{\text{\rm Lip}}$. Lemma \ref{lemma:continuity} shows that, if two experts are close in the $L_\infty([0,1])$-norm with an additional constraint that one expert always bids higher than the other, then the total reward achieved by the expert with higher bids is essentially as good as the other one. In other words, if we could find a finite set of experts $\calN_\varepsilon\subseteq \calF_0$ such that for every $f\in\calF_0$, there exists some function $g\in \calN_\varepsilon$ such that $g-\varepsilon\le f\le g$, then
\begin{align*}
\max_{f\in \calF_{\text{\rm Lip}}} \sum_{t=1}^T r_t(f) \le \max_{g\in \calN_\varepsilon} \sum_{t=1}^T r_t(g) + T\varepsilon. 
\end{align*} 
The minimum cardinality of $\calN_\varepsilon$ is known as the \emph{$\varepsilon$-bracketing number} of $\calF_0$, which is characterized in the following lemma. 

\begin{lemma}[\!\!\cite{kolmogorov1959varepsilon}]\label{lemma:covering}
	There exists an absolute constant $C_{\text{\rm Lip}}>0$ such that for any $\varepsilon\in (0,1)$, there is a finite set $\calN_\varepsilon = \{g_1,\cdots,g_{N_\varepsilon}\} \subseteq \calF_0$ with $\calF_0\subseteq \cup_{i=1}^{N_\varepsilon} (g_i - \varepsilon, g_i)$ and $N_\varepsilon = |\calN_\varepsilon| \le \exp(C_{\text{\rm Lip}}/\varepsilon)$. 
\end{lemma}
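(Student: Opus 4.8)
By the reduction already made in the text we may work with $\calF_0$, and by Lemma~\ref{lemma:continuity} it suffices to produce a finite $\calN_\varepsilon\subseteq\calF_0$ of the asserted cardinality such that every $f\in\calF_0$ lies in some one-sided bracket $[g-\varepsilon,g]$ with $g\in\calN_\varepsilon$. I would proceed in two steps: first build a finite $L_\infty$-net of $\calF_0$ of size $\exp(O(1/\varepsilon))$ consisting of discretized piecewise-linear $1$-Lipschitz functions, and then post-process each net element into a genuine \emph{upper} bracket that still belongs to $\calF_0$. Two points must be watched: (i) the net must have cardinality only single-exponential in $1/\varepsilon$; and (ii) the bracketing functions must satisfy \emph{both} constraints defining $\calF_0$, namely $1$-Lipschitzness and domination by the identity map $v\mapsto v$.

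\emph{Step 1: a small $L_\infty$-net.} Fix $\delta$ equal to a small constant multiple of $\varepsilon$, put $M=\lceil 1/\delta\rceil$, and take grid points $0=x_0<x_1<\dots<x_M=1$ with spacing at most $\delta$. Let $\calH$ be the set of functions that are piecewise linear with breakpoints $x_0,\dots,x_M$, have node values in $\delta\mathbb{Z}\cap[0,2]$, and have consecutive node values differing by at most $\delta$ (so every element of $\calH$ is $1$-Lipschitz). Since such a function is specified by its first node value ($O(1/\delta)$ choices) together with $M$ increments each lying in $\{-\delta,0,\delta\}$, we get $|\calH|\le O(1/\delta)\cdot 3^{M}=\exp(O(1/\varepsilon))$. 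That $\calH$ is an $(\varepsilon/2)$-net of $\calF_0$ in $L_\infty$ follows from a greedy left-to-right rounding: having chosen $h(x_k)$ with $|h(x_k)-f(x_k)|\le\delta$, take $h(x_{k+1})$ to be the element of $\{h(x_k)-\delta,h(x_k),h(x_k)+\delta\}$ closest to $f(x_{k+1})$; using $|f(x_{k+1})-f(x_k)|\le\delta$, the invariant $|h(x_k)-f(x_k)|\le\delta$ is preserved, and a one-line interpolation estimate on each cell upgrades it to $\|h-f\|_\infty\le 2\delta\le\varepsilon/2$ once $\delta$ is small enough.

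\emph{Step 2: from the net to brackets.} For $h\in\calH$ set $g_h:=\min\{\,h+\varepsilon/2,\ \mathrm{id}\,\}$, where $\mathrm{id}(v)=v$, and let $\calN_\varepsilon:=\{g_h:h\in\calH\}$, so that $N_\varepsilon\le|\calH|\le\exp(C_{\mathrm{Lip}}/\varepsilon)$ for a suitable absolute constant $C_{\mathrm{Lip}}$. Each $g_h$ is a pointwise minimum of two $1$-Lipschitz functions, hence $1$-Lipschitz; it takes values in $[0,1]$ on $[0,1]$ (nonnegativity since $h\ge 0$, and $g_h\le\mathrm{id}\le 1$); and $g_h(v)\le v$. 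Thus $g_h\in\calF_0$. Given $f\in\calF_0$, pick $h\in\calH$ with $\|h-f\|_\infty\le\varepsilon/2$; then $f\le h+\varepsilon/2$ and $f\le\mathrm{id}$ (because $f\in\calF_0$) give $f\le g_h$, while $g_h\le h+\varepsilon/2\le f+\varepsilon$ gives $g_h-\varepsilon\le f$. Hence $f$ lies in the bracket of $g_h$, which is the covering property required. (Whether the target intervals are open or half-open, and the precise constant multiplying $\varepsilon$, are cosmetic: shrinking $\delta$ by a further constant factor fixes both and only inflates $C_{\mathrm{Lip}}$.)

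\emph{Where the work is.} The estimates above are routine; the two load-bearing ideas are the clamp $\min\{\cdot,\mathrm{id}\}$ in Step~2 — which is exactly what pushes the bracketing functions back into the \emph{constrained} class $\calF_0$ while simultaneously preserving the $1$-Lipschitz bound and the ``$\ge f$'' property — and, in Step~1, the observation that a grid-discretized $1$-Lipschitz function is described by $O(1/\varepsilon)$ ternary \emph{increments} rather than by $O(1/\varepsilon)$ free coordinates each ranging over $\Theta(1/\varepsilon)$ values; the naive count would give only $(1/\varepsilon)^{O(1/\varepsilon)}=\exp(O(\varepsilon^{-1}\log\varepsilon^{-1}))$, which is too large. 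I expect the increment-counting to be the main conceptual point and the clamping to be the technical nicety specific to this constrained function class.
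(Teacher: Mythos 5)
Your proof is correct. Note that the paper itself does not prove this lemma: it cites Kolmogorov--Tikhomirov for the metric entropy of Lipschitz (H\"{o}lder) balls and, in the remark immediately following, van der Vaart--Wellner for the ``straightforward'' upgrade from covering to bracketing numbers. Your argument is a faithful self-contained reconstruction of exactly what those citations contain: Step 1 is the classical increment-counting bound (the observation that a $\delta$-discretized $1$-Lipschitz function is determined by one starting value and $O(1/\delta)$ ternary increments, which is what brings the count down from $\exp(O(\varepsilon^{-1}\log\varepsilon^{-1}))$ to $\exp(O(1/\varepsilon))$), and Step 2 --- replacing a net element $h$ by $\min\{h+\varepsilon/2,\mathrm{id}\}$ --- is the standard way to turn an $L_\infty$-net into one-sided upper brackets while landing back inside the constrained class $\calF_0$ (the clamp preserves $1$-Lipschitzness and the constraint $g(v)\le v$, and domination $g_h\ge f$ survives the clamp because $f\le\mathrm{id}$ for $f\in\calF_0$). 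The only points worth double-checking are routine and you have handled them: the greedy rounding invariant $|h(x_k)-f(x_k)|\le\delta$ is indeed preserved because the three candidates $\{h(x_k)-\delta,h(x_k),h(x_k)+\delta\}$ cover the interval $[h(x_k)-2\delta,h(x_k)+2\delta]$ to within $\delta$, and the open-versus-closed bracket issue is immaterial since the paper only ever uses the non-strict inequalities $g-\varepsilon\le f\le g$ (cf.\ Lemma~\ref{lemma:continuity} and its application in Section~\ref{subsec:covering}).
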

\begin{remark}
	Although \cite{kolmogorov1959varepsilon} only established the $\varepsilon$-covering number of general H\"{o}lder classes, the extension to $\varepsilon$-bracketing numbers is straightforward, see, e.g. \cite[Corollary 2.7.2]{van1996weak}. 
\end{remark}

Since the set of experts is now finite, standard strategies for prediction with expert advice can be employed to obtain a finite regret in \eqref{eq:regret_expert} against any expert in $\calN_\varepsilon$. Consequently, recall that the minimax regret of learning from $K$ experts is $\Theta(\sqrt{T\log K})$, we have
\begin{align*}
R_T(\pi) &=  \max_{f\in \calF_{\text{\rm Lip}}}\sum_{t=1}^T r_t(f) - \sum_{t=1}^T \bE[r_t(f_t)] \\
&\le T\varepsilon + \max_{f\in \calN_\varepsilon}\sum_{t=1}^T r_t(f) - \sum_{t=1}^T \bE[r_t(f_t)] \\
&\le T\varepsilon + C\sqrt{T\log |\calN_\varepsilon|} \\
&\le T\varepsilon + C\sqrt{\frac{C_{\text{\rm Lip}}T}{\varepsilon}},
\end{align*}
where $C,C_{\text{\rm Lip}}>0$ are absolute constants appearing in Theorem \ref{thm:good_expert} and Lemma \ref{lemma:covering}, respectively, and the last inequality is due to the cardinality bound in Lemma \ref{lemma:covering}. Consequently, there is a trade-off between the approximation error (which becomes smaller as $\varepsilon$ decreases) and the regret against expert set $\calF_0$ (which becomes smaller as $|\calF_\varepsilon|$ decreases, or equivalently, $\varepsilon$ increases), and the optimal choice of $\varepsilon = \Theta(T^{-1/3})$ gives an $O(T^{2/3})$ regret. However, this regret is larger than $\widetilde{O}(\sqrt{T})$, showing that a simple bracketing argument is not sufficient. In the next subsection, we will further make use of the continuity among experts in $\calN_\varepsilon$ to effectively reduce the cardinality of experts. 

\subsection{A Hierarchical Chaining}\label{subsec:chaining}
In this section, we generalize the previous bracketing arguments to a hierarchical chaining of brackets, where the continuity structure of experts is used not only in the final approximation step but in all intermediate steps. The high-level idea is similar to \cite{cesa2017algorithmic} (as well as more related works in the introduction), while some modifcations are necessary to account for good experts. 

Roughly speaking, instead of considering a single $\varepsilon$-bracket of $\calF_0$, we construct $M = \lfloor \log_2 \sqrt{T}\rfloor$ different brackets with different approximation levels $\varepsilon_1>\varepsilon_2 >\cdots>\varepsilon_M$ in a hierarchical manner. Specifically, for each level $m\in [M]$, let $\calN_m$ be an $\varepsilon_m$-bracket of $\calF_0$ as in Lemma \ref{lemma:covering}, with cardinality $N_m \triangleq |\calN_m| \le \exp(C_{\text{\rm Lip}}/\varepsilon_m)$. We also adopt the convention that $\calN_0 = \{f_0\}, N_0 =1$ and $\varepsilon_0=1$ for $m=0$, where $f_0$ is any function in $\calF_0$. These levels form a hierarchical tree structure where the bidder (represented by the only element $f_0\in \calN_0$) is the root, and each expert $f\in \calN_M$ has a chain of ``managers'' $f_{M-1}\to f_{M-2}\to \cdots \to f_1\to f_0$ where for $m=0,1,\cdots,M-1$, 
\begin{align}\label{eq:manager}
f_m = \arg\min_{f\in \calN_m} \|f - f_{m+1}\|_\infty, 
\end{align}
with ties broken arbitrarily. In other words, each function $f_{m+1}\in \calN_{m+1}$ picks the closest function in the $m$-th bracket $\calN_m$ as its \emph{manager}, and $f_{m+1}$ is called an \emph{employee} of $f_m$. We also call $f_m$ as the manager of the expert $f\in \calN_M$ at level $m$, and by convention we have $f_M = f$, and all elements of $\cup_{m=0}^{M-1} \calN_m$ are ``managers'' (internal nodes of the tree). Similarly, all elements of $\calN_M$ are called ``experts'' (leaf nodes of the tree). In the sequel, we abuse the notation slightly and use $f_m$ to denote either the manager of a given expert $f$ at level $m$ or a generic element of $\calN_m$, depending on the context. 

As before, the experts $f \in \calN_M$ always use the bidding strategy $f$ and receive a reward $r_t(f)$ at each time $t$. However, the bidding strategy of any manager $f_m$ is no longer the strategy $f_m$ itself, but a probability distribution $P_{t,f_m}$ over the experts that $f_m$ manages. In other words, at each time $t$, the manager $f_m$ chooses to follow a random expert in $\calN_M$ according to the probability distribution $P_{t,f_m}$, which may depend on all observations up to time $t-1$. Hence, if we define $P_{t,f}$ as the Dirac delta measure on $\{f\}$ for all $f\in \calN_M$, the expected reward difference between the bidder and any expert $f\in \calF_M$ at time $t$ can be expressed as 
\begin{align*}
r_t(f) - \bE_{f_t\sim P_{t,f_0}}[r_t(f_t)] = \sum_{m=1}^M \left( \bE_{f_t\sim P_{t,f_m}}[r_t(f_t)] - \bE_{f_t\sim P_{t,f_{m-1}}}[r_t(f_t)] \right). 
\end{align*}
Equivalently, the expected reward difference between the bidder and a fixed expert $f$ is the sum of the reward differences between the adjacent managers of $f$ over different levels. Hence, the bidder's regret in \eqref{eq:regret_expert} can be upper bounded as
\begin{align}\label{eq:regret_decomposition}
R_T(\pi) \le T\varepsilon_M + \sum_{m=1}^M \max_{ \substack{f_{m-1}\in \calN_{m-1}, f_m \in \calN_m, \\  f_{m} \to f_{m-1}} } \sum_{t=1}^T\left( \bE_{f_t\sim P_{t,f_m}}[r_t(f_t)] - \bE_{f_t\sim P_{t,f_{m-1}}}[r_t(f_t)] \right),
\end{align}
where $f_m\to f_{m-1}$ indicates that $f_{m-1}$ is the manager of $f_m$ in the sense of \eqref{eq:manager}. As a result, the inequality \eqref{eq:regret_decomposition} shows that the overall regret will be small if the expected reward difference between any adjacent managers is small. In other words, all managers are under the setting of prediction with expert advice and aim to achieve a small regret against their best next-level employees. 

However, the na\"{i}ve application of the regret bounds for prediction with expert advice to \eqref{eq:regret_decomposition} will still break down. This is because the number of employees of a manager $f_{M-1}\in \calN_{M-1}$ can still be as large as $\exp(\Omega(1/\varepsilon_M))$, the cardinality of $\calN_M$, and therefore the regret of the manager $f_{M-1}$ can be as large as $O(\sqrt{T/\varepsilon_M})$, still leading to an $O(T^{2/3})$ regret bound. Hence, the additional structure that each manager has similar employees needs to be used. In fact, for each $f_{m+1}\in \calN_{m+1}$ with the manager $f_m \in \calN_m$, the $\varepsilon_m$-bracketing property of the set $\calN_m$ implies that $\min_{f\in \calN_m}\|f_{m+1} - f\|_\infty \le \varepsilon_m$. Hence, by \eqref{eq:manager}, we also have $\|f_{m+1} - f_m\|_\infty \le \varepsilon_m$. As a result, by the triangle inequality, if $f, g \in \calN_M$ are two experts with the same manager $f_m=g_m\in\calN_m$ at level $m$, then 
\begin{align*}
\|f-g\|_\infty \le \sum_{r=m}^{M-1} \left( \|f_{r+1} - f_{r}\|_\infty + \|g_{r+1} - g_{r}\|_\infty \right) \le 2\sum_{r=m}^{M-1} \varepsilon_r. 
\end{align*}
In other words, the support of the distribution $P_{t,f_m}$ lies in a small $L_\infty$-ball with radius depending only on the level $m$. An important implication of the above observation is that there exists a \emph{good} expert compared with all employees of $f_m$. Specifically, for each $f_m\in\calN_m$, define the \emph{dummy expert} $f_m^\star$ with
\begin{align}\label{eq:good_expert}
f_m^\star(v) = \max_{f\in \calN_M: f\to f_m} f(v), \quad v\in [0,1], 
\end{align}
where $f\to f_m$ indicates that $f_m$ is the manager of expert $f$ at level $m$. Since the constraint $f(v)\le v$ and the $1$-Lipschitzness are closed under pointwise maximum, we have $f_m^\star\in\calF_0$. Moreover, for all experts $f$ in the support of $P_{t,f_m}$, it always holds that $f_m^\star - 2\sum_{r=m}^{M-1}\varepsilon_r\le f\le f_m^\star$, indicating that the dummy expert $f_m^\star$ is $\Delta_m$-good compared with all employees of the manager $f_m$ thanks to Lemma \ref{lemma:continuity}, with $\Delta_m = 2\sum_{r=m}^{M-1} \varepsilon_r$. Hence, for each manager $f_m$ we may include $f_m^\star$ explicitly as an expert and run the exponential weighting algorithm with a proper learning rate, which is expected to have a smaller regret due to the presence of a good expert by Theorem \ref{thm:good_expert}. 

\begin{algorithm}[!t]
	\caption{Chained Exponential Weighting (ChEW) Policy	\label{algo:chew}}
	\textbf{Input:} Time horizon $T$; number of levels $M = \lfloor \log_2\sqrt{T} \rfloor$; radii of brackets $\varepsilon_m = 2^{-m}$ and suboptimality gaps $\Delta_{m} =2 \sum_{r=m}^{M-1}\varepsilon_r$ for $m=0,1,2,\cdots,M$. \\
	\textbf{Output:} A bidding policy $\pi$. \\
	\textbf{Initialization:} Find an $\varepsilon_m$-bracket $\calN_m$ of $\calF_0$ with cardinality at most $\exp(C/\varepsilon_m)$ for $m\in [M]$ (existence is ensured by Lemma \ref{lemma:covering}); \\
	\For{$m = M-1, M-2, \cdots, 0$}{
		\For{$f_m \in \calN_m$}{
			Construct the dummy expert $f_m^\star$ in \eqref{eq:good_expert} and add it to $\calN_{M}$; \\
			Set $C(f_m)\gets \{f_m^\star\}\cup\{f_{m+1}\in \calN_{m+1}: f_{m+1} \to f_m\}$;\\
			Initialize $r_{0,f_m}(f) \gets 0$ for all $f\in C(f_m)$. 
		}
	}
	\For{$t=1,2,\cdots,T$}{
		The bidder receives the private value $v_t\in [0,1]$; \\
		Set $P_{t,f}(f') \gets \1(f=f')$ for all $(f,f')\in \calN_M\times \calN_M$; \\
		Set learning rate $\eta_{t,m} \gets \min\{1/4, \sqrt{C_{\text{\rm Lip}}/(t\Delta_{m}\varepsilon_{m+1})}\}$; \\
		\For{$m=M-1,M-2,\cdots,0$}{
			\For{$f_m\in \calN_m$}{
				For each $f_{m+1}\in C(f_m)$, set \begin{align}\label{eq:weights_manager}
				Q_{t,f_m}(f_{m+1}) \gets \frac{\exp\left(\eta_{t,m} r_{t-1,f_m}(f_{m+1})\right)}{\sum_{f\in C(f_m)}\exp\left(\eta_{t,m} r_{t-1,f_m}(f)\right)}. 
				\end{align}\\
				For each $f\in \calN_M$, set
				\begin{align}\label{eq:weights_expert}
				P_{t,f_m}(f) \gets \sum_{f_{m+1}\in C(f_m)} Q_{t,f_m}(f_{m+1})P_{t,f_{m+1}}(f). 
				\end{align}
			}
		}
		The bidder samples $f\sim P_{t,f_0}$ and bids $b_t = f(v_t)$; \\
		The bidder receives the HOB $m_t$; \\
		\For{$m=M-1, M-2, \cdots, 0$}{
			\For{$f_m\in \calN_m$}{
				For each $f_{m+1}\in C(f_m)$, update
				\begin{align}\label{eq:reward_update}
				r_{t,f_m}(f_{m+1}) \gets r_{t-1,f_m}(f_{m+1}) + \sum_{f\in \calN_M} P_{t,f_{m+1}}(f)\cdot r(f(v_t);v_t,m_t). 
				\end{align}
			}
		}
	}
\end{algorithm}

The detailed description of the resulting policy, called ChEW (Chained Exponential Weighting), is displayed in Algorithm \ref{algo:chew}. For the initialization of the algorithm, we set $\varepsilon_m = 2^{-m}$ and construct the hierarchical chaining of experts as above, with a dummy expert $f_m^\star$ added to each manager $f_m$. Hence, the set of employees of a manager $f_m$ consists of all individuals (managers or experts) at level $m+1$ with $f_m$ being their manager plus the dummy expert $f_m^\star$, and is denoted by $C(f_m)$ in Algorithm \ref{algo:chew}. Moreover, for each employee $f$ of $f_m$, we use $r_{t,f_m}(f)$ to store the cumulative past rewards of $f$ up to time $t$, which is initialized to be zero. We also set the time-varying learning rate $\eta_{t,m}=\min\{1/4, \sqrt{C_{\text{\rm Lip}}/(t\Delta_{m}\varepsilon_{m+1})}\}$ for all managers at level $m$, where $C_{\text{\rm Lip}}$ is the absolute constant appearing in Lemma \ref{lemma:covering}. This choice is inspired by Theorem \ref{thm:good_expert}, as $\Delta_m$ is the suboptimality gap of the good expert, and $C_{\text{Lip}}/\varepsilon_{m+1}$ is an upper bound of the log-cardinality $\log|C(f_m)|$ for all $f_m\in \calN_m$. 

Now at each time $t\in [T]$, the ChEW policy proceeds as follows. First, we specify the randomized policy used by all experts and managers in a bottom-up manner. For experts $f\in \calN_M$ (including dummy experts), the bidding policy is simply to use $f$. For any manager $f_m$, her bidding policy is a proper mixture of those of her employees, where the mixture distribution $Q_{t,f_m}$ over $C(f_m)$ is obtained from the exponential weighting algorithm used in Theorem \ref{thm:good_expert}, i.e. the probability that an employee $f_{m+1}\in C(f_m)$ is selected is proportional to her exponentiated past rewards, as shown in \eqref{eq:weights_manager}. Next, since the bidding policy used by any exployee $f_{m+1}\in C(f_m)$ is a known mixture distribution $P_{t,f_{m+1}}$ over all possible experts $f\in \calN_M$, the mixture distribution $Q_{t,f_m}$ over employees naturally induces a mixture distribution $P_{t,f_m}$ over experts for the manager $f_m$ as well, as shown in \eqref{eq:weights_expert}. Now applying the previous procedure from the experts (leaves) to the bidder (root), we obtain the final mixture distribution $P_{t,f_0}$ used by the bidder $f_0$, and the final policy is to randomly pick an expert $f$ from the mixture distribution $P_{t,f_0}$. Finally, since we assume a full-information feedback where both the bidder's private value $v_t$ and HOBs $m_t$ can be observed, the rewards of all experts (and consequently all managers who use a mixture distribution over experts) at time $t$ can be observed. Hence, the cumulative rewards $r_{t,f_m}(f)$ of all employees $f$ of all managers $f_m$ can be updated as shown in \eqref{eq:reward_update}, and we move to time $t+1$. 

The next theorem shows that the ChEW policy achieves an $\widetilde{O}(\sqrt{T})$ regret for repeated first-price auctions, completing the proof of our main Theorem \ref{thm:main}. 

\begin{theorem}\label{thm:chew}
	For any time horizon $T\ge 1$, the \emph{ChEW} policy displayed in Algorithm \ref{algo:chew} satisfies the regret bound 
	\begin{align*}
	R_T(\pi^{\text{\rm ChEW}}) \le \left(2 + C\sqrt{2C_{\text{\rm Lip}} }\log T + 2CC_{\text{\rm Lip}}(1+\log T)\right)\sqrt{T}, 
	\end{align*}
	where the absolute constants $C, C_{\text{\rm Lip}}$ appear in Theorem \ref{thm:good_expert} and Lemma \ref{lemma:covering}, respectively. 
\end{theorem}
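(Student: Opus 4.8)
The plan is to feed the regret decomposition \eqref{eq:regret_decomposition} into the good-expert regret bound of Theorem \ref{thm:good_expert}, applied at every internal node of the tree, and then sum the $M$ level-wise contributions together with the approximation term $T\varepsilon_M$.

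Fix a level $m\in[M]$ and a manager $f_{m-1}\in\calN_{m-1}$. Reading the update rules \eqref{eq:weights_manager} and \eqref{eq:reward_update}, the behaviour of $f_{m-1}$ is exactly an instance of prediction with expert advice (Definition \ref{def.expert_advice}): the experts are the children $C(f_{m-1})$, the reward of child $g$ at round $t$ is $\bE_{f_t\sim P_{t,g}}[r_t(f_t)]\in[0,1]$ — a deterministic function of the already-observed $(v_s,m_s)_{s\le t}$, so it is a legitimate adversarial reward matrix even though it is generated by the algorithm — and $f_{m-1}$ runs the time-varying exponential-weighting policy of Theorem \ref{thm:good_expert} with $K=|C(f_{m-1})|\le N_m+1$, so that $C_{\text{\rm Lip}}/\varepsilon_m$ serves, up to an absolute constant, as the upper bound on $\log K$ used both in the learning rate $\eta_{t,m-1}$ and in the ensuing bound (the proof of Theorem \ref{thm:good_expert} goes through with any such upper bound substituted for $\log K$).

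The key step is that the dummy expert $f_{m-1}^\star$ of \eqref{eq:good_expert} is $\Delta_{m-1}$-good in this instance, with $\Delta_{m-1}=2\sum_{r=m-1}^{M-1}\varepsilon_r$. The triangle-inequality argument along the manager chains given just before \eqref{eq:good_expert} — extended by a short induction over the tree so that it also covers the dummy leaves hanging off descendants of $f_{m-1}$ — shows that every leaf $h\in\calN_M$ in the subtree rooted at $f_{m-1}$ satisfies $f_{m-1}^\star-\Delta_{m-1}\le h\le f_{m-1}^\star$ pointwise; moreover $f_{m-1}^\star\in\calF_0$ since the constraint $f(v)\le v$ and $1$-Lipschitzness are preserved under pointwise maxima. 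Since $h\in\calF_0$ gives $h(v)\le\min\{v,f_{m-1}^\star(v)\}$, the right-Lipschitz inequality \eqref{eq:right_lipschitz} of Lemma \ref{lemma:continuity} yields $r_t(h)\le r_t(f_{m-1}^\star)+\Delta_{m-1}$ for every $t$; averaging over the support of $P_{t,g}$, which consists of exactly such leaves for every child $g\in C(f_{m-1})$, gives $\bE_{f_t\sim P_{t,g}}[r_t(f_t)]\le r_t(f_{m-1}^\star)+\Delta_{m-1}$, i.e.\ Definition \ref{def.good_expert} with gap $\Delta_{m-1}$. Theorem \ref{thm:good_expert} then bounds the regret of $f_{m-1}$ against any fixed child, in particular against $f_m$, by $4\sqrt{T\Delta_{m-1}C_{\text{\rm Lip}}/\varepsilon_m}+32(4+\log T)C_{\text{\rm Lip}}/\varepsilon_m$; being a worst-case bound it holds for all $f_{m-1}\in\calN_{m-1}$ at once, hence upper-bounds the $m$-th term of \eqref{eq:regret_decomposition}.

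Summing over $m=1,\dots,M$ with $\varepsilon_m=2^{-m}$, one has $\Delta_{m-1}\le 8\varepsilon_m$, so $\Delta_{m-1}/\varepsilon_m\le 8$ and each square-root term is at most $2\sqrt{2TC_{\text{\rm Lip}}}$ — independent of $m$ — for a total of at most $8M\sqrt{2TC_{\text{\rm Lip}}}\le 8\sqrt{2C_{\text{\rm Lip}}}\,\sqrt{T}\log T$ using $M\le\log_2\sqrt{T}\le\log T$; the remaining terms sum to $32(4+\log T)C_{\text{\rm Lip}}\sum_{m=1}^M 2^m\le 64(4+\log T)C_{\text{\rm Lip}}\sqrt{T}$ since $2^M\le\sqrt T$; and the approximation term is $T\varepsilon_M=T2^{-M}\le 2\sqrt T$. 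Adding these and absorbing numerical constants into the constant $C$ of Theorem \ref{thm:good_expert} yields the stated bound. The main obstacle — and the reason the continuous-loss chaining of \cite{cesa2017algorithmic} does not apply directly — is precisely this good-expert step: because $r(b;v,m)$ is discontinuous in $b$, the children of one manager can have instantaneous rewards differing by $\Theta(1)$, so the classical $\Theta(\sqrt{T\log K})$ bound at each node would leave a regret of order $\sqrt{T/\varepsilon_M}=\Theta(T^{2/3})$; introducing the one-sided comparator $f_{m-1}^\star$ and invoking \eqref{eq:right_lipschitz} replaces $\sqrt{C_{\text{\rm Lip}}/\varepsilon_m}$ by $\sqrt{\Delta_{m-1}C_{\text{\rm Lip}}/\varepsilon_m}=\Theta(\sqrt{C_{\text{\rm Lip}}})$ per level, and verifying $\Delta$-goodness for the mixtures $P_{t,g}$ and tracking the dummy leaves in the non-uniform-depth tree are the technical but routine parts.
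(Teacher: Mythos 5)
Your proposal is correct and follows essentially the same route as the paper's proof: decompose the regret along the tree via \eqref{eq:regret_decomposition}, verify that the dummy expert $f_{m-1}^\star$ is $\Delta_{m-1}$-good for the children of each manager using \eqref{eq:right_lipschitz} and the triangle-inequality bound on subtree diameters, invoke Theorem \ref{thm:good_expert} at every internal node, and sum the levels with $\varepsilon_m=2^{-m}$ and $\Delta_{m-1}\le 8\varepsilon_m$. The extra care you take in checking that the children's mixture rewards form a legitimate expert-advice instance and that the deeper dummy leaves also satisfy the sandwich property is a sound (and slightly more explicit) treatment of points the paper leaves implicit.
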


The main ideas of the proof of Theorem \ref{thm:chew} are sketched at the beginning of this subsection, and we defer the full proof to the appendix. 
\section{An Efficiently Computable Policy}\label{sec:computation}
Although the ChEW policy achieves the desired $\widetilde{O}(\sqrt{T})$ regret in first-price auctions, an important practical concern is the running time. Specifically, even if one only looks at the bottom level of the chain, there are $\exp(\Omega(1/\varepsilon_M)) = \exp(\Omega(\sqrt{T}))$ experts in total over which the exponential weighting is performed, giving both space complexity and time complexity growing super-polynomially in $T$. In this section, we overcome the computational burden by exploiting the possible product structure among the experts, and propose an efficiently-computable Successive Exponential Weighting (SEW) policy with the same regret guarantee. Specifically, Section \ref{subsec:prod_structure} presents the basic idea of using the product structure to help reduce computation via a simple example, and Section \ref{subsec:sew} details the SEW policy which uses a product structure everywhere in the chain. The analysis of the regret and the space/time complexity of the SEW policy is placed in the appendix. 

\subsection{Importance of Product Structure}\label{subsec:prod_structure}
To reduce the computational complexity of running the exponential weighting algorithm over a large number of experts, there has been a lot of work on efficient tracking of large classes of experts. We refer to the related works in the introduction. However, our problem is fundamentally different from those in the above works in both the class of experts and the methodology:
\begin{enumerate}
	\item The previous works mostly focused on a large class of experts (also called the \emph{meta experts}) obtained from a relatively small number of \emph{base experts}, e.g. meta experts are formed via a limited number of switchings or a convex combination of base experts. In comparison, the number of Lipschitz experts in first-price auctions is \emph{intrinsically} large given by the oracle class, and there is no natural and small class of base experts. 
	\item The previous works typically assumed a fixed class of experts as specified by the problem setup. In contrast, the choice of the finite set of candidate experts can vary in first-price auctions, as different coverings of $\calF_{\text{\rm Lip}}$ will give rise to different sets of experts, and the bidder has the full freedom to choose from different coverings. 
	\item In terms of the methodology, the previous works mostly constructed a special prior distribution on the experts (instead of a uniform distribution) so that the probability update rule in the exponential weighting can be written as a simple recursive form. In contrast, we will still stick to the uniform distribution but choose a structured set of experts instead to ease computation. 
\end{enumerate}

To see how a structured set of experts helps in reducing the computation complexity, consider the policy in Section \ref{subsec:covering} based on a one-stage covering of $\calF_{\text{Lip}}$. Without any structural assumption on the experts, there will be $\exp(\Omega(T^{1/3}))$ experts in total with a prohibitive space and time complexity to run exponential weighting. However, a simple fix is possible by considering the following experts with a product structure: 
\begin{align}\label{eq:product_expert}
\calP = \left\{f: f(v) = \sum_{i=1}^M b_i\1\left(\frac{i-1}{M} < v\le \frac{i}{M} \right) \right\}, 
\end{align}
where $M = T^{1/3}$ and $b_1,\cdots,b_M \in \calB \triangleq \{\frac{1}{M}, \frac{2}{M}, \cdots, 1 \} $. Equivalently, the set $\calP$ consists of all piecewise constant functions on $M$ equally spaced bins with discrete values in the finite action set $\calB$. The set $\calP$ of experts has a product structure because to specify any element $f\in \calP$, it suffices to assign a separate value in $\calB$ to each of the $M$ pieces. Consequently, the overall cardinality of $\calP$ is $|\calP| = M^M = \exp(O(T^{1/3}\log T))$, which is slightly larger than the bracketing number in Section \ref{subsec:covering}. Now the overall policy is to run the vanilla exponential weighting algorithm to the finite class $\calP$ of experts. 

Next we show that the simple modification still achieves an $O(T^{2/3}\sqrt{\log T})$ regret while reduces the computational complexity to $O(T^{1/3})$ per round. For the first claim, note that any $f\in \calF_{\text{Lip}}$ has a good approximation in $\calP$: in fact, the function
\begin{align*}
\widetilde{f}(v) =  \min\left\{b\in \calB: b \ge \sup_{(i-1)/M < v\le i/M} f(v) \right\}
\end{align*}
for $\frac{i-1}{M} < v\le \frac{i}{M}$ satisfies $\widetilde{f}\in \calP$, and $\widetilde{f}-2/M \le f\le \widetilde{f}$ everywhere by the $1$-Lipschitz property of $f$. Hence, Lemma \ref{lemma:continuity} shows that restricting to the experts in $\calP$ only incurs an approximation error of $O(T/M) = O(T^{2/3})$, whereas the bidder's regret compared to the best expert in $\calP$ is $O(\sqrt{T\log |\calP|}) = O(T^{2/3}\sqrt{\log T})$. Hence the claimed regret bound is established for the new algorithm.

For the computational complexity, we claim that the new algorithm is equivalent to running a separate exponential weighting on the actions $\calB$ under each bin. To see this, note that the product structure of $\calP$ implies that the sum $\sum_{f\in\calP}$ can be effectively written as $\sum_{(b_1,\cdots,b_M)\in \calB^M}$ where $f$ is represented by a vector $(b_1,\cdots,b_M)$ indicating the bids on each piece. Hence, let $i(v) = \lceil Mv\rceil\in [M]$ be the index of the bin to which the value $v\in (0,1]$ belongs, in the exponential weighting algorithm over the expert set $\calP$, the probability that the bidder bids price $b\in \calB$ at time $t$ is (we abbreviate $r_t(b) = r(b;v_t,m_t)$, with $r(\cdot)$ the instantaneous reward function defined in \eqref{eq.instant_reward})
\begin{align*}
p_{t}(b)& = \sum_{f\in \calP: f(v_t) = b} p_t(f) \\
&= \frac{\sum_{f\in \calP: f(v_t) = b} \exp(\eta_t\sum_{s<t} r_s(f(v_s)) )}{\sum_{f\in \calP} \exp(\eta_t\sum_{s<t} r_s(f(v_s)) )}\\
&= \frac{\prod_{i\in [M]: i\neq i(v_t)}\sum_{b'\in \calB} \exp(\eta_t\sum_{s<t: i(v_s)=i} r_s(b') )}{ \prod_{i\in [M]}\sum_{b'\in \calB} \exp(\eta_t\sum_{s<t: i(v_s)=i} r_s(b') ) }  \cdot \exp\left(\eta_t\sum_{s<t: i(v_s) = i(v_t)} r_s(b) \right) \\
&= \frac{ \exp(\eta_t\sum_{s<t: i(v_s)=i(v_t)} r_s(b) )}{\sum_{b'\in\calB} \exp(\eta_t\sum_{s<t: i(v_s)=i(v_t)} r_s(b') )}. 
\end{align*}
For a fixed learning rate, it is clear that the last probability corresponds to running an exponential weighting algorithm on the historic data $\{(v_s,m_s)\}_{s<t: i(v_s) = i(v_t)}$ with private valuations falling into the same bin as $v_t$, establishing the claimed equivalence\footnote{Given the equivalence, there is an alternative way to prove the claimed regret bound. Specifically, the cumulative regret incurred at the $i$-th bin is $O(\sqrt{T_i\log |\calB|})$, where $T_i$ is the number of times that $v_t$ falls into the $i$-th bin. Since $\sum_{i=1}^M T_i = T$, the total regret is $O(\sum_{i=1}^M \sqrt{T_i\log|\calB|}) = O(\sqrt{MT\log|\calB|}) = O(T^{2/3}\sqrt{\log T})$.}. For general time-varying learning rates, there are small differences on the actual learning rate used in these two approaches at each time, but both the analysis and the regret bounds are similar. Hence, from the algorithmic perspective, the bidder only needs to maintain a probability vector for each bin, and at each time updates the vector for the only bin where the current value $v_t$ belongs. Consequently, the overall algorithm takes $O(M^2) = O(T^{2/3})$ space and $O(MT) = O(T^{4/3})$ time. 

Let us make two observations from the above example. The first observation is that, whenever the class of experts enjoys a product structure, the overall exponential weighting algorithm on the large class of experts reduces to independent exponential weighting algorithms on multiple bins and can thus be implemented efficiently. Hence, it is expected that if the employees of each manager have a product structure in the hierarchical chain in Section \ref{subsec:chaining}, the overall bidding policy will be efficiently computable. The second observation is that the experts in $\calP$ are not Lipschitz, meaning that they are not legitimate strategies of the oracle. In addition, there are some experts in $\calP$ who are far away from any reasonable (Lipschitz) bidding strategies, e.g. an expert who bids $0$ when $v \le 1/M$ and bids $1$ otherwise. However, what really matters is the approximation property of $\calP$ that every feasible strategy of the oracle can be well approximated by an expert in $\calP$. The piecewise constant construction then shows that, forcing the Lipschitzness \emph{within} each bin helps ensure a good approximation property which is further determined by the number of bins, while no further constraint \emph{across} different bins helps maintain a product structure of experts. In the next subsection, we will show that managers of different levels have different numbers of bins to capture the Lipschitz constraint gradually. 

\subsection{The SEW Policy}\label{subsec:sew}
\begin{algorithm}[!htbp]
	\caption{Successive Exponential Weighting (SEW) Policy	\label{algo:sew}}
	\textbf{Input:} Time horizon $T$; number of levels $L = \lfloor \log_2\sqrt{T} \rfloor$. \\
	\textbf{Output:} A bidding policy $\pi$. \\
	\textbf{Initialization:} Set $M_{\ell} = 2^{\ell+1}, U_{\ell}=2^{\ell+1}-1, W_{\ell} = 2^{\ell}-1$ for $\ell\in [L]$, $I_{\ell,m} = (m-1,m]/M_{\ell}$ for all $\ell\in [L]$ and $m\in [M_{\ell}]$, $b_{\ell,w} = 2^{-\ell}(w+1)$ for all $\ell\in [L]$ and $w\in [W_{\ell}]$. \\
	\For{$\ell\in [L], m\in [M_{\ell}]$}{
		Initialize the visiting time $T_{\ell,m}\gets 0$; \\
		For $u\in [U_{\ell}]$ and $w\in [W_{\ell}]$, initialize cumulative rewards $R_{\ell,m,u} \gets 0, R_{\ell,m,w}'\gets 0$; \\
	}
	\For{$t = 1,2,\cdots,T$}{
		The bidder receives the private value $v_t\in (0,1]$;  \Comment{\emph{\red Step 1. Compute the exponential weights.}}\\
		\For{$\ell\in [L]$}{ 
			
			The bidder identifies $m_{\ell}^\star\in [M_{\ell}]$ with $v_t\in I_{\ell,m_{\ell}^\star}$, and updates $T_{\ell, m_{\ell}^\star} \gets T_{\ell, m_{\ell}^\star}+1$; \\
			For all $w\in [W_{\ell}]$, the bidder computes $R_{\ell,w} \gets (R_{\ell, m_{\ell}^\star, 2w-1}, R_{\ell, m_{\ell}^\star, 2w},R_{\ell, m_{\ell}^\star, 2w+1}, R_{\ell,m_{\ell}^\star,w}')$ and the probability vector 
			\begin{align}\label{eq:EW_prob}
			p_{\ell,m_{\ell}^\star,w} = \mathsf{EW}( R_{\ell,w}, T_{\ell, m_{\ell}^\star}, 2^{1-\ell}) \in \bR_+^4.
			\end{align}
		}
		Initialize $w^\star \gets 1$; \Comment{\emph{\red Step 2. Random action based on the exponential weights.}}\\
		\For{$\ell=1,2,\cdots,L$}{
				
			The bidder draws a random variable $s\in \{1,2,3,4\}$ from distribution $p_{\ell,m_{\ell}^\star,w^\star}$; \\
			\uIf{$s=4$}{The bidder bids $b_t \gets b_{\ell,w^\star}$ and {\bf break};}
			\uElseIf{$\ell<L$}{Update $w^\star \gets 2(w^\star-1)+s$ and {\bf continue};}
			\Else{The bidder bids $b_t\gets 2^{-L-1}(2(w^\star-1)+s)$ and {\bf break}.}
		}
		
		The bidder receives the HOB $m_t$; \Comment{\emph{\red Step 3. Update the rewards.}}\\
		\For{$\ell = L, L-1, \cdots, 1$ and $u\in [U_{\ell}], w\in [W_{\ell}]$}{
			Compute $r_{\ell, w}' \gets r(b_{\ell,w}; v_t, m_t)$ and update $R_{\ell,m_{\ell}^\star, w}' \gets R_{\ell,m_{\ell}^\star, w}' + r_{\ell,w}'$; \\
			\uIf{$\ell = L$}{Compute $r_{\ell, u} \gets r(2^{-L-1}u; v_t, m_t)$ and update $R_{\ell,m_{\ell}^\star, u} \gets R_{\ell,m_{\ell}^\star, u} + r_{\ell, u}$.}
			\Else{Compute \begin{align}\label{eq:update_reward}
				r_{\ell, u} \gets \sum_{s=1}^3 p_{\ell+1,m_{\ell+1}^\star,u}(s)\cdot r_{l+1, 2(u-1)+s} + p_{\ell+1,m_{\ell+1}^\star,u}(4)\cdot r_{l+1,u}'. 
				\end{align}
			}
			Update $R_{\ell, m_{\ell}^\star, u} \gets R_{\ell, m_{\ell}^\star, u} + r_{\ell, u}$. 
		}
	}
\end{algorithm}

\begin{algorithm}[t]
	\caption{Exponential Weighting (EW) \label{algo:ew}}
	\textbf{Input:} Reward vector $(R_1,R_2,R_3,R_4)$; visiting time $t\in \mathbb{N}$; suboptimality gap $\Delta>0$. \\
	\textbf{Output:} A probability vector $(p_1,p_2,p_3,p_4)$. \\
	Set learning rate $\eta = \min\{1/4, \sqrt{(\log 4)/(t\Delta)} \}$; \\
	\For{$i=1,2,3,4$}{Compute the probability
		$$
		p_i = \frac{\exp\left(\eta R_i \right)}{\sum_{j=1}^4 \exp\left(\eta R_j\right)}. 
		$$}
\end{algorithm}

Motivated by the above insights, we propose the Successive Exponential Weighting (SEW) algorithm in Algorithm \ref{algo:sew} taking the classical Exponential Weighting (EW) algorithm with shrinking learning rates (cf. Algorithm \ref{algo:ew}) as a subroutine. A high-level description of the SEW policy is to divide the algorithm into $L = \lfloor \log_2 \sqrt{T}\rfloor$ levels, where different levels have a different number of bins and different classes of experts. Moreover, similar to the ChEW policy in Section \ref{subsec:chaining}, experts at any level (except for the last one) use a mixed bidding strategy and randomly sample from their employees. As will be shown in the detailed description below, an important property is the product structure at each level. 

Before showing how to run Algorithm \ref{algo:sew} sequentially, we first describe the set of experts (or managers/employees, which we may interchangably use depending on the context) at each level. At level $\ell\in [L]$, the continuous set $(0,1]$ of private values\footnote{For simplicity we assume that the private value is never zero, as bidding zero will be clearly optimal in that case.} is partitioned into $M_{\ell} = 2^{\ell+1}$ equally spaced bins $(I_{\ell,m})_{m\in [M_{\ell}]}$. Given each level $\ell$ and bin $m$, there are $U_{\ell} = 2^{\ell+1}-1$ regular experts, indexed by $(\ell, m, u)$ with $u\in [U_{\ell}]$, who will play mixed strategies supported on the interval 
\begin{align}\label{eq:interval_J}
J_{\ell, u} \triangleq \left( 2^{-\ell-1}(u-1), 2^{-\ell-1}(u+1) \right]. 
\end{align}
In addition, there are also $W_{\ell} = 2^{\ell}-1$ \emph{dummy} experts, indexed by $(\ell, m, w)$ with $w\in [W_{\ell}]$, who always bid a fixed price $b_{\ell,w} = 2^{-\ell}(w+1)$ inside the current bin. Then it remains to specify the mixture strategies used by the regular experts, or in particular, the structural relationships between managers and employees at adjacent levels. An example of the relationship is depicted in Figure \ref{fig.manager}. Specifically, for any level-$\ell$ manager with bin $I_{\ell, m}$, at the next level the bin will be split into two smaller bins $I_{\ell+1,2m-1}$ and $I_{\ell+1,2m}$. When the private value $v_t$ falls into one of the smaller bins, the manager will be follow the advice from one of the four employees in that smaller bin: the \emph{upper} employee, the \emph{middle} employee, the \emph{lower} employee, and the \emph{dummy} employee, represented by the intervals/bids $J_{\ell+1,2u+1}, J_{\ell+1,2u}, J_{\ell+1,2u-1}$ and $b_{\ell+1,u}$, respectively. Note that here the employees have a product structure as the manager can choose from $4\times 4=16$ employees based on her choice of employees in each smaller bin. In each smaller bin, the introduction of upper, middle, and lower employees is motivated by the covering $J_{\ell,u} = J_{\ell+1,2u+1}\cup J_{\ell+1,2u}\cup J_{\ell+1,2u-1}$ and the following lemma. 

\begin{figure}[!t]
\centering
\begin{tikzpicture}
\draw [dashed] (0,0) -- (0, 9);
\draw [dashed] (6,0.8) -- (6, 8.5);
\draw [dashed] (12,0) -- (12, 9);
\draw [thick, blue, ->] (5,1.5) -- (5,1.9); \node [below, blue] at (5,1.5) {$v_t$};
\draw [thick] (0,2) -- (12,2);
\draw [thick] (0,8) -- (12,8); 
\draw [->] (11,6) -- (11,7.8); \draw [->] (11,4) -- (11,2.2);
\node at (11,5.25) {Manager}; \node at (11,4.75) {$(\ell, m_{\ell}^\star, u)$};
\draw [->] (2,1) -- (0.2,1); \draw [->] (4,1) -- (5.8,1); \node at (3,1) {$I_{\ell+1,m_{\ell+1}^\star}$};
\draw [->] (5,0.2) -- (0.2,0.2); \draw [->] (7,0.2) -- (11.8, 0.2); \node at (6,0.2) {$I_{\ell,m_{\ell}^\star}$};

\draw [red, dashed] (0,7.9) -- (6,7.9); \draw [red, dashed] (0,6.5) -- (6,6.5); 
\draw [red, dashed] (0,5) -- (6,5); \draw [red, dashed] (0,3.5) -- (6,3.5); 
\draw [red, dashed] (0,2.1) -- (6,2.1); 
\draw [decorate,red,decoration={brace,amplitude=10pt},xshift=-4pt,yshift=0pt] (0,3.5) -- (0,6.5);
\draw [decorate,red,decoration={brace,mirror,amplitude=10pt},xshift=4pt,yshift=0pt] (6,2.1) -- (6,5);
\draw [decorate,red,decoration={brace,mirror,amplitude=10pt},xshift=4pt,yshift=0pt] (6,5) -- (6,7.9);
\node [red] at (-2,5.25) {Middle Employee}; \node [red] at (-2,4.75) {$(\ell+1, m_{\ell+1}^\star, 2u)$};
\node [red] at (8.2,3.75) {Lower Employee}; \node [red] at (8.2,3.25) {$(\ell+1, m_{\ell+1}^\star, 2u-1)$};
\node [red] at (8.2,6.75) {Upper Employee}; \node [red] at (8.2,6.25) {$(\ell+1, m_{\ell+1}^\star, 2u+1)$};
\draw [red, ->] (-0.5, 7.9) -- (-0.1, 7.9);
\node [red] at (-2,8.15) {Dummy Employee}; \node [red] at (-2, 7.65) {$(\ell+1, m_{\ell+1}^\star, u)$};
\end{tikzpicture}
\caption{A pictorial illustration of managers/employees at different levels. Based on each manager at level $\ell$, the bin $I_{\ell, m_{\ell}^\star}$ is divided into two small bins, where on each small bin the manager has $4$ employees known as the upper employee, the middle employee, the lower employee, and the dummy employee.} \label{fig.manager}
\end{figure}
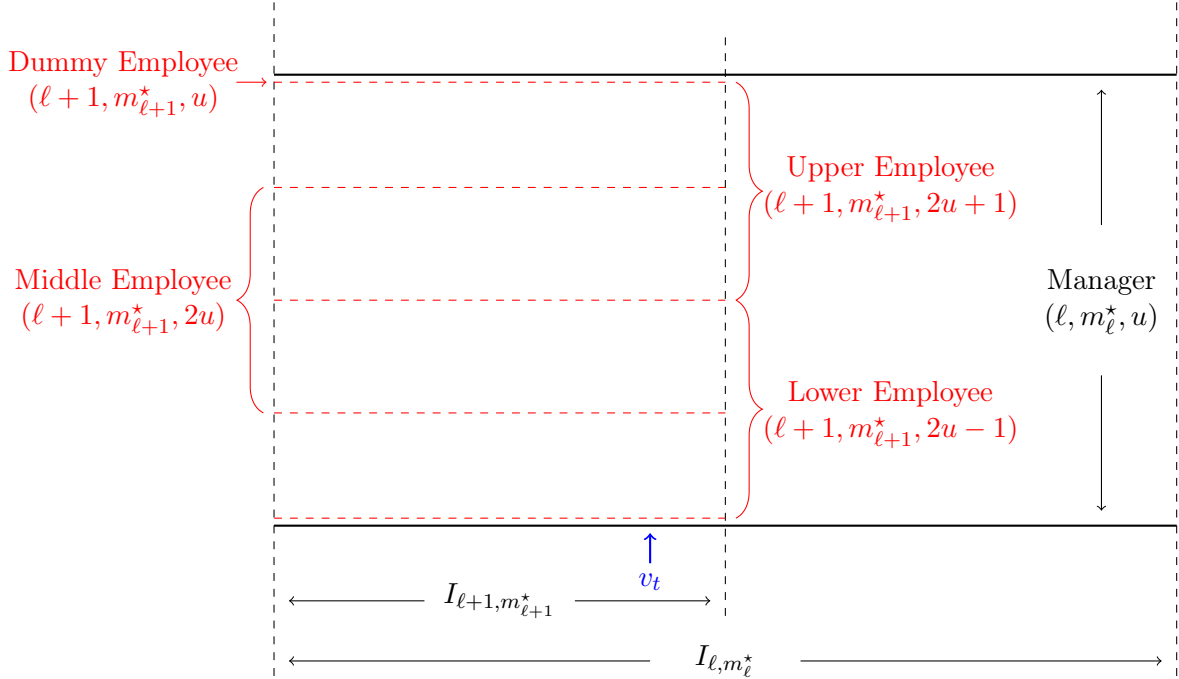

\begin{lemma}\label{lemma:employee}
Let $\ell\in [L-1], m\in [M_{\ell}], u\in [U_{\ell}]$. Then for any $f\in \calF_{\text{\rm Lip}}$ with $f(I_{\ell, m})\subseteq J_{\ell,u}$, and any $m'\in \{2m-1,2m\}$, there exists $u'\in \{2u-1,2u,2u+1\}$ such that $f(I_{\ell+1,m'}) \subseteq J_{\ell+1,u'}$. 
\end{lemma}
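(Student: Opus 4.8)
The plan is to translate everything into interval lengths and then reduce to an elementary covering fact. First I would record the dyadic structure: since $M_{\ell+1}=2M_\ell$ with $M_\ell=2^{\ell+1}$, the bin $I_{\ell,m}$ is partitioned by its two children $I_{\ell+1,2m-1}$ and $I_{\ell+1,2m}$, so $I_{\ell+1,m'}\subseteq I_{\ell,m}$ for both $m'\in\{2m-1,2m\}$. Hence $S:=f(I_{\ell+1,m'})\subseteq f(I_{\ell,m})\subseteq J_{\ell,u}$ for free, and since $f$ is $1$-Lipschitz and $I_{\ell+1,m'}$ has length $2^{-\ell-2}$, the image satisfies $\sup S-\inf S\le 2^{-\ell-2}$.

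Next I would make the three candidate intervals explicit. Writing $c:=2^{-\ell-1}u$ for the center of $J_{\ell,u}$ and $h:=2^{-\ell-2}$, a direct computation from \eqref{eq:interval_J} gives $J_{\ell,u}=(c-2h,\,c+2h]$ together with
\[
J_{\ell+1,2u-1}=(c-2h,\,c],\qquad J_{\ell+1,2u}=(c-h,\,c+h],\qquad J_{\ell+1,2u+1}=(c,\,c+2h],
\]
three length-$2h$ subintervals whose union is exactly $J_{\ell,u}$; one also checks $2u-1,2u,2u+1\in[U_{\ell+1}]$ from $u\in[U_\ell]$, so these are legitimate indices.

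The core step is then a short case analysis on $s_+:=\sup S$, using $S\subseteq(c-2h,c+2h]$ (so every element of $S$ strictly exceeds $c-2h$) together with $\inf S\ge s_+-h$. If $s_+\le c$, then $S\subseteq(c-2h,c]=J_{\ell+1,2u-1}$. If $c<s_+\le c+h$, then $\inf S> c-h$, so $S\subseteq(c-h,c+h]=J_{\ell+1,2u}$. If $c+h<s_+\le c+2h$, then $\inf S> c$, so $S\subseteq(c,c+2h]=J_{\ell+1,2u+1}$. Since $s_+\le c+2h$ always holds, these cases are exhaustive, and in each one we obtain the claimed $u'\in\{2u-1,2u,2u+1\}$.

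I do not anticipate a genuine obstacle; the only care needed is bookkeeping with the half-open intervals $(\cdot,\cdot]$. Concretely, one must verify that the inclusion $S\subseteq(c-2h,c+2h]$ is what supplies the strict lower bound $y>c-2h$ for every $y\in S$ (ruling out boundary leakage on the left), that the diameter bound of width $h$ then supplies the matching cushion needed to land strictly above $c-h$ or $c$ in the second and third cases, and that the three children tile $J_{\ell,u}$ with no gap at the split point $c$. Introducing the compact $(c,h)$ notation up front is what keeps the $2^{-\ell-1}$ versus $2^{-\ell-2}$ arithmetic transparent.
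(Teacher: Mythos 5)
Your proof is correct and follows essentially the same route as the paper: a three-way case analysis that combines the diameter bound $\sup S-\inf S\le 2^{-\ell-2}$ (from $1$-Lipschitzness and $|I_{\ell+1,m'}|=2^{-\ell-2}$) with the overlap structure of the intervals $J_{\ell+1,2u-1},J_{\ell+1,2u},J_{\ell+1,2u+1}$ inside $J_{\ell,u}$. Your version merely parametrizes the cases by the location of $\sup S$ rather than by which fringe of $J_{\ell,u}$ the image touches, and spells out the half-open endpoint bookkeeping more explicitly.
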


Lemma \ref{lemma:employee} implies that for any bidding strategy used by the oracle, its restriction on each small bin is contained in the support of some mixed strategy used by an employee. Hence, using exponential weights, it could be expected that the mixed strategy of the above employee is competitive to that of the oracle restricted to the given bin. Hence, the key reason to introduce the upper, middle, and lower employees is to fulfill the covering condition in Lemma \ref{lemma:employee} which further ensures that at least one of the above employees is competitive to the oracle. The additional dummy employee is mostly for technical purposes (similar to the dummy expert in Section \ref{subsec:chaining}) who serves as a good employee compared with the other three employees with suboptimality gap at most $\Delta_{\ell+1}\le 2^{-\ell}$. 

Now we are ready to explain the SEW algorithm sequentially. We keep track of the cumulative rewards $R_{\ell,m,u}$ of all regular experts (i.e. upper, middle, or lower employees), as well as those $R_{\ell,m,w}'$ for all dummy experts. Meanwhile, to determine the learning rate at each time, we also keep track of the visiting time $T_{\ell,m}$ for each bin $I_{\ell,m}$ (note that a time-varying learning rate is necessary here, as the bidder does not know in advance how many times a given bin will be visited in total). Now each round is decomposed into three steps: 
\begin{enumerate}
	\item Compute the exponential weights: First, the bidder receives the private value $v_t$ at the beginning of time $t$ and computes the EW probabilities at each level. Specifically, for each level $\ell\in [L]$, the bidder identifies the unique bin $I_{\ell,m_\ell^\star}$ which $v_t$ belongs to. In this bin, there are $W_\ell=2^{\ell}-1$ groups of experts, where each group consists of $4$ employees and corresponds to a single manager at the previous level (cf. Figure \ref{fig.manager}). Within each group $w\in [W_{\ell}]$, the bidder computes the probability vector $p_{\ell,m_{\ell}^\star,w} = \{p_{\ell,m_{\ell}^\star,w}(s)\}_{s=1,2,3,4}$ of choosing each employee based on the EW subroutine in Algorithm \ref{algo:ew}, where the learning rate is given by the current visiting time $T_{\ell,m}$ of the current bin (cf. \eqref{eq:EW_prob}). We remark that only one bin is considered at each level, and nothing needs to be done in other bins. 
	\item Random action based on the exponential weights: Based on the above EW probabilities, the bidder is now in a position to make a randomized bid in a top-down order. Specifically, the bidder starts at level $\ell=1$, finds the bin $I_{1,m_1^\star}$ which $v_t$ belongs to, and tosses a random coin to decide which employee to follow in the only group of experts (as $W_1=1$) according to the EW probability $p_{1,m_1^\star,1}(\cdot)$. If the dummy employee is selected, then the bidder bids the price $b_{1,1}$ corresponding to the dummy employee. Otherwise, the bidder continues to level $2$ and locates the group of experts corresponding to the chosen employee at level $1$. Again, the bidder follows a random employee in the new group based on the EW probability within this group, and continues this process. This process terminates when a dummy expert is selected, or the last level $\ell=L$ has been reached where each employee $(L,m,u)$ deterministically bids the midpoint $2^{-L-1}u$ of the interval $J_{L,u}$. 
	\item Update the rewards: Finally, the bidder observes the HOB $m_t$, and then updates the rewards of all (regular or dummy) experts in a bottom-up order. Specifically, the bidder starts from level $\ell= L$ and obtains the instantaneous rewards for all level-$L$ experts who make deterministic bids. Now the bidder moves to level $L-1$, where all regular experts at this level randomly follow the experts at level $L$ with a known probability distribution. Hence, the instantaneous rewards of all level-$(L-1)$ regular experts can also be computed according to \eqref{eq:update_reward}. This process can be continued until all instantaneous rewards are obtained, and then the cumulative rewards can be computed accordingly. 
\end{enumerate}

In summary, the SEW policy runs the exponential weighting at each level for picking the expert in the next level, and updates the rewards of a small portion of experts, i.e. those involved in the corresponding bin. The performance of the SEW policy is summarized in the following theorem, which completes the proof of Theorem \ref{thm:computation}. 
\begin{theorem}\label{thm:SEW}
	The {\rm SEW} policy takes $O(T)$ space and $O(T^{3/2})$ time, and 
	\begin{align*}
	\bE[R_T(\pi^{\text{\rm SEW}})] \le (2+4C(1+2\log_2 T))\sqrt{T}, 
	\end{align*}
	where $C>0$ is the absolute constant appearing in Theorem \ref{thm:good_expert}. 
\end{theorem}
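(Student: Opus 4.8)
The plan is to recognize the SEW policy as a product-structured realization of the hierarchical-chaining idea of ChEW, so that the regret bound follows by the same telescoping argument used for Theorem~\ref{thm:chew} (with the per-level regrets controlled by Theorem~\ref{thm:good_expert}), while the complexity claims reduce to counting. I would start with the complexity. For space, one stores for each level $\ell\in[L]$ and each of the $M_\ell = 2^{\ell+1}$ bins a counter $T_{\ell,m}$ together with $U_\ell + W_\ell = O(2^\ell)$ reward accumulators, so the total memory is $\Theta\big(\sum_{\ell=1}^L M_\ell\cdot 2^\ell\big) = \Theta(4^L)$; since $L = \lfloor\log_2\sqrt T\rfloor$ gives $2^L\le\sqrt T$, this is $O(T)$. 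For time, in each round Step~1 does $O(W_\ell)$ work in the single active bin at level $\ell$, Step~2 descends $L$ levels with $O(1)$ work per level, and Step~3 updates the $O(U_\ell + W_\ell)$ experts of the single active bin at each level; summing over $\ell$ gives $\Theta\big(\sum_\ell 2^\ell\big) = \Theta(2^L) = O(\sqrt T)$ per round and hence $O(T^{3/2})$ over the horizon.

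For the regret, fix a target $f$; by Lemma~\ref{lemma:continuity} and the remark that overbidding one's value never helps we may assume $f\in\calF_0$. Two structural ingredients drive the analysis. First, the \emph{good-expert} property: within a level-$\ell$ group $w$, a downward induction on $\ell$ (using \eqref{eq:interval_J} and the fact that each regular expert only mixes over strategies supported in the closure of its own interval) shows every regular employee $2w-1,2w,2w+1$ bids inside $\overline{J_{\ell-1,w}}$, whose right endpoint is exactly the dummy bid $b_{\ell,w}$; hence by \eqref{eq:right_lipschitz} the dummy is $\Delta_\ell$-good within that group with $\Delta_\ell = |J_{\ell-1,w}| = 2^{1-\ell}$, precisely the gap supplied to $\mathsf{EW}$ in \eqref{eq:EW_prob}. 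Second, the \emph{chain}: starting from the covering $(0,1]=J_{1,1}\cup J_{1,2}\cup J_{1,3}$ and iterating Lemma~\ref{lemma:employee}, I obtain for every level $\ell$ and bin $m$ a designated index $u_\ell(m)$ with $f(I_{\ell,m})\subseteq J_{\ell,u_\ell(m)}$ that is consistent across levels, i.e. $u_{\ell+1}(m')\in\{2u_\ell(m)-1,2u_\ell(m),2u_\ell(m)+1\}$ whenever $m'$ is a child bin of $m$. This makes the virtual strategies $g_0=\pi$, $g_\ell=$ ``follow the designated regular expert $u_\ell(\cdot)$ at level $\ell$'' for $1\le\ell<L$, and $g_L=$ ``the dummy of the level-$L$ group managed by $g_{L-1}$'' well defined, with each $g_\ell$ one of the four actions of the relevant $\mathsf{EW}$ subroutine.

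I would then telescope
\begin{align*}
\sum_{t=1}^T\big(r_t(f)-\bE[r_t(\pi)]\big) = \sum_{t=1}^T\big(r_t(f)-\bE[r_t(g_L)]\big) + \sum_{\ell=1}^L\sum_{t=1}^T\big(\bE[r_t(g_\ell)]-\bE[r_t(g_{\ell-1})]\big).
\end{align*}
The first term is the approximation error: $g_L$ bids the right endpoint of $J_{L-1,u_{L-1}(\cdot)}\ni f(v_t)$, overbidding $f$ by at most $2^{1-L}$, so Lemma~\ref{lemma:continuity} bounds it by $T\cdot 2^{1-L}=O(\sqrt T)$. For the $\ell$-th summand I group the rounds by the active level-$\ell$ bin $m$: all these rounds share the same parent bin, hence the same group index $w(m)=u_{\ell-1}(\lceil m/2\rceil)$ and the same fixed target action ($u_\ell(m)$, or the dummy when $\ell=L$), so the inner sum equals exactly the regret of the $\mathsf{EW}$ instance at $(\ell,m,w(m))$ over its $T_{\ell,m}$ visits against a fixed action, in a four-action problem with a $2^{1-\ell}$-good expert and the learning rate mandated by Theorem~\ref{thm:good_expert}. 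Thus each is at most $4\sqrt{T_{\ell,m}\,2^{1-\ell}\log 4}+32(4+\log T)\log 4$; Cauchy--Schwarz over $m$ with $\sum_m T_{\ell,m}=T$ and $M_\ell\cdot 2^{1-\ell}=4$ collapses the leading part to $8\sqrt{\log 4}\,\sqrt T$, independent of $\ell$, leaving an $O(M_\ell\log T)$ residual. Summing over $\ell\in[L]$ gives $O(\sqrt T\log T)$ from the leading parts and $O(2^L\log T)=O(\sqrt T\log T)$ from the residuals, which together with the $O(\sqrt T)$ approximation error yields the stated bound.

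The step I expect to be the main obstacle is making the per-level reductions to Theorem~\ref{thm:good_expert} fully rigorous rather than the arithmetic: one must verify that each $\mathsf{EW}$ instance sees a legitimate full-information reward sequence (the fed-in rewards are expected rewards of sub-level strategies, deterministic given the past and independent of that instance's current action), that the dummy is $\Delta_\ell$-good \emph{pathwise} and not merely in expectation (so the sharper concentration underlying Theorem~\ref{thm:good_expert} applies), and that the target action is genuinely \emph{fixed} over the rounds a given instance is active---which is exactly why the bins must be nested and the designated tree built consistently through Lemma~\ref{lemma:employee}. A minor separate point is the base case of the chain at level $1$, where one checks directly that any image of width $\le 1/4$ lies inside one of $J_{1,1},J_{1,2},J_{1,3}$.
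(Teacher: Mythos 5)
Your proposal is correct and follows essentially the same route as the paper: the identical space/time counting, the same chain of intermediate policies built by iterating Lemma~\ref{lemma:employee}, the per-level reduction to the four-action EW instances with the dummy as a $2^{1-\ell}$-good expert via Theorem~\ref{thm:good_expert}, and the same Cauchy--Schwarz/concavity collapse over bins. The only (immaterial) difference is that you terminate the chain at the level-$L$ dummy experts rather than at the piecewise-constant upper approximation $\widetilde f$ followed by the level-$L$ regular experts, which changes the approximation term from $2\sqrt T$ to at most $4\sqrt T$ and hence only the additive constant in the final bound.
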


\section{Further Discussions}\label{sec:discussion}
\subsection{Non-Lipschitz Reward}\label{subsec:non-lipschitz}
The only property of the reward function $r(b;v,m) = (v-b)\1(b\ge m)$ used by our policy is that the mapping $b\mapsto r(b;v,m)$ satisfies the one-sided Lipschitz property: $r(b;v,m) - r(b';v,m) \ge - (b-b')$ for all $v\ge b\ge b'$ (cf. Lemma \ref{lemma:continuity}). In other words, the mapping $b\mapsto r(b;v,m)$ from the feasible action $b\in [0,v]$ to its associated reward has right derivative at least $-1$. In particular, we remark that the above mapping is \emph{not} Lipschitz in general, for this function has a jump and is discontinuous at $b=m$. This remarkable difference differentiates our work from \cite{cesa2017algorithmic}, renders their techniques inapplicable, and highlights the necessity of the good expert. In fact, if the reward function \emph{were} Lipschitz on the action, then the chaining arguments in \cite{cesa2017algorithmic} lead to the regret
\begin{align*}
O\left( \sqrt{T} + \sum_{m=1}^M \Delta_{m-1}\sqrt{\frac{T}{\varepsilon_m}}\right),
\end{align*}
for all experts with the same manager at level $m$ receive rewards within a range of $\Delta_m$. In contrast, with only the one-sided Lipschitz property, all experts with the same manager at level $m$ may still receive rewards within a range of $\Theta(1)$, but the presence of a good expert leads to a larger regret
\begin{align*}
O\left( \sqrt{T} + \sum_{m=1}^M \sqrt{\Delta_{m-1}\cdot \frac{T}{\varepsilon_m}}\right),
\end{align*}
with the dependence on the suboptimality gap inside the squared root. Hence, the ChEW (and also SEW) policy still achieves an $\widetilde{O}(\sqrt{T})$ regret, which holds for general problems with the full information and a one-sided Lipschitz reward. 

	\subsection{Generalization to Monotone Oracles}\label{subsec:monotone_oracle}
In addition to the Lipschitz oracle $\calF_{\text{Lip}}$ considered throughout the paper, there is also another natural choice of the oracle, i.e. the monotone oracles $\calF_{\text{Mono}}$ consisting of all functions $f: [0,1]\to [0,1]$ such that $v\mapsto f(v)$ is monotonically increasing. This class of the monotone oracle is very practical as it includes common bidding strategies which start to join the auction only if the private value exceeds some threshold. Also note that in the stochastic case where others' highest bids $m_t$ follow any \emph{iid} distribution, it was shown in \cite[Lemma 5]{han2020optimal} that the oracle who knows the distribution of $m_t$ always uses a monotone bidding strategy, again validating this assumption. 

However, the generalization to monotone oracles is not straightforward. The main reason is that the function class $\calF_{\text{Mono}}$ is not \emph{totally bounded} in the $L^\infty([0,1])$-norm, as the functions $\{ \1(v\ge \alpha) \}_{\alpha\in [0,1]}$ are $L^\infty$-separated from each other. Surprisingly, we have the following impossibility result showing that the worst-case regret against the monotone oracle is $\Omega(T)$. 

\begin{theorem}\label{thm:monotone}
	There exists an oblivious adversarial sequence $(v_t,m_t)_{t\in [T]}$ such that any policy $\pi$ has an expected regret at least $\Omega(T)$ against the monotone oracle. 
\end{theorem}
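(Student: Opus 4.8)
The plan is to build a randomized, non-adaptive (``oblivious'') adversary that binary-searches for a hidden threshold, paired with a reward geometry in which avoiding $\Omega(1)$ instantaneous regret is exactly equivalent to correctly guessing on which side of the current query point the threshold lies; since the query is always placed where the learner is maximally uncertain, this guess can never beat a fair coin, which is precisely the ``infinite Littlestone dimension of thresholds'' phenomenon that makes $\calF_{\text{Mono}}$ too rich. Concretely, draw $\alpha\sim\text{Unif}[1/2,1]$ and let $g_\alpha(v)=\tfrac14\mathbf{1}(v\ge\alpha)$, which is nondecreasing with range in $[0,1]$, so $g_\alpha\in\calF_{\text{Mono}}$. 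Generate the instance by binary search: maintain a version-space interval $[\ell_t,r_t]\ni\alpha$ with $[\ell_1,r_1]=[1/2,1]$, present the private value $v_t=(\ell_t+r_t)/2$, set $m_t=g_\alpha(v_t)$, and then replace $[\ell_t,r_t]$ by $[\ell_t,v_t]$ if $m_t=\tfrac14$ (that is, $\alpha\le v_t$) and by $[v_t,r_t]$ if $m_t=0$. The whole sequence is a deterministic function of $\alpha$ alone, hence oblivious; the $v_t$ are distinct and lie in $(1/2,1)$, and $[\ell_t,r_t]$ has positive length for every $t\le T$, so nothing degenerates.

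The first step is to lower bound the oracle term by the single monotone strategy $g_\alpha$: since $m_t=g_\alpha(v_t)$ and $r(b;v,m)=(v-b)\mathbf{1}(b\ge m)$ awards the item at $b=m$, this strategy wins every round and earns $v_t-m_t\ge v_t-\tfrac14>\tfrac14$, so $\sup_{f\in\calF_{\text{Mono}}}\sum_{t=1}^T r(f(v_t);v_t,m_t)\ge\sum_{t=1}^T(v_t-m_t)$. The second step bounds the learner's per-round shortfall: for any policy, with bid $b_t$ at round $t$, set $\hat y_t=\mathbf{1}(b_t\ge\tfrac14)$ and $y_t=\mathbf{1}(m_t=\tfrac14)$. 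A four-case check shows that $r(g_\alpha(v_t);v_t,m_t)-r(b_t;v_t,m_t)$ is always nonnegative and is at least $\tfrac14$ whenever $\hat y_t\ne y_t$: if $y_t=1$ and $\hat y_t=0$ the learner bids below $m_t=\tfrac14$, loses, and falls short by $v_t-\tfrac14>\tfrac14$; if $y_t=0$ and $\hat y_t=1$ then $m_t=0$, both win, and the learner overpays by $b_t\ge\tfrac14$. Hence, pointwise, $R_T(\pi)\ge\tfrac14\sum_{t=1}^T\mathbf{1}(\hat y_t\ne y_t)$.

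The crux is to show $\bP(\hat y_t\ne y_t)=\tfrac12$ for every $t$ and every policy. Conditioned on the observable history $\mathcal{H}_t=(v_{1:t},m_{1:t-1})$, the interval $[\ell_t,r_t]$ and the midpoint $v_t$ are determined, $\alpha\mid\mathcal{H}_t$ is uniform on $[\ell_t,r_t]$, so $y_t=\mathbf{1}(\alpha\le v_t)$ is $\text{Bern}(\tfrac12)$; moreover $\mathcal{H}_t$ is a function of $\alpha$ only while $\hat y_t$ is a function of $\mathcal{H}_t$ and the learner's independent randomness, so $\hat y_t\perp y_t\mid\mathcal{H}_t$ and $\bP(\hat y_t\ne y_t\mid\mathcal{H}_t)=\tfrac12$. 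Taking expectations over $\alpha$ and the learner's randomness gives $\bE[R_T(\pi)]\ge\tfrac14\sum_{t=1}^T\bP(\hat y_t\ne y_t)=T/8=\Omega(T)$ for every $\pi$; averaging, for each $\pi$ there is even a fixed $\alpha$, hence a fixed oblivious sequence, attaining $\bE[R_T(\pi)]\ge T/8$. I expect this probabilistic step to be the main obstacle: it is what forces the adversary's queries to be chosen adaptively as version-space midpoints (so that full-information feedback on past $m_s$ never pins down $\alpha$), while the reward geometry must simultaneously keep the monotone oracle strictly profitable, which is why $\alpha$—and therefore every $v_t$—is confined to $[1/2,1]$ so the winning bid $\tfrac14$ always leaves margin $v_t-\tfrac14>\tfrac14$.
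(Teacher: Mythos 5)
Your proof is correct, and it exploits the same underlying phenomenon as the paper's --- the monotone oracle class contains all threshold functions, so an adversary can embed one unpredictable bit per round that a single threshold retroactively decodes perfectly while the learner can never guess it better than a fair coin --- but the two constructions are dual to each other. The paper makes the coin flips primary: it draws $m_t$ i.i.d.\ uniform on $\{0,1/8\}$ and then lets $v_t$ drift by $\pm 2^{-t-1}$ according to $m_{t-1}$, so that the nested-interval structure of the values guarantees \emph{a posteriori} that the threshold at $v_T$ separates the rounds with $m_t=1/8$ from those with $m_t=0$; the per-round loss is then bounded by directly maximizing $\bE[(v_t-b)\1(b\ge m_t)]$ over $b$. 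You make the threshold primary: a hidden $\alpha\sim\mathrm{Unif}[1/2,1]$ generates both the binary-search midpoints $v_t$ and the responses $m_t=g_\alpha(v_t)$, and the loss is bounded by reducing to a coin-guessing problem via the conditional uniformity of $\alpha$ on the version-space interval. The two routes buy essentially the same thing (a clean $\Omega(T)$ with explicit constant $T/8$ versus $T/16$, both via an averaging argument over a randomized oblivious sequence rather than a single sequence uniform over all policies); your version makes the connection to the infinite Littlestone dimension of thresholds more transparent, while the paper's version is marginally more self-contained in that the unpredictability of $m_t$ is immediate from the i.i.d.\ construction rather than requiring the conditional-distribution argument for $\alpha$ given the history. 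One small point worth stating explicitly in your write-up: the four-case check needs the observation that the learner always wins when $m_t=0$ (since $b_t\ge 0$), so the only way to lose the item is to bid below $1/4$ when $y_t=1$; you do use this correctly, but it is the step a reader will want to see spelled out.
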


Theorem \ref{thm:monotone} shows that the non-compactness of $\calF_{\text{Mono}}$ under $L^\infty([0,1])$ leads to a linear regret. Nevertheless, under any $L_p$ norm with $p\in [1,\infty)$, $\calF_{\text{Mono}}$ becomes totally bounded again. 

\begin{lemma}[\!\!\cite{birman1967piecewise}]\label{lemma:covering_monotone}
	For each $p\in [1,\infty)$, there exists a constant $C_p>0$ such that the $\varepsilon$-bracketing number of $\calF_{\text{\rm Mono}}$ is upper bounded by $\exp(C_p/\varepsilon)$ for all $\varepsilon\in (0,1)$. 
\end{lemma}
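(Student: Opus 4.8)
The plan is to exhibit, for each fixed $p$ and each $\varepsilon$, a family of $L_p([0,1])$-brackets of cardinality $\exp(O_p(1/\varepsilon))$ whose width is $O_p(\varepsilon)$; rescaling $\varepsilon$ by a $p$-dependent constant then yields the stated bound. It helps to reformulate: each $f\in\calF_{\text{Mono}}$ corresponds to a nonnegative measure $\mu_f$ on $[0,1]$ of total mass at most $1$ via $f(v)=f(0^+)+\mu_f((0,v])$, so bracketing $\calF_{\text{Mono}}$ amounts to sandwiching the ``CDF'' $v\mapsto\mu_f((0,v])$, uniformly in $f$, between two monotone step functions with small $L_p$-gap. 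A natural first attempt is the level-crossing encoding: with $N\asymp 1/\varepsilon$ equally spaced levels, record the crossing times $t_j=\inf\{v:f(v)>j/N\}$ ($j=1,\dots,N$), which form a nondecreasing sequence in $[0,1]$; rounding each $t_j$ to a grid of mesh $\delta$ and passing to the induced lower and upper step functions produces a bracket whose pointwise width at $v$ is $O\!\big(1/N+(f(v+\delta)-f(v-\delta))\big)$.

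For $p=1$ this already closes the argument: the width integrates to $O(1/N+\delta)$ because $\int_0^1(f(v+\delta)-f(v-\delta))\,dv\le 2\delta\,\mu_f([0,1])\le 2\delta$, so $N\asymp 1/\varepsilon$ and $\delta\asymp\varepsilon$ give width $O(\varepsilon)$, while the number of brackets is the number of nondecreasing length-$N$ sequences with values in a $1/\delta$-point grid, namely $\binom{N+1/\delta}{N}=\exp(O(1/\varepsilon))$ --- the monotone analogue of Lemma~\ref{lemma:covering}, essentially the Kolmogorov--Tikhomirov count. The real difficulty is $p>1$: a function built from $\asymp 1/\varepsilon$ ``mini-jumps'' of size $\asymp\varepsilon$ forces one to locate each jump, and the crude bound $\int(f(v+\delta)-f(v-\delta))^p\le\int(f(v+\delta)-f(v-\delta))\le 2\delta$ is then far too lossy --- it demands a mesh $\delta\asymp\varepsilon^p$ and degrades the count to $\exp(C_p\varepsilon^{-1}\log(1/\varepsilon))$ or worse. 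The remedy is an adaptive, multiscale construction in the spirit of the piecewise-polynomial approximation of \cite{birman1967piecewise}: split $f$ according to the dyadic scale of its local oscillation, use a coarse spatial mesh where $f$ is steep (so the necessarily wide bracket there occupies little $v$-measure and, once its width is raised to the $p$-th power, contributes negligibly to the $L_p$-norm) and a fine mesh where $f$ is flat, and --- crucially --- observe that within each dyadic scale the locations to be encoded are themselves monotone, so their count is a ``stars-and-bars'' binomial rather than a product, which is exactly what kills the spurious logarithm. Balancing the per-scale $L_p$-error against the per-scale log-cardinality so that both sums telescope, to $O_p(\varepsilon)$ and $\exp(O_p(1/\varepsilon))$ respectively, is the technical heart of the proof.

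As a cross-check I would also note the route through convex hulls: since $f-f(0^+)=\int_{[0,1]}\1_{[t,1]}\,d\mu_f(t)$ with $\mu_f([0,1])\le 1$, the class $\calF_{\text{Mono}}$ lies in a fixed multiple of the closed convex hull of $\mathcal{G}=\{\1_{[t,1]}:t\in[0,1]\}$, and $\mathcal{G}$ is a one-parameter VC class with polynomially bounded covering numbers; the standard bracketing-entropy estimate for convex hulls of such classes (precisely Theorem~2.7.5 of \cite{van1996weak}) then yields $\log N_{[]}(\varepsilon,\overline{\mathrm{conv}}\,\mathcal{G},L_p)=O_p(1/\varepsilon)$. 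Either way, I expect the main obstacle to be the same tension: ``steep'' portions of $f$ must be pinned down spatially for the $L_p$-width to be small, yet a monotone list of such locations must be described cheaply for the cardinality to stay at $\exp(O(1/\varepsilon))$, and the naive $L_1$-to-$L_p$ passage $\|g\|_p^p\le\|g\|_1$ throws this balance away, so the accounting has to be carried out intrinsically, scale by scale.
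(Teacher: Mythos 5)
First, note that the paper does not actually prove this lemma: it is imported verbatim from the literature (Birman--Solomjak, or Theorem 2.7.5 of van der Vaart--Wellner), so there is no in-paper argument to match yours against; the question is only whether your blind attempt constitutes a proof. It does not, though it is a well-informed plan. Your level-crossing construction and the counting via monotone sequences on a grid correctly settles $p=1$, and your diagnosis of why the same mesh fails for $p>1$ is exactly right (a single jump of size $1$ forces $\delta\asymp\varepsilon^{p}$ under the crude passage $\lnorm{g}{p}^{p}\le\lnorm{g}{1}$, which inflates the count to $\exp(C\varepsilon^{-1}\log(1/\varepsilon))$). But the entire content of the lemma beyond $p=1$ is the removal of that logarithm, and at precisely that point your argument stops: the ``adaptive, multiscale construction'' is named, its desiderata are listed, and you yourself flag the balancing of per-scale $L_p$-error against per-scale log-cardinality as ``the technical heart of the proof'' without carrying it out. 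No scale decomposition is defined, no per-scale mesh is chosen, and no telescoping bound is verified, so the claim $\exp(C_p/\varepsilon)$ for $p>1$ remains unproven.

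The proposed cross-check does not rescue this. As written it is circular: the ``bracketing-entropy estimate for convex hulls'' you invoke is cited as Theorem 2.7.5 of van der Vaart--Wellner, but that theorem \emph{is} the monotone-class statement being proved. The genuine convex-hull result in that reference (Theorem 2.6.9) controls $L_2(Q)$ \emph{covering} numbers of convex hulls of VC-type classes, not bracketing numbers; and for the use the paper makes of this lemma in Theorem \ref{thm:monotonce_oracle}, one specifically needs an upper bracket $g\ge f$ with $\|f-g\|_p\le\varepsilon$ so that the one-sided Lipschitz argument of Lemma \ref{lemma:continuity} applies, so a covering bound is not a substitute. To close the gap you would either have to execute the Birman--Solomjak-style adaptive partition in full for general $p$, or simply do what the paper does and cite the result.
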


Based on Lemma \ref{lemma:covering_monotone}, we prove that when the private values are stochastic with mild assumptions on the density, then an $\widetilde{O}(\sqrt{T})$ average regret can again be achieved against any monotone oracles. 

\begin{theorem}\label{thm:monotonce_oracle}
	Fix any $q\in (1,\infty]$ and $L>0$. Let the private values $(v_1,\cdots,v_T)$ be drawn from some unknown distribution $P$ with marginals $P_1,\cdots,P_T$, and $P_t$ admits a density $p_t$ on $[0,1]$ with $\|p_t\|_q \le L$ for all $t\in [T]$. Then there exists a policy $\pi=(b_1,\cdots,b_T)$ such that
	\begin{align*}
		\max_{f\in\calF_{\text{\rm Mono}}} \bE\left[ \sum_{t=1}^T \left( r(f(v_t);v_t,m_t) - r(b_t;v_t,m_t) \right) \right]\le C\sqrt{T}\log T, 
	\end{align*}
	where $(m_1,\cdots,m_T)$ is any adversarial sequence in $[0,1]$, the expectation is taken jointly over the randomness of the private values and the bidder's policy, and constant $C>0$ depends only on $(q,L)$. 
\end{theorem}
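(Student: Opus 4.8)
The plan is to run the hierarchical-chaining architecture of the ChEW policy, but to drive it with the $L^{p}$ bracketing of $\calF_{\text{Mono}}$ from Lemma~\ref{lemma:covering_monotone} instead of the $L^{\infty}$ bracketing of $\calF_{\text{Lip}}$, where $p:=q/(q-1)$ is the H\"{o}lder conjugate of $q$ (and $p=1$ if $q=\infty$). The key point is that the stochasticity of the private values is exactly what converts $L^{p}$-closeness of two experts into closeness of their cumulative rewards: if $f\le g\le v$ pointwise with $\|g-f\|_{p}\le\varepsilon$, then \eqref{eq:right_lipschitz} gives $r_t(f)\le r_t(g)+(g(v_t)-f(v_t))$ for every $t$, so by H\"{o}lder's inequality applied to $\int_0^1(g-f)p_t$,
\begin{align*}
\bE\Big[\sum_{t=1}^{T}r_t(f)\Big]\le \bE\Big[\sum_{t=1}^{T}r_t(g)\Big]+\sum_{t=1}^{T}\|g-f\|_{p}\,\|p_t\|_{q}\le \bE\Big[\sum_{t=1}^{T}r_t(g)\Big]+LT\varepsilon,
\end{align*}
which is the stochastic analogue of Lemma~\ref{lemma:continuity}. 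First I would reduce to the monotone $f$ with $f(v)\le v$, fix $M=\lfloor\log_2\sqrt T\rfloor$ levels with radii $\varepsilon_m=2^{-m}$, take $L^p$-brackets $\calN_m$ with $\log|\calN_m|\le C_p/\varepsilon_m$ from Lemma~\ref{lemma:covering_monotone}, and build the same manager/employee tree through \eqref{eq:manager}; the last-level approximation then costs $LT\varepsilon_M=O(\sqrt T)$ in expectation.

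With the tree in place, the decomposition \eqref{eq:regret_decomposition} reduces the problem to bounding, at every level $m$, the regret of the exponential-weighting sub-problem run by a manager $f_{m-1}$ over its children $C(f_{m-1})$ against the particular child $f_m$ in the chain of the best monotone oracle, with a bound that is $\widetilde O(\sqrt T)$ \emph{uniformly over the $O(\log T)$ levels}. A plain expert-advice bound costs $\sqrt{T\log|C(f_{m-1})|}\le\sqrt{C_p T/\varepsilon_m}=\sqrt{C_p T\,2^{m}}$ per level, which at $m=M$ is already $\Theta(T^{3/4})$; so, exactly as for $\calF_{\text{Lip}}$, one must exploit a near-optimal ``good'' employee to cut the per-level regret to $\sqrt{C_p T\,\varepsilon_{m-1}/\varepsilon_m}=O(\sqrt{C_p T})$, which, summed over the $M=O(\log T)$ levels and added to the $O(\sqrt T)$ approximation term, gives the claimed $C\sqrt T\log T$ with $C=C(q,L)$.

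The main obstacle is supplying that good employee. In ChEW/SEW it is the pointwise maximum $f_{m-1}^{\star}=\max_{g\in C(f_{m-1})}g$, whose $\Delta$-goodness \emph{at every single round} is what makes Theorem~\ref{thm:good_expert} directly applicable; but when the children are only close in $L^{p}$, this maximum can be $\Omega(1)$ above a given child in $L^{\infty}$ on a set of positive Lebesgue measure, so it is good only after averaging over $v_t$, and the Bernstein/second-moment step behind Theorem~\ref{thm:good_expert} no longer applies verbatim. The crux is therefore to re-derive the good-expert regret bound in a ``stochastically good'' regime: since $m_t$ is a fixed (oblivious) sequence, conditioning on the realized $v_t$ makes each reward gap a deterministic function of $v_t$, and for the comparison between $f_{m-1}^{\star}$ and a child one separates the rounds on which that function stays below the ``vertical'' threshold $\Delta_{m-1}=2\sum_{r\ge m-1}\varepsilon_r$ — on which the gap genuinely plays the role of $\Delta$ in Theorem~\ref{thm:good_expert} — from the remaining rounds, whose values lie in an exceptional set which, by the density bound and H\"{o}lder, should contribute only $\widetilde O(\sqrt T)$ to the regret in total across levels. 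Making this estimate hold simultaneously over all $\exp(C_p/\varepsilon_m)$ candidate children and all $M$ levels — for which a uniform (empirical-process, Dvoretzky--Kiefer--Wolfowitz-type) control on the empirical distribution of $(v_t)_{t\in[T]}$ is the natural device — is, I expect, the technically delicate heart of the proof; once it is in hand, everything else is a recombination of the chaining decomposition \eqref{eq:regret_decomposition}, the per-level exponential weighting, and the good-expert bound already developed for $\calF_{\text{Lip}}$.
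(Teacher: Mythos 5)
Your first paragraph is, almost word for word, the entirety of the paper's own proof: the paper establishes exactly the H\"{o}lder bound $\bE[r_t(f)-r_t(g)]\le\|f-g\|_p\|p_t\|_q\le L\varepsilon$, declares that the $L^p$-bracketing of $\calF_{\text{Mono}}$ from Lemma~\ref{lemma:covering_monotone} may therefore replace the $L^\infty$-bracketing as the expert set, and concludes in a single sentence that ``applying the ChEW policy with the new collection of the experts gives the claimed regret bound.'' So your route is the same as the paper's. What is worth flagging is that the obstacle you isolate in your third paragraph is genuine and is \emph{not} addressed by the paper: under $L^p$ brackets the dummy expert $f_{m-1}^\star$ is good only on average over the draw of $v_t$, not at every round, so the pointwise bound $X_t\le\Delta$ and hence the variance estimate $\var(X_t)\le 2(\Delta-\bE[X_t])$ that drive Theorem~\ref{thm:good_expert} do not apply verbatim --- and the paper itself insists in Section~\ref{subsec:good_expert} (the first bullet after Definition~\ref{def.good_expert} and the remark following the two-expert example) that goodness in expectation does not by itself yield the improved $\sqrt{T\Delta\log K}$ rate. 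There is also a second, related issue you only brush against: the pointwise maximum of the exponentially many leaves under a manager need not be $L^p$-close to those leaves (a maximum of functions each $\varepsilon$-close in $L^p$ to a reference can be order one away from it in $L^p$), so even the \emph{averaged} goodness of $f_{m-1}^\star$ requires justification. Your proposed repair --- conditioning on the realized $v_t$, splitting off the exceptional set where the gap exceeds $\Delta_{m-1}$, and controlling its aggregate contribution via the density bound together with a uniform empirical-process estimate over all children and all levels --- is a plausible route, but you do not carry it out, and neither does the paper; as written, both your proposal and the paper's proof leave the per-level good-expert bound for $L^p$ brackets unproven.
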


\section{Experiments on Real Data}\label{sec:experiment}
We now demonstrate the performance of our proposed bidding policy on three real auction datasets obtained from Verizon Media. To streamline the presentation, we first present in Section~\ref{subsec:exp_data} an overview and a visualization of the datasets, and then introduce in Section~\ref{subsec:exp_policy} three other competing bidding policies for comparison. Finally, for the experimental results, we show in Section~\ref{subsec:exp_result} that each of the competing policy, despite performing well in certain datasets, behaves poorly in others, while in contrast, our SEW policy enjoys superior performance on all datasets and uniformly outperforms both competing policies.

\subsection{Data Description}\label{subsec:exp_data}
Our experiments are run on a total number of three auction datasets from the first-price auctions on three real-world sites, where each dataset consists of the bidding data through the Verizon Media demand side platform (DSP) during a one-month period from March 24, 2020 to April 22, 2020. For business confidentiality, we do not intend to disclose the full data, nor the identities of the real-world sites; we will refer to datasets A, B, and C instead. These datasets consist of around 0.54, 1.00, and 1.57 million data points, respectively, where each data point is a pair of scalars $(v_t, m_t)$ including the private value $v_t$ and the HOB $m_t$ for each auction. The private value $v_t$ is computed by Verizon Media based on an independent learning scheme not relying on the auction, and is therefore taken as given. The HOB $m_t$ is returned by the platform after each auction, possibly including the seller's reserve price and measured up to 1 cent. These datasets have already been pruned to only contain data points with $v_t > m_t$, for otherwise the bidder never wins regardless of her bids. 

\begin{figure}[!t]
	\centering
	\begin{subfigure}[b]{0.32\textwidth}
		\includegraphics[width=\linewidth]{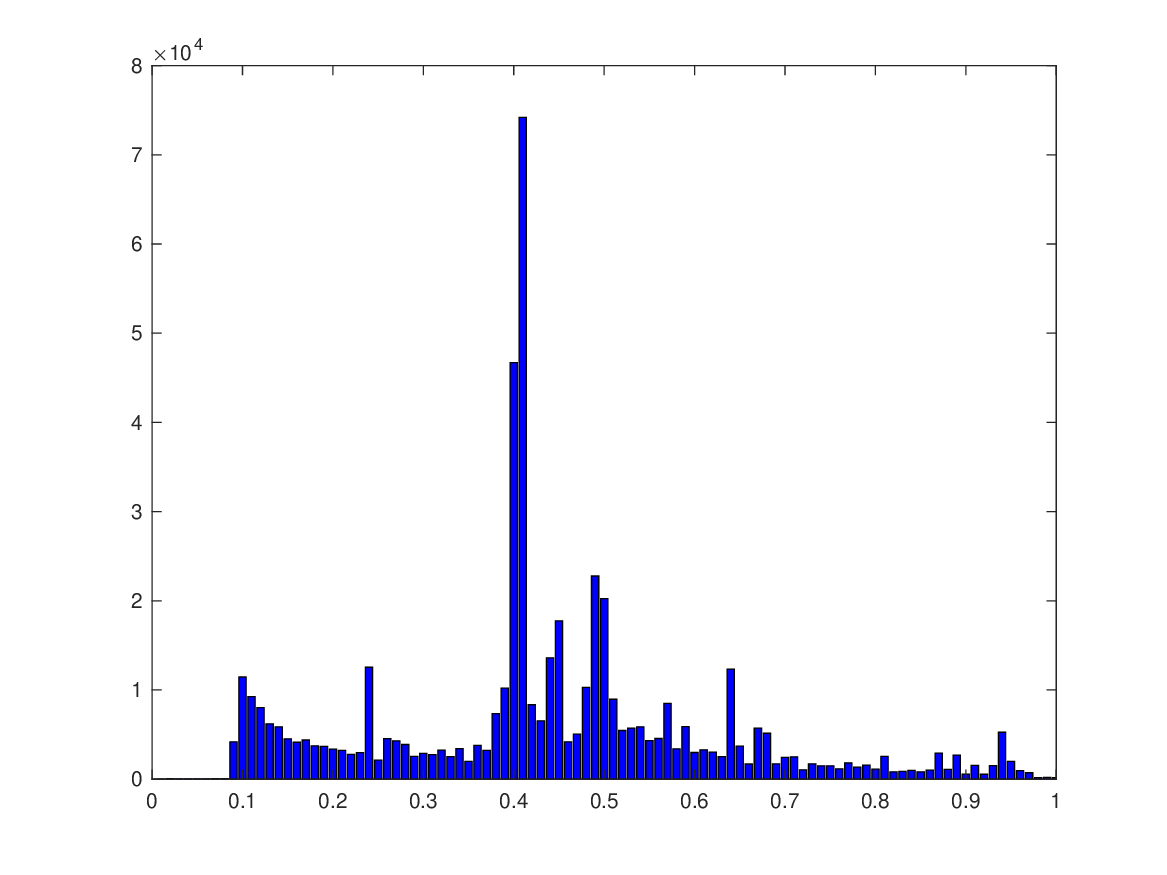}\caption{Dataset A.}
	\end{subfigure}
	\begin{subfigure}[b]{0.32\textwidth}
		\includegraphics[width=\linewidth]{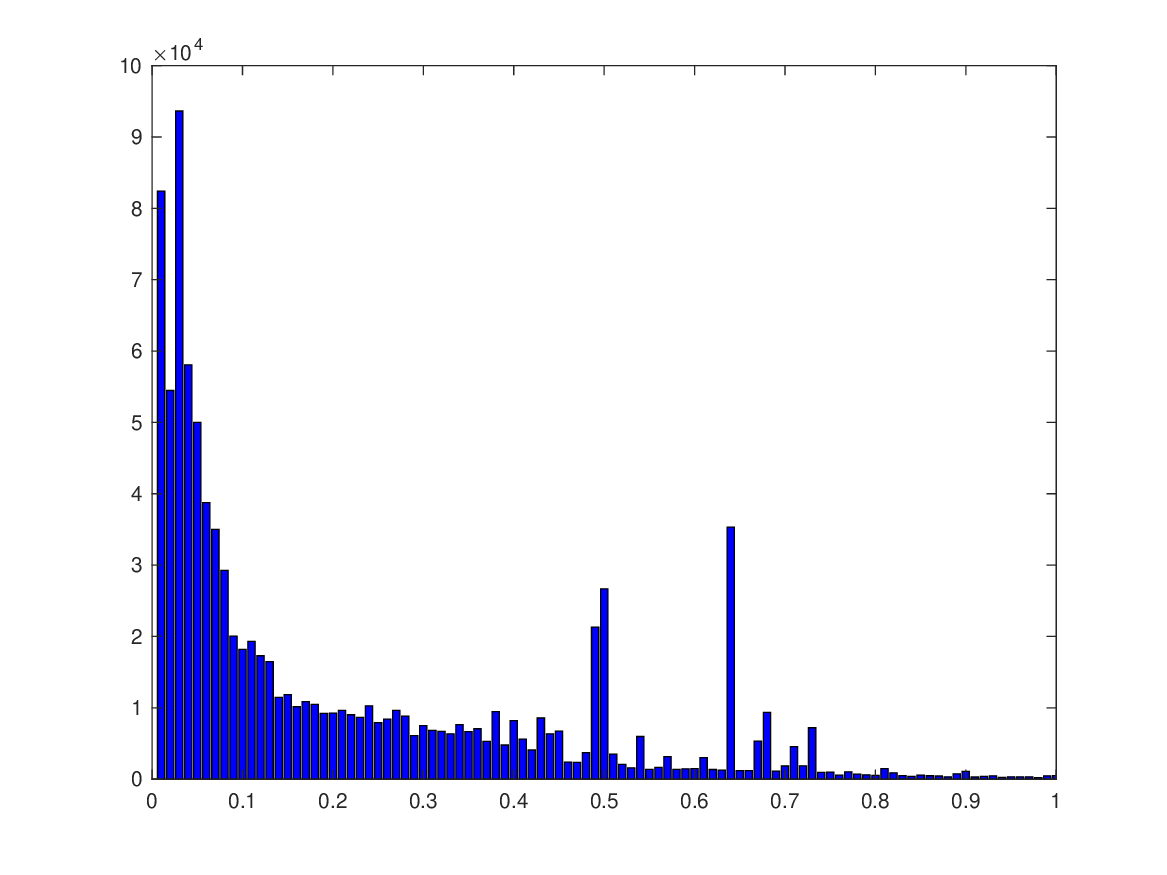}\caption{Dataset B.}
	\end{subfigure}
	\begin{subfigure}[b]{0.32\textwidth}
		\includegraphics[width=\linewidth]{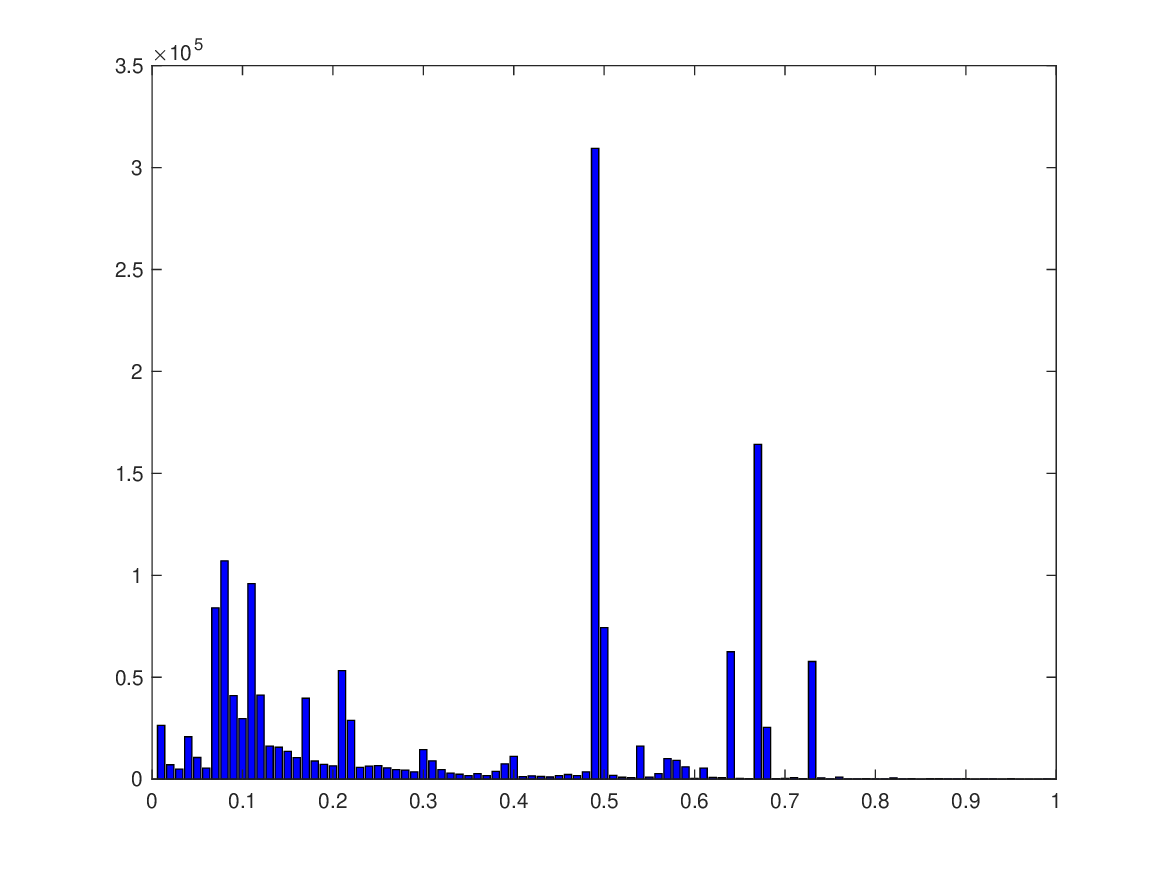}\caption{Dataset C.}
	\end{subfigure}
	\caption{The histograms of the private value sequence $(v_t)$ in the three datasets, with the normalization into $v_t\in [0,1]$ and $100$ equal-spaced bins.}\label{fig:data_v}
\end{figure}

\begin{figure}[!t]
	\centering
	\begin{subfigure}[b]{0.32\textwidth}
		\includegraphics[width=\linewidth]{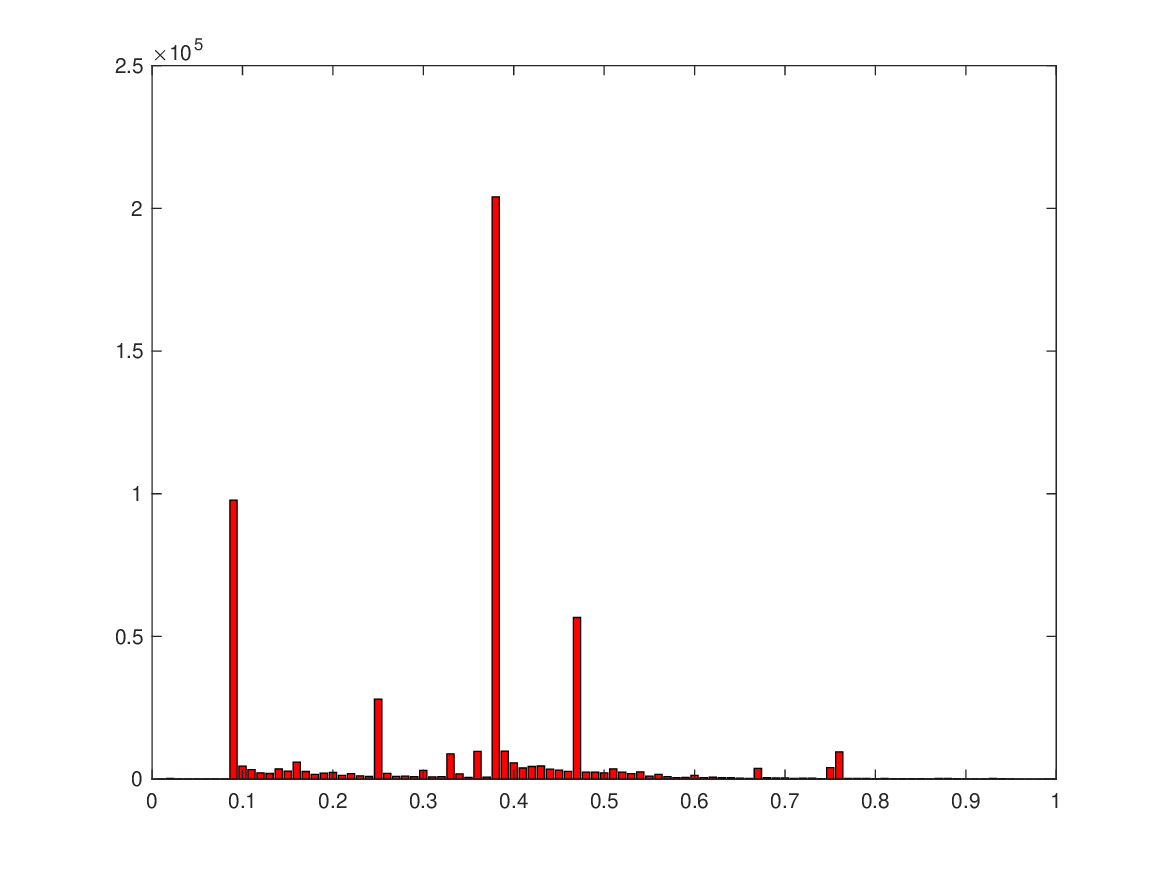}\caption{Dataset A.}
	\end{subfigure}
	\begin{subfigure}[b]{0.32\textwidth}
		\includegraphics[width=\linewidth]{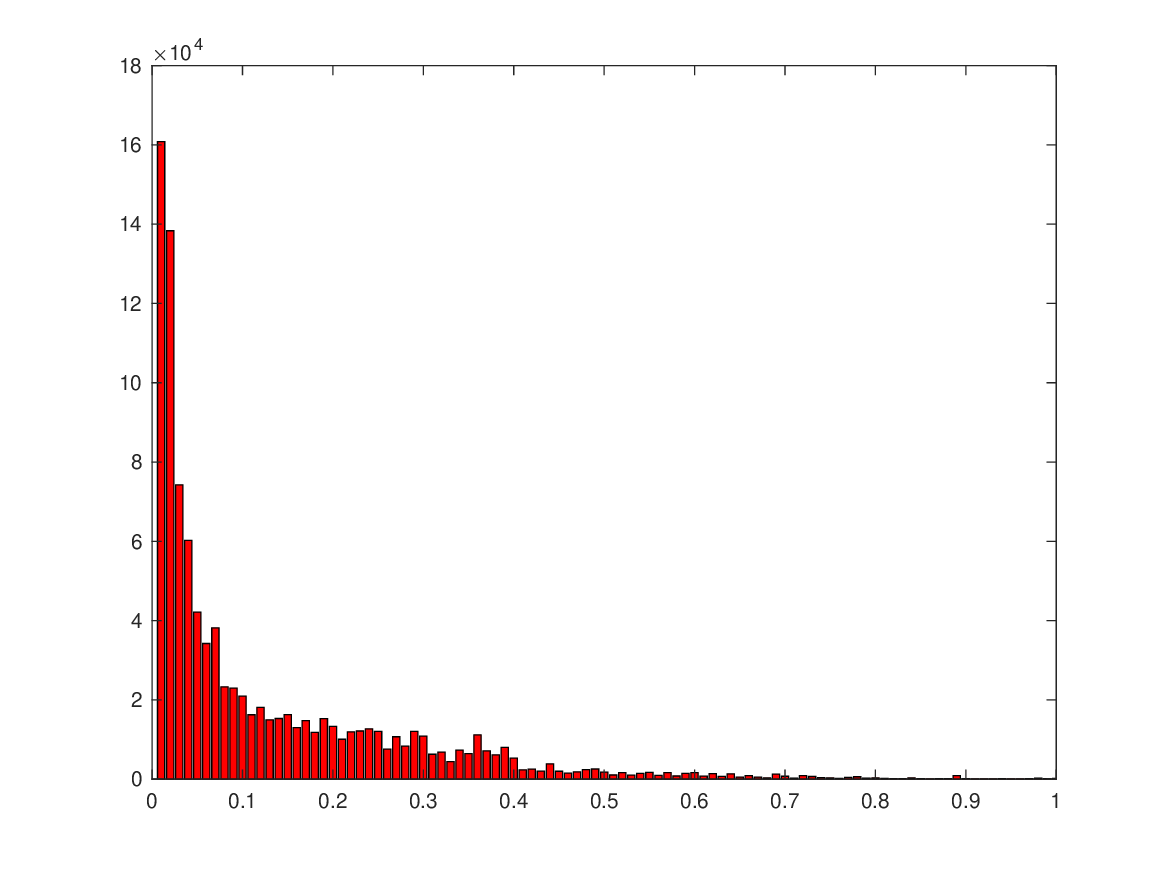}\caption{Dataset B.}
	\end{subfigure}
	\begin{subfigure}[b]{0.32\textwidth}
		\includegraphics[width=\linewidth]{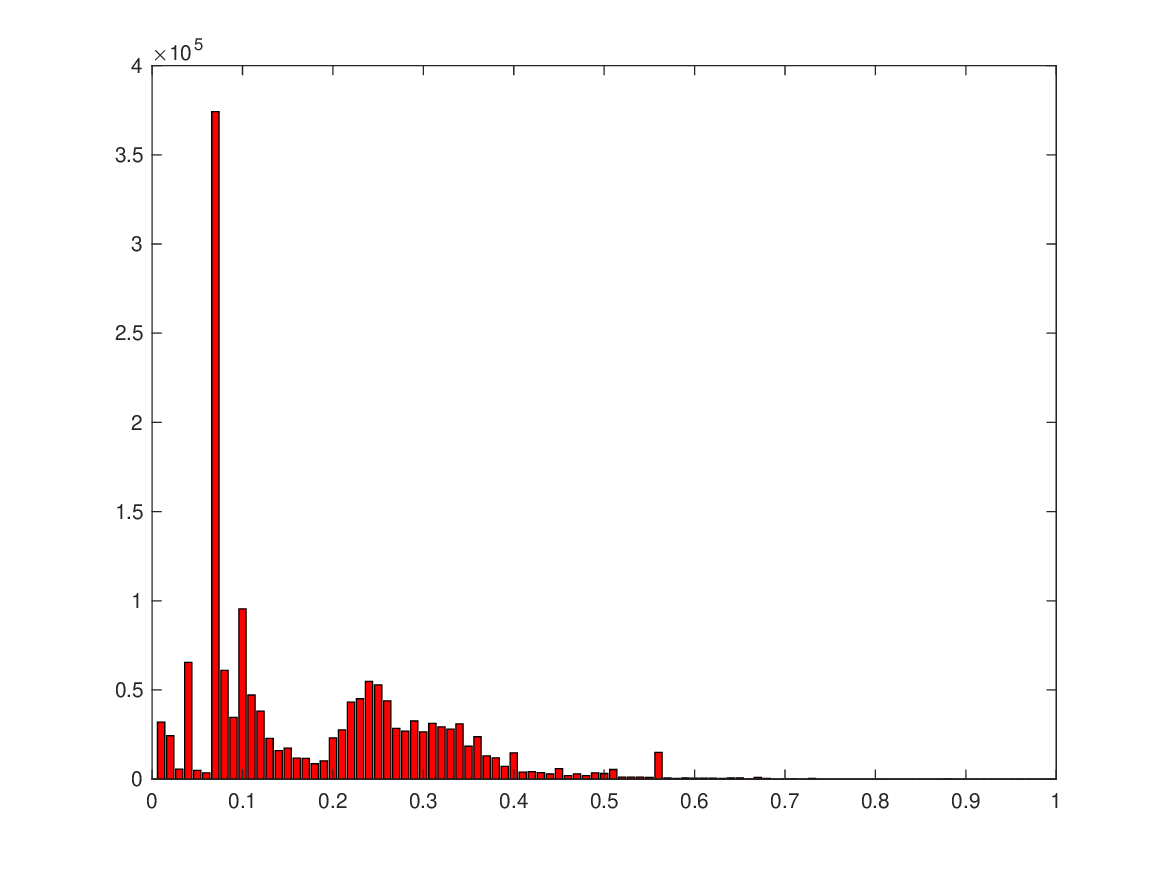}\caption{Dataset C.}
	\end{subfigure}
	\caption{The histograms of the minimum winning bid sequence $(m_t)$ in the three datasets, with the normalization into $m_t\in [0,1]$ and $100$ equal-spaced bins.}\label{fig:data_m}
\end{figure}

Although we reveal little information about the datasets for business confidentiality, we provide a visualization to illustrate the important points. Specifically, Figures \ref{fig:data_v} and \ref{fig:data_m} display the histograms of the private values $v_t$ and the minimum bids $m_t$ needed to win in three datasets, respectively, where only the relative values after normalization are presented. We observe that histograms in different datasets exhibit different properties. For example, Figure \ref{fig:data_v} shows that the empirical distribution of the private values in dataset C has a large discrete component (i.e. taking relatively few values), whereas those in the datasets A and B are closer to continuous distributions. Similarly, Figure \ref{fig:data_m} shows that HOBs $m_t$ have a near-discrete distribution in dataset A, while in the other two datasets the distributions of $m_t$ are more continuous. As we shall see in the next subsection, these differences lead to varying performances for the two competing bidding policies.

\subsection{Competing Bidding Policies}\label{subsec:exp_policy}
In the experiments we apply our SEW policy in Algorithm \ref{algo:sew} to all three datasets, with learning rate $\eta_t=5/\sqrt{t\Delta}$ in Algorithm \ref{algo:ew} and proper scalings to accommodate real ranges of the private values and the candidate bids instead of $[0,1]$. Below we introduce three competing bidding policies covering both ideas of parametric modeling and nonparametric learning. 

\begin{enumerate}
\item\textbf{Competing Policy $1$: Linear Bid Shading Policy.}
The idea of \emph{bid shading}, referring to the fact that bidders should bid less than their private valuation, is well-known in first-price auctions \cite{bid_shading}. In general, the bid shading idea assumes a parametric model of the optimal bid, where the simplest policy is the linear bid shading \cite{abeille2018explicit}. Specifically, the bidder's bid $b = \theta v$ is assumed to be a linear function of the private value $v$, indexed by some scalar parameter $\theta\in [0,1]$ to be learned. To determine the optimal parameter $\theta$, we collect the past data in a given time window (usually one day) and use a brute force grid search to find the optimal parameter $\theta$ to maximize the total profit in the previous window.

\item \textbf{Competing Policy $2$: Non-linear Bid Shading Policy.} 
The practical performance of the bid-shading policy could be improved by introducing certain non-linearity into the parametric modeling. To this end, we also compare with a recent non-linear bid shading policy proposed in \cite{karlsson2020adaptive} motivated by domain knowledge: the parameter set is $\theta = (\theta_1,\theta_2)\in [0,1] \times [0,\infty)$, and the bid $b$ takes the parametric form
\begin{align*}
b(v;\theta) = \frac{\log(1+\theta_1\theta_2v)}{\theta_2}.
\end{align*}
Since $\log(1+x)\le x$ for all $x>-1$, the above policy achieves the bid shading over the entire parameter set, with parameters $\theta_1, \theta_2$ controlling the degrees of bid shading. As before, we use a brute force grid search to find the optimal low-dimensional parameter $\theta$ to maximize the total profit in the past one-day window and use it for the current window. 

\item \textbf{Competing Policy $3$: Distribution Learning Policy.}
This policy is nonparametric and estimates the probability distribution of HOBs (assuming that HOBs are stationary in a short time window). This idea is motivated by its theoretical optimality in regret established in~\cite{han2020optimal} on stochastic first-price auctions with censored feedback. Specifically, at each time $t$, let $\widehat{P}$ be the empirical distribution of HOBs in a given time window, then current bid $b_t$ is chosen to maximize the expected revenue \emph{if} $m_t\sim \widehat{P}$: 
\begin{align*}
b_t = \arg\max_{b\ge 0}\bE_{m_t\sim \widehat{P}}[(v_t - b)\1(b\ge m_t)].
\end{align*}
Note that this approach is entirely nonparametric, and is expected to have a sound performance if the data is indeed stationary. In the experiments we always choose the length of the time window to be one day, which is roughly the optimal length of the time window in hindsight for all datasets. 
\end{enumerate}

Finally, we remark that as this paper is devoted exclusively to the understanding of the possibly simplest first-price auction model without any external or side information, we are not comparing with bidding policies aided by various sources of side information available in practice. The effects of different kinds of side information on the auction performance were later studied in \cite{zhang2022leveraging}.

\subsection{Experimental Results}\label{subsec:exp_result}
We plot and compare the cumulative rewards (normalized to $[0,1]$ to avoid information leakage) of all four policies as a function of time in all datasets in Figure \ref{fig:result}. We present a quick summary of our key findings next:

\begin{figure}[t]
	\centering
	\begin{subfigure}[b]{0.32\textwidth}
		\includegraphics[width=\linewidth]{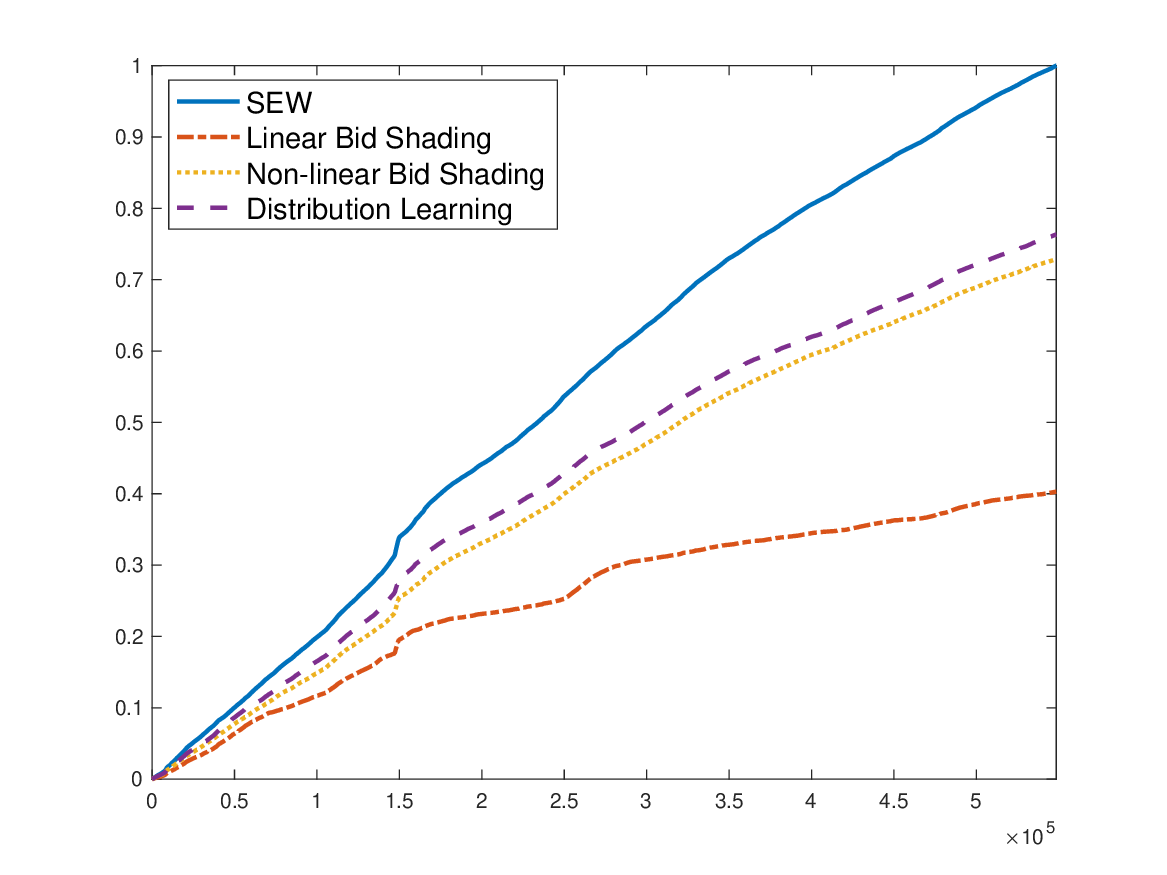}\caption{Dataset A.}
	\end{subfigure}
	\begin{subfigure}[b]{0.32\textwidth}
		\includegraphics[width=\linewidth]{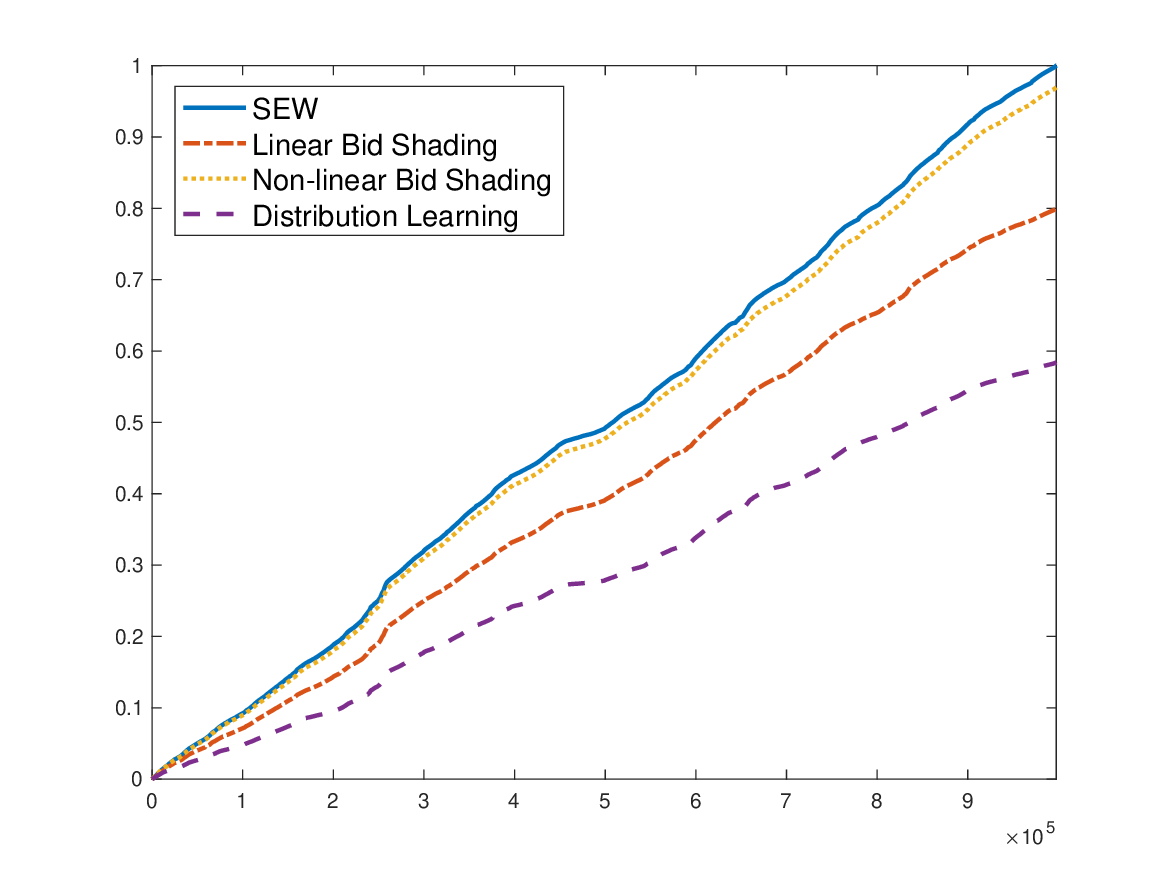}\caption{Dataset B.}
	\end{subfigure}
	\begin{subfigure}[b]{0.32\textwidth}
		\includegraphics[width=\linewidth]{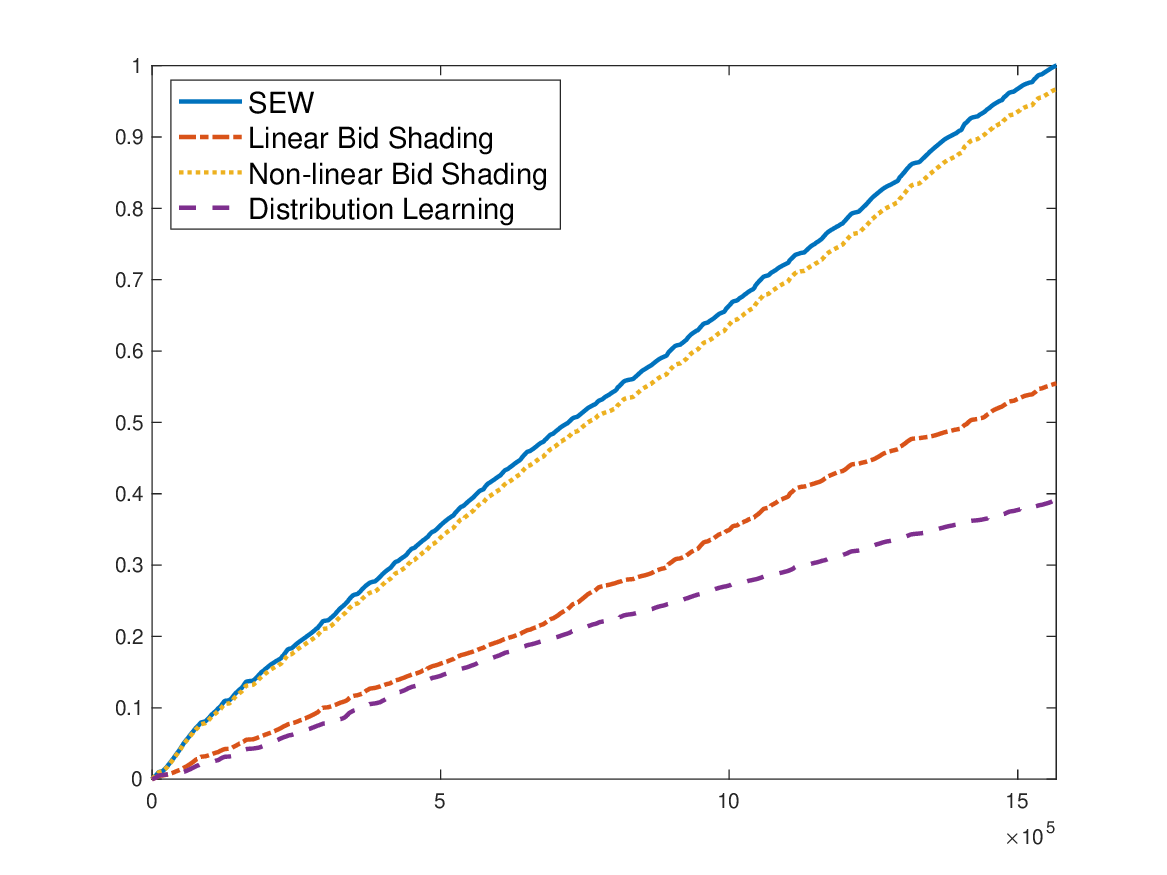}\caption{Dataset C.}
	\end{subfigure}
	\caption{The cumulative rewards (normalized to $[0,1]$) as a function of time in all datasets, where solid lines correspond to the SEW policy, dashdot and dotted lines correspond to the linear and non-linear bid shading policies, respectively, and dashed lines correspond to the distribution learning policy.}\label{fig:result}
\end{figure}

\begin{figure}[t]
	\centering
	\begin{subfigure}[b]{0.32\textwidth}
		\includegraphics[width=\linewidth]{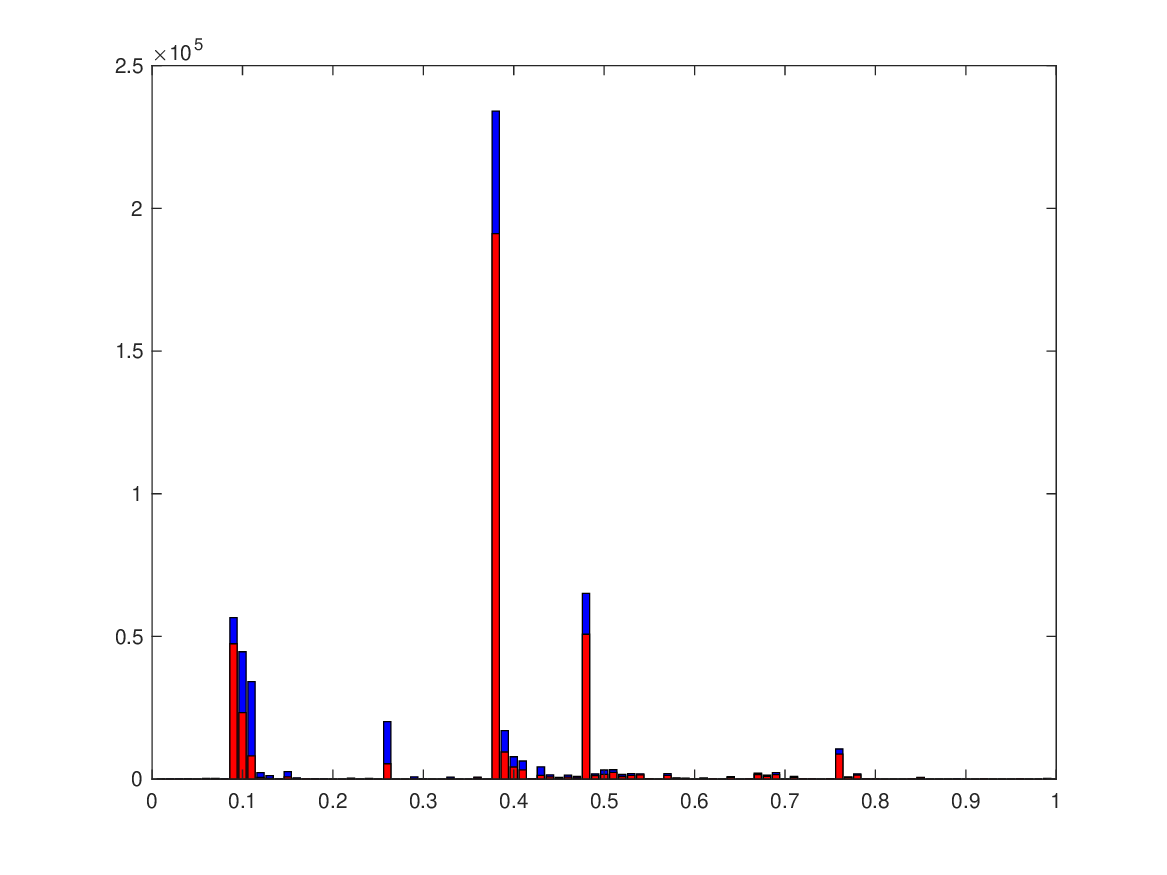}\caption{SEW.}
	\end{subfigure}
	\begin{subfigure}[b]{0.32\textwidth}
		\includegraphics[width=\linewidth]{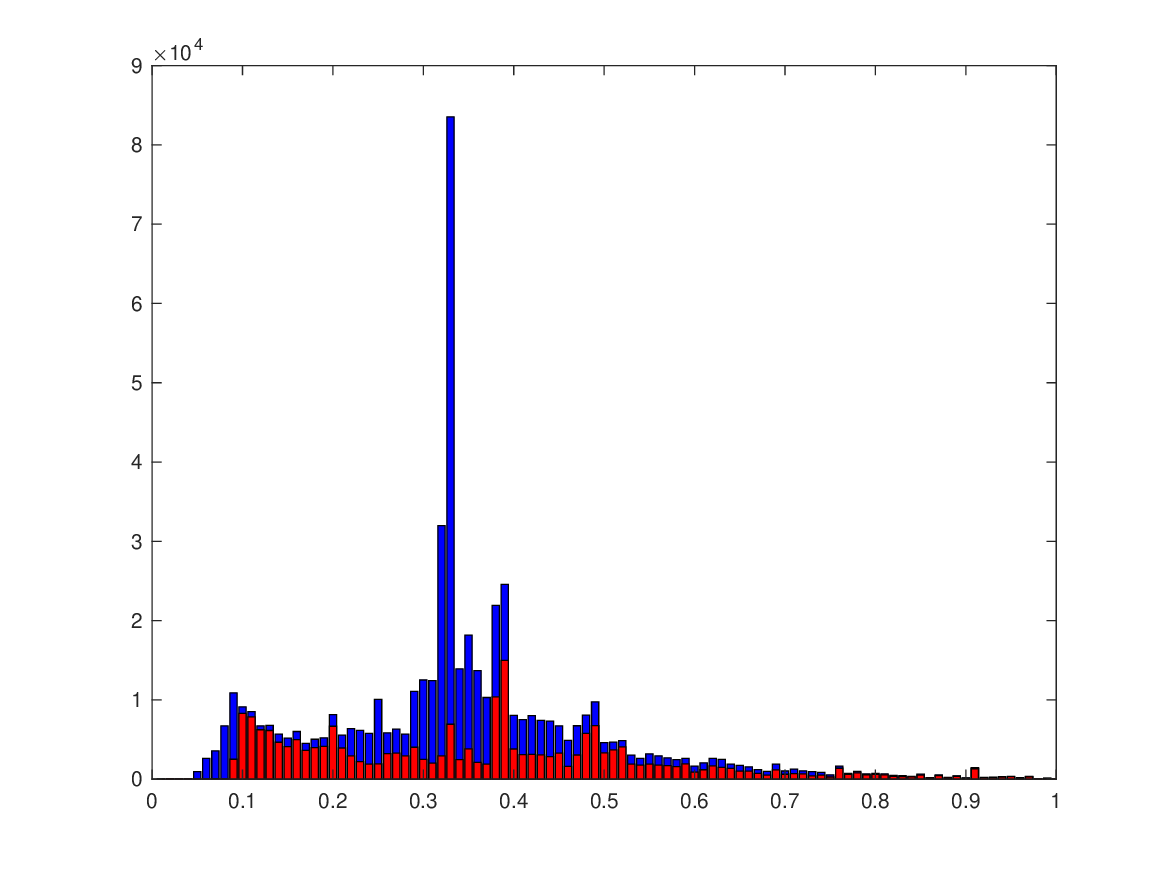}\caption{Non-linear Bid Shading.}
	\end{subfigure}
	\begin{subfigure}[b]{0.32\textwidth}
		\includegraphics[width=\linewidth]{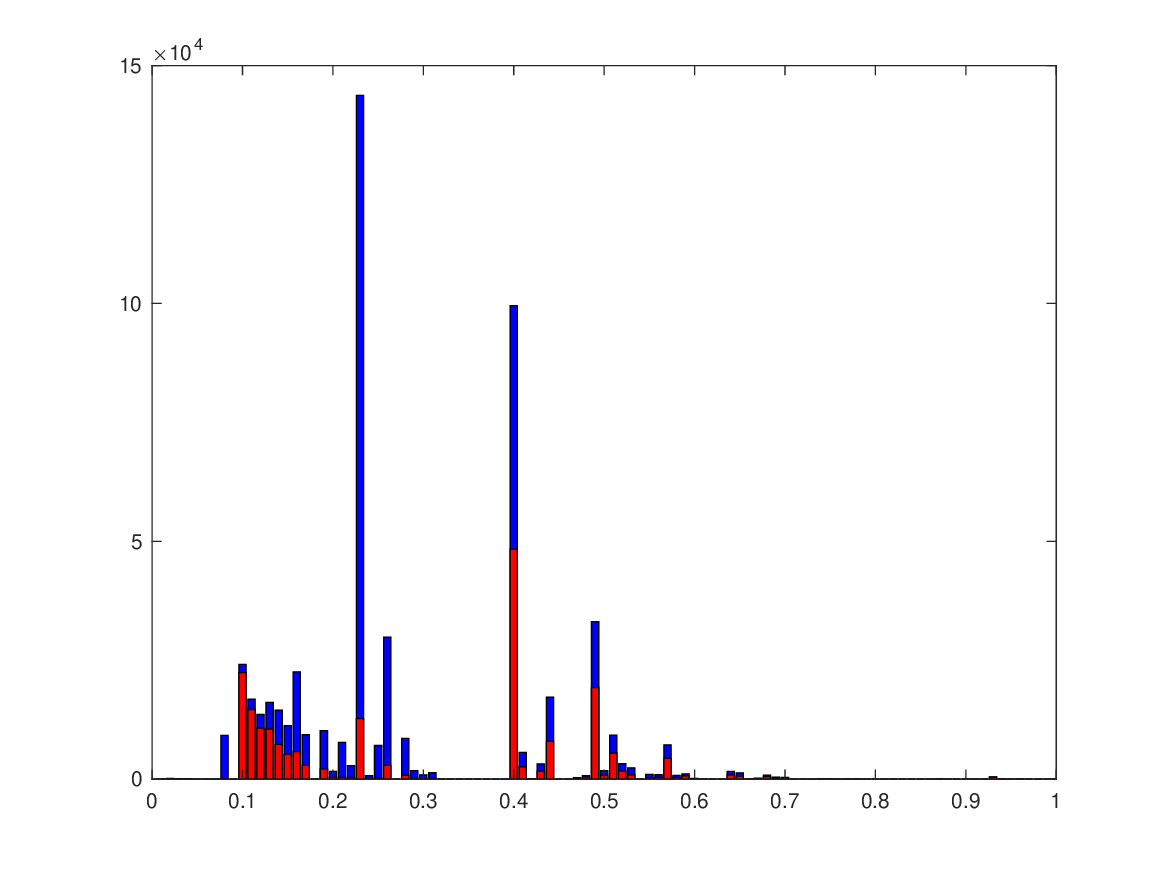}\caption{Distribution Learning.}
	\end{subfigure}
	\caption{The bar plots of the total bids and winning bids for the SEW, non-linear bid shading, and distribution learning policies in dataset A, where the height of the blue rectangle represents the number of bids made in the given range, and the height of the red rectangle represents the number of wins achieved by the bid.}\label{fig:bar_plot}
\end{figure}

\begin{itemize}
	\item The non-linear bid shading policy outperforms the linear one in all datasets and achieves good rewards in both datasets B and C, but it performs poorly in dataset A. To see why, recall that Figure \ref{fig:data_m} shows a distinguishing feature of dataset A, i.e. most HOBs are supported on a few prices. In this case, a good policy should identify these prices and bid a bit higher than one of them, and the HOB is the key contributing factor to the optimal bid. In contrast, any bid shading policy results in bids relying too much on the private values and fails to capture the discrete structure of HOBs. This phenomenon is further illustrated in Figure \ref{fig:bar_plot}, where we plot the histograms of the total and winning bids for each policy in dataset A. We observe that the discrete components of the bids made by the SEW policy almost coincide with those in Figure \ref{fig:data_m}(a), and the distribution learning policy also makes mostly discrete bids. In contrast, the bids made by the non-linear bid shading policy are still continuous, resulting in a smaller number of winnings and further a poor total reward\footnote{One may wonder that the \emph{average} reward of each winning bid should be the right target as opposed to the \emph{total} reward, as bidders typically have budget constraints. However, in the current datasets, the budget constraints have already been incorporated in the private values $v_t$ (meaning that $v_t$ is smaller than the true valuation in view of the budget constraint), and this process is independent of the bidding process. Therefore, the total reward is a more appropriate criterion here.}. 
	\item The distribution learning policy has a good performance in dataset A but performs worse in the other two datasets. There are two reasons for this observation. First, when the distribution of $m_t$ becomes more continuous, it is harder to estimate. Hence, for dataset A where the support of $m_t$ is small, a better distribution learning performance is available, also giving a better reward compared with the bid shading policy. However, the data in the other two datasets are more continuous which lead to a poor performance. Second, and more importantly, the real-world data are highly non-stationary. Specifically, the distribution of $m_t$ highly depends on $v_t$, so the estimator of the unconditional distribution of $m_t$ may not be accurate given a specific $v_t$. For example, in dataset A the correlation coefficient between the sequences $v_t$ and $m_t$ is as large as $0.66$, which is far from independence. 

	\item As opposed to the above competing policies which may not work well in certain dataset, the SEW policy is robust to the different natures of the datasets, and uniformly outperforms other policies in all datasets. Specifically, when HOBs have a large discrete component (i.e. in dataset A), the SEW policy can learn this component quickly and achieves a much higher (around $30\%$ larger) total reward than others. Moreover, when HOBs are mostly continuous, the SEW policy can still adapt to the new nature of data and outperforms the non-linear bid shading policy by a margin around $5\%$. 
\end{itemize}

In summary, both the bid shading and distribution learning policies suffer from certain problems in some datasets, while our SEW policy enjoys a robust performance on different types of data and performs uniformly better.

\appendix
\section{Auxiliary Lemmas}
\begin{lemma}[Bernstein's inequality \cite{bennett1962probability}]\label{lemma.bernstein}
	Let $X$ be a random variable with $\bE[X]=0, \var(X) = \sigma^2$ and $|X|\le 1$ almost surely. Then for each $\lambda\ge 0$, 
	\begin{align*}
	\bE[\exp(\lambda X)] \le \exp\left((e^{\lambda} - \lambda - 1)\sigma^2 \right). 
	\end{align*}
	In particular, if $X_1,\cdots,X_n$ are iid random variables with the same distribution as $X$, then
	\begin{align*}
	\bP\left(\left| \frac{1}{n}\sum_{i=1}^n X_i \right| \ge t\right) \le 2\exp\left(-\frac{nt^2}{2(\sigma^2 + t/3)}\right). 
	\end{align*}
\end{lemma}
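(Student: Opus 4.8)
The plan is to prove the moment-generating-function estimate first by an elementary power-series comparison, and then obtain the two-sided tail bound from it by the Chernoff method together with a convenient choice of the exponential parameter.

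\textbf{Step 1 (the MGF bound).} Fix $\lambda\ge 0$. I would start from the elementary fact that $x^k\le x^2$ whenever $|x|\le 1$ and $k\ge 2$ (for $x\ge 0$ this is clear; for $x<0$ it holds because $x^k\le 0\le x^2$ for odd $k$ and $x^k=|x|^k\le|x|^2=x^2$ for even $k$). Expanding $e^{\lambda x}=1+\lambda x+\sum_{k\ge 2}\lambda^k x^k/k!$ and bounding termwise gives, for all $|x|\le 1$,
\[
e^{\lambda x}\le 1+\lambda x+x^2\sum_{k\ge 2}\frac{\lambda^k}{k!}=1+\lambda x+x^2\bigl(e^\lambda-\lambda-1\bigr).
\]
Since $|X|\le 1$ almost surely, taking expectations and using $\bE[X]=0$ together with $\bE[X^2]=\var(X)=\sigma^2$ yields $\bE[e^{\lambda X}]\le 1+\sigma^2(e^\lambda-\lambda-1)\le\exp\bigl(\sigma^2(e^\lambda-\lambda-1)\bigr)$, the last step being $1+u\le e^u$. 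This is exactly the claimed inequality.

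\textbf{Step 2 (the tail bound).} For the iid sum, a union bound gives $\bP(|n^{-1}\sum_i X_i|\ge t)\le\bP(\sum_i X_i\ge nt)+\bP(\sum_i(-X_i)\ge nt)$; since $-X$ satisfies the same hypotheses as $X$, it suffices to bound one of the two summands. By the exponential Markov inequality, independence, and Step 1, for every $\lambda\ge 0$,
\[
\bP\Bigl(\sum_{i=1}^n X_i\ge nt\Bigr)\le e^{-\lambda nt}\prod_{i=1}^n\bE[e^{\lambda X_i}]\le\exp\Bigl(n\bigl[\sigma^2(e^\lambda-\lambda-1)-\lambda t\bigr]\Bigr).
\]
I would then invoke the elementary inequality $e^\lambda-\lambda-1\le\frac{\lambda^2/2}{1-\lambda/3}$ for $0\le\lambda<3$ (which follows from $k!\ge 2\cdot 3^{k-2}$ for $k\ge 2$, so that $\sum_{k\ge 2}\lambda^k/k!\le\frac{\lambda^2}{2}\sum_{j\ge 0}(\lambda/3)^j$), and plug in $\lambda=t/(\sigma^2+t/3)$. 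A short computation shows $1-\lambda/3=\sigma^2/(\sigma^2+t/3)$, so the bracket collapses to $\frac{t^2}{2(\sigma^2+t/3)}-\frac{t^2}{\sigma^2+t/3}=-\frac{t^2}{2(\sigma^2+t/3)}$; multiplying the resulting estimate by $2$ for the two-sided event gives the statement. (For $t>1$ the probability is zero because $|n^{-1}\sum_i X_i|\le 1$, and $\lambda<3$ holds automatically, so no extra case analysis is needed.)

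\textbf{Main obstacle.} Honestly there is no substantive obstacle here — this is the classical derivation of Bennett's/Bernstein's inequality and everything is elementary. The only places needing care are the two numerical comparisons ($x^k\le x^2$ on $[-1,1]$ for $k\ge 2$, and $e^\lambda-\lambda-1\le\frac{\lambda^2/2}{1-\lambda/3}$) and checking that the specific value $\lambda=t/(\sigma^2+t/3)$ makes the exponent simplify exactly to the claimed form. An alternative would be to optimize $\lambda$ exactly, obtaining the sharper Bennett exponent $-n\sigma^2 h(t/\sigma^2)$ with $h(u)=(1+u)\log(1+u)-u$, and then weaken it; but that route needs one further elementary inequality, so I would prefer the direct argument above.
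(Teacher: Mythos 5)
Your proof is correct: the termwise comparison $x^k\le x^2$ on $[-1,1]$, the bound $e^\lambda-\lambda-1\le\frac{\lambda^2/2}{1-\lambda/3}$, and the choice $\lambda=t/(\sigma^2+t/3)$ all check out, and the exponent collapses to exactly the claimed form. The paper does not prove this lemma at all --- it is stated as a classical auxiliary result with a citation to Bennett (1962) --- and your argument is precisely the standard derivation one would find there, so there is nothing to compare beyond noting that your write-up is a complete and correct substitute for the citation.
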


\begin{lemma}[Fano's inequality \cite{fano1952class}]\label{lemma.fano}
Let $P_1,\cdots,P_n$ be two probability distributions on $(\Omega,\calF)$, and $\Psi: \Omega\to [n]$ be any test. Then
\begin{align*}
\frac{1}{n}\sum_{i=1}^n P_i(\Psi \neq i) \ge 1 - \frac{I(V;X) + \log 2}{\log n},
\end{align*}
where $V\sim \mathsf{Unif}([n])$, and $P_{X|V = i} = P_i$ for all $i\in [n]$. 
\end{lemma}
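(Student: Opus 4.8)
The plan is to reproduce the classical entropy argument for Fano's inequality, built around the binary error indicator. First I would fix the probabilistic model attached to the statement: let $V\sim\mathsf{Unif}([n])$ and let $X$ have conditional law $P_{X\mid V=i}=P_i$, so that the left-hand side is nothing but the average error probability
\[
P_e \;:=\; \bP[\Psi(X)\neq V]\;=\;\sum_{i=1}^n \bP[V=i]\,\bP[\Psi(X)\neq i\mid V=i]\;=\;\frac1n\sum_{i=1}^n P_i(\Psi\neq i),
\]
the middle equality using the uniformity of $V$. Thus it suffices to show $P_e\ge 1-(I(V;X)+\log 2)/\log n$.

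Next I would introduce the error indicator $E:=\1(\Psi(X)\neq V)$, a $\{0,1\}$-valued random variable, and compute the conditional entropy $H(V\mid X)$ by expanding $H(E,V\mid X)$ with the chain rule in two ways. On one side, $E$ is a deterministic function of the pair $(V,X)$ --- given $X$ the test value $\Psi(X)$ is fixed --- so $H(E\mid V,X)=0$ and hence $H(E,V\mid X)=H(V\mid X)$. On the other side, $H(E,V\mid X)=H(E\mid X)+H(V\mid E,X)\le H(E)+H(V\mid E,X)\le \log 2 + H(V\mid E,X)$, where the last bound uses that $E$ is binary. For the residual term I would condition on the value of $E$: on $\{E=0\}$ one has $V=\Psi(X)$, a function of $X$, so that conditional entropy is $0$; on $\{E=1\}$ the variable $V$ is supported on the $n-1$ labels different from $\Psi(X)$, so the conditional entropy is at most $\log(n-1)\le\log n$. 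Averaging gives $H(V\mid E,X)\le P_e\log n$, hence $H(V\mid X)\le \log 2 + P_e\log n$.

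To conclude I would use that $V$ is uniform, so $H(V)=\log n$ and $H(V\mid X)=H(V)-I(V;X)=\log n - I(V;X)$; substituting into the previous display and rearranging yields exactly $P_e\ge 1-(I(V;X)+\log 2)/\log n$. I do not anticipate a real obstacle here: every step is either a one-line entropy identity (the chain rule, or $H(V\mid X)=H(V)-I(V;X)$) or a crude bound ($H(E)\le\log 2$, supports of size at most $n-1$). The only points that merit a word of care are the measurability claim that $E$ is a function of $(V,X)$, which holds because $\Psi$ is a fixed deterministic test, and the trivial edge case $n=1$, in which $\log n=0$ and the inequality is interpreted as vacuous; the statement is used only with $n\ge 2$.
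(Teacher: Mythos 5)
Your proof is correct. The paper does not prove this lemma at all---it is stated as a classical auxiliary result with a citation to Fano's original work---so there is no in-paper argument to compare against. What you give is the standard textbook derivation: expand $H(E,V\mid X)$ two ways via the chain rule, use $H(E\mid V,X)=0$, bound $H(E\mid X)\le \log 2$ and $H(V\mid E,X)\le P_e\log(n-1)$, and then invoke $H(V\mid X)=\log n - I(V;X)$ from uniformity of $V$. Every step checks out, including the slightly lossy replacement of the binary entropy $h(P_e)$ by $\log 2$, which is exactly the form the paper states and uses. Your remarks on the measurability of $E$ and the vacuous case $n=1$ are appropriate; in the paper the lemma is only invoked with $n=K\ge 2$ experts, where the bound is meaningful.
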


\section{Deferred Proofs}

\subsection{Proof of Theorem \ref{thm:good_expert}}\label{appendix:good_expert}
\subsubsection{Proof of the Upper Bound}
We show that the upper bound of Theorem \ref{thm:good_expert} holds with $C=132$. Since the exponential-weighting algorithm is symmetric to all experts, we may assume that expert $1$ is good. Moreover, since both the probabilities $p_{t,a}$ and the regret remain unchanged if we replace all instantaneous rewards $r_{t,a}$ by $r_{t,a} - r_{t,1}$, we may assume that $r_{t,1}\equiv 0$ and $r_{t,a}\in [-1,\Delta]$ by the $\Delta$-good assumption. Next, as in the standard analysis of the exponential weighting, we define
\begin{align*}
\Phi_t = \frac{1}{K}\sum_{a=1}^K \exp\left(\eta_{t}\sum_{s<t} r_{s,a}\right), \qquad t=1,\cdots,T+1. 
\end{align*}
To handle the time-varying learning rate, we also define
\begin{align*}
\Phi_{t+1}' =  \frac{1}{K}\sum_{a=1}^K \exp\left(\eta_{t}\sum_{s<t+1} r_{s,a}\right), \qquad t=1,\cdots,T. 
\end{align*}

Then for $t\in [T]$, 
\begin{align}\label{eq:potential_function}
\frac{\Phi_{t+1}'}{\Phi_t} = \sum_{a=1}^K p_{t,a}\cdot \exp(\eta_{t} r_{t,a}) = \bE[\exp(\eta_{t} X_t)], 
\end{align}
where $X_t$ is a random variable taking value $r_{t,a}$ with probability $p_{t,a}$, for all $a\in [K]$. Since $X_t\in [-1,\Delta]$ almost surely, 
\begin{align*}
\var(X_t) &\le \bE[(\Delta - X_t)^2] \\
&\le (1+\Delta) \cdot\bE[\Delta - X_t] \le 2(\Delta - \bE[X_t]),
\end{align*}
and therefore Lemma \ref{lemma.bernstein} implies that
\begin{align}\label{eq:mgf}
\bE[\exp(\eta_{t} X_t)] &\le \exp\left(\eta_{t}\bE[X_t] + \left(e^{\eta_{t}} - \eta_{t} -  1\right)\var(X_t)\right) \nonumber\\
&\le \exp\left(\eta_{t}\bE[X_t] + 2\eta_{t}^2(\Delta - \bE[X_t]) \right),
\end{align}
where the last inequality follows from $\eta_t \le 1$ and $e^x\le 1+x+x^2$ whenever $x\le 1$. Combining \eqref{eq:potential_function} and \eqref{eq:mgf}, we have
\begin{align*}
\frac{\log \Phi_{t+1}'}{\eta_{t}} - \frac{\log \Phi_t}{\eta_{t}} \le \sum_{a=1}^K p_{t,a}r_{t,a} + 2\eta_t\left(\Delta - \sum_{a=1}^K p_{t,a}r_{t,a} \right). 
\end{align*}
Moreover, as $\eta_{t} \ge \eta_{t+1}$, the non-decreasing property of the map $\eta\in \bR \mapsto (n^{-1}\sum_{i=1}^n x_i^\eta)^{1/\eta}$ for any non-negative reals $x_1,\cdots,x_n$ leads to $\eta_{t+1}^{-1}\log\Phi_{t+1}\le \eta_{t}^{-1}\log\Phi_{t+1}'$. Hence, the previous inequality implies that
\begin{align*}
\frac{\log \Phi_{t+1}}{\eta_{t+1}} - \frac{\log \Phi_t}{\eta_{t}} \le \sum_{a=1}^K p_{t,a}r_{t,a} + 2\eta_t\left(\Delta - \sum_{a=1}^K p_{t,a}r_{t,a} \right), 
\end{align*}
and a telescoping argument leads to 
\begin{align}\label{eq:telescope}
\frac{\log \Phi_{t+1}}{\eta_{t+1}} - \frac{\log \Phi_1}{\eta_1} &\le \sum_{s=1}^t\sum_{a=1}^K p_{s,a}r_{s,a} + 2\sum_{s=1}^t \eta_s\left(\Delta - \sum_{a=1}^K p_{s,a}r_{s,a}\right) \nonumber \\
&\le \sum_{s=1}^t(1-2\eta_s)\sum_{a=1}^K p_{s,a}r_{s,a} + 2\sqrt{t\Delta\log K}
\end{align}
for all $t=1,2,\cdots,T$, and in the last inequality we have used $\eta_t\le \sqrt{(\log K)/(\Delta t)}$ and $\sum_{s=1}^t 1/\sqrt{s}\le 2\sqrt{t}$. The previous steps are partially inspired by \cite[Lemma 1]{gyorfi2007sequential}. 

Note that in the classical proof we only need to plug in $t=T$ in \eqref{eq:telescope}; however, here the coefficient of the expected instantaneous reward is changing over time, and it will turn out that the inequality \eqref{eq:telescope} with all $t\in[T]$ is required. The following steps are partially inspired by \cite{auer2002adaptive}. Note that by definition of $\Phi_{t+1}$, we have
\begin{align*}
\log \Phi_{t+1} \ge \eta_{t+1}\cdot \max_{a\in [K]} \sum_{s\le t} r_{s,a} - \log K,
\end{align*}
and therefore \eqref{eq:telescope} with $\Phi_1 = 1$ gives that for $t=1,2,\cdots,T$, 
\begin{align}\label{eq:regret_bound_initial}
\max_{a\in [K]} \sum_{s\le t} r_{s,a} \le \sum_{s=1}^t(1-2\eta_s)\sum_{a=1}^K p_{s,a}r_{s,a} + 4\sqrt{t\Delta\log K} + 4\log K. 
\end{align}
We deduce the desired regret upper bound from the inequality \eqref{eq:regret_bound_initial}. Since $r_{t,1}\equiv 0$, the LHS of \eqref{eq:regret_bound_initial} is always non-negative. Consequently, for all $t\in [T]$, 
\begin{align*}
S_t \triangleq \sum_{s=1}^t(1-2\eta_s)\sum_{a=1}^K p_{s,a}r_{s,a} \ge -4\sqrt{t\Delta\log K} - 4\log K. 
\end{align*}
Hence, with the convention $S_0 \triangleq 0$, we have
\begin{align*}
\sum_{t=1}^T \eta_t \sum_{a=1}^K p_{t,a}r_{t,a} 
&= \sum_{t=1}^T \frac{\eta_t}{1-2\eta_t}(S_t - S_{t-1}) \\
&= \sum_{t=1}^{T-1} S_t\left(\frac{\eta_t}{1-2\eta_t} - \frac{\eta_{t+1}}{1-2\eta_{t+1}} \right) + S_T\cdot \frac{\eta_T}{1-2\eta_T} \\
&\stepa{\ge} \sum_{t=1}^{T-1} -4(\sqrt{t\Delta\log K}+\log K)\left(\frac{\eta_t}{1-2\eta_t} - \frac{\eta_{t+1}}{1-2\eta_{t+1}} \right)  - 4(\sqrt{T\Delta\log K}+\log K)\cdot \frac{\eta_T}{1-2\eta_T} \\
&\stepb{\ge} -16\sum_{t=1}^{T-1}(\sqrt{t\Delta\log K}+\log K)\cdot (\eta_t - \eta_{t+1}) - 16(\sqrt{T\Delta\log K}+\log K)\cdot \eta_T\\
&\stepc{\ge} -16 \sum_{t=1}^{T-1}\sqrt{t\Delta\log K}\cdot \sqrt{\frac{\log K}{\Delta t^3}} - 16\sqrt{T\Delta\log K}\cdot \sqrt{\frac{\log K}{\Delta T}} - 16\eta_1\log K\\
&\ge -16(1+\log T)\log K - 32\log K \\
&= -16(3+\log T)\log K,
\end{align*}
where (a) follows from \eqref{eq:regret_bound_initial}, $\eta_{t+1}\le \eta_t\le 1/4$ and the increasing property of $x\in [0,1/4]\mapsto x/(1-2x)$, (b) is due to the elementary inequality 
\begin{align*}
\frac{x}{1-2x} - \frac{y}{1-2y} = \frac{x-y}{(1-2x)(1-2y)} \le 4(x-y)
\end{align*}
for $1/4\ge x\ge y>0$, and (c) follows from the choice of $\eta_t$ and $t^{-1/2} - (t+1)^{-1/2} \le t^{-3/2}$. Hence, now choosing $t=T$ in \eqref{eq:regret_bound_initial}, we obtain
\begin{align*}
&\max_{a\in [K]} \sum_{s\le t} r_{s,a} - \sum_{t=1}^T\sum_{a=1}^K p_{t,a}r_{t,a}\\ &\le 4\sqrt{T\Delta\log K} +4\log K- 2\sum_{t=1}^T \eta_t\sum_{a=1}^K p_{t,a}r_{t,a} \nonumber\\
&\le  4\sqrt{T\Delta\log K} + 32(4+\log T)\log K, 
\end{align*}
giving the second statement of Theorem \ref{thm:good_expert}. Since it is further upper bounded by $132\sqrt{T\Delta\log K}$ as long as $\Delta\ge T^{-1}(1+\log T)^2\log K$, we arrive at the claimed upper bound.

\subsubsection{Proof of the Lower Bound}
The main result of this section is to show that the $\Theta(\sqrt{T\Delta\log K})$ regret is minimax rate-optimal, with the constant $c=1/16$ in Theorem \ref{thm:good_expert}. 

The lower bound proof relies on a standard application of testing multiple hypotheses, where the learner cannot distinguish between a carefully designed class of reward distributions and therefore incurs a large regret. Specifically, consider the following class of expert rewards $(r_{t,a})_{t\in[T], a\in[K]}$: for each $t\in [T]$, the reward vector $(r_{t,a})_{a\in [K]}$ is random and follows the following joint distribution: 
\begin{enumerate}
	\item with probability $\frac{1}{2}$, the reward vector is $(1-\Delta, 1, \cdots, 1)$; 
	\item with probability $\frac{1-4\Delta}{2(1-2\Delta)}$, the reward vector is $(0,0,\cdots,0)$; 
	\item with remaining probability $\frac{\Delta}{1-2\Delta}$, the reward vector is $(1-\Delta, r_2, \cdots, r_K)$, where $r_i \sim \mathsf{Bern}(p_i)$ for $i=2,3,\cdots,K$ are mutually independent. 
\end{enumerate}
Moreover, the rewards across different times are mutually independent. To ensure all probabilities lie in $[0,1]$, we assume throughout this subsection that $\Delta\le 1/4$, as a smaller $\Delta$ always makes the minimax regret smaller and thus suffices for the proof of the lower bound. The construction above satisfies two properties: first, expert $1$ is always $\Delta$-good; second, the expected reward of the expert $i\in [K]$ is
\begin{align}\label{eq:mean_reward}
\bE[r_{t,i}] = \begin{cases}
\frac{1-\Delta}{2(1-2\Delta)} & \text{if } i = 1, \\
\frac{1-\Delta}{2(1-2\Delta)} + \frac{\Delta}{1-2\Delta}\left(p_i - \frac{1}{2}\right) & \text{if } 2\le i\le K.
\end{cases}
\end{align}
As a result, the vector $(p_2,\cdots,p_K)$ modulates the reward information of the experts. 

Next, we choose $K$ different sets of parameters $(p_2,\cdots,p_K)$ that correspond to different scenarios, with the following specific choices: 
\begin{align*}
p_i^{(j)} = \frac{1}{2} - \delta + 2\delta\cdot \1(i=j), \qquad i\in \{2,3,\cdots,K\}, j\in [K], 
\end{align*}
where $\delta\in (0,1/4)$ is some parameter yet to be chosen. The main properties of the above construction are as follows: 
\begin{enumerate}
	\item For $j\in [K]$, expert $j$ has the highest expected reward in the $j$-th scenario, and choosing any other expert incurs an instantaneous (pseudo-)regret at least $2\Delta\delta$ according to \eqref{eq:mean_reward}; 
	\item For $j=2,\cdots,K$, the $j$-th scenario differs from the first one only through the choice of $p_j$. 
\end{enumerate}
Let $\bE^{(j)}$ denote the expectation under the $j$-th scenario, and $\bP^{(j)}$ denote the corresponding expectation. Then for any policy $\pi=(a_1,\cdots,a_T)$, we have
\begin{align}\label{eq:lower_bound}
\sup_{(r_{t,a})} R_T(\pi) &\stepa{\ge} \frac{1}{K}\sum_{j=1}^K \bE^{(j)}[R_T(\pi)] \nonumber\\
&= \frac{1}{K}\sum_{j=1}^K \sum_{t=1}^T \bE^{(j)}\left[\max_{a\in [K]}r_{t,a} - r_{t,a_t} \right] \nonumber\\
&\stepb{\ge} \frac{1}{K}\sum_{j=1}^K  \sum_{t=1}^T \left[\max_{a\in [K]}\bE^{(j)}[r_{t,a}] - \bE^{(j)}[r_{t,a_t}] \right] \nonumber\\
&\stepc{\ge} \frac{1}{K}\sum_{j=1}^K  \sum_{t=1}^T 2\Delta\delta\cdot \bP^{(j)}(a_t \neq j) \nonumber\\
&\stepd{\ge} 2T\Delta \cdot \delta \left(1 - \frac{I(V;X) + \log 2}{\log K}\right),
\end{align}
where (a) follows from the fact that the maximum is no smaller than the average, (b) follows from the linearity of expectation and the inequality $\bE[\max_n X_n]\ge \max_n \bE[X_n]$, (c) follows from the first property of the choice of $\bP^{(j)}$, and (d) is due to the Fano's inequality (cf. Lemma \ref{lemma.fano} in the appendix) applied to $V \sim \mathsf{Unif}([K])$, $X = (r_{t,a})_{t\in[T], a\in[K]}$, and $P_{X|V=j} = \bP^{(j)}$. By \eqref{eq:lower_bound}, it suffices to prove an upper bound of the mutual information $I(V;X)$. Using the variational representation of the mutual information $I(V;X) = \min_{Q_X} \bE_V[D_{\text{KL}}(P_{X|V} \| Q_X)]$, we have
\begin{align}\label{eq:mutual_info}
I(V;X) &\le \frac{1}{K}\sum_{j=1}^K D_{\text{KL}}(\bP^{(j)} \| \bP^{(1)}) \nonumber\\
&\stepa{=} \frac{1}{K}\sum_{j=1}^K \sum_{t=1}^T D_{\text{KL}}(\bP_t^{(j)} \| \bP_t^{(1)}) \nonumber\\
&\stepb{\le}  \frac{T\Delta}{1-2\Delta} D_{\text{KL}}( \mathsf{Bern}(\frac{1}{2} + \delta) \| \mathsf{Bern}(\frac{1}{2} - \delta)) \nonumber \\
&\stepc{\le} 2T\Delta\cdot 32\delta^2,
\end{align}
where (a) is due to the chain rule of the KL divergence where $\bP_t^{(j)}$ denotes the distribution of rewards at time $t$ in the $j$-th scenario, (b) follows from the second property of $\bP^{(j)}$ and the data processing inequality, and (c) follows from $1-2\Delta\ge 1/2$ and $D_{\text{KL}}( \mathsf{Bern}(1/2 + \delta) \| \mathsf{Bern}(1/2 - \delta))\le 32\delta^2$ as long as $\delta\le 1/4$. Finally, combining \eqref{eq:lower_bound}, \eqref{eq:mutual_info} and choosing
\begin{align*}
\delta = \frac{1}{16}\sqrt{\frac{\log K}{T\Delta}} \le \frac{1}{4}
\end{align*}
gives the claimed lower bound in Theorem \ref{thm:good_expert} with $c=1/16$.  

\subsection{Proof of Lemma \ref{lemma:continuity}}
We show that for any $v,m \in [0,1]$ and $b\le \min\{v,b'\}$, it holds that 
\begin{align}\label{eq:right_lipschitz}
	r(b;v,m) \le r(b';v,m) + (b' - b). 
\end{align}
In fact, straightforward algebra gives
\begin{align*}
	r(b';v,m) &= (v - b')\1(b'\ge m) \\
	&\ge (v-b)\1(b'\ge m) - (b' - b)\\
	&\ge (v-b)\1(b\ge m) - (b' - b) \\
	&= r(b;v,m) - (b'-b), 
\end{align*}
establishing \eqref{eq:right_lipschitz}. As a result, since $f(v)\le g(v)$ for all $v\in [0,1]$, it holds that
\begin{align*}
	r_t(f) &= r(f(v_t);v_t,m_t) \\
	&\le r(g(v_t);v_t,m_t) + (g(v_t) - f(v_t)) \\
	&\le r_t(g) + \|f-g\|_\infty. 
\end{align*}
Now summing over $t=1,2,\cdots,T$ gives the desired result.

\subsection{Proof of Theorem \ref{thm:chew}}
For any $m\in [M]$ and $f_{m-1}\in \calN_{m-1}$, the mixture distribution $P_{t,f_{m-1}}$ used by the manager $f_{m-1}$ is in turn a mixture over her employees, with the mixture $Q_{t,f_{m-1}}$ proportional to the exponential weights of the past cumulative rewards of the employees. Moreover, among the employees there is a $\Delta_{m-1}$-good expert $f_{m-1}^\star$. Hence, the regret upper bound of Theorem \ref{thm:good_expert} implies that for every $f_m\in \calN_m, f_{m-1}\in \calN_{m-1}$ with $f_m \to f_{m-1}$, 
\begin{align}\label{eq:manager_regret_bound}
	&\sum_{t=1}^T\left( \bE_{f_t\sim P_{t,f_m}}[r_t(f_t)] - \bE_{f_t\sim P_{t,f_{m-1}}}[r_t(f_t)] \right)\nonumber \\
	&= \sum_{t=1}^T\left( \bE_{f_t\sim P_{t,f_m}}[r_t(f_t)] - \bE_{f_m\sim Q_{t,f_{m-1}}} \bE_{f_t\sim P_{t,f_{m}}}[r_t(f_t)] \right) \nonumber\\
	&\le C\left( \sqrt{\frac{C_{\text{\rm Lip}}T\Delta_{m-1}}{\varepsilon_m}}  + (1+\log T)\cdot \frac{C_{\text{\rm Lip}}}{\varepsilon_m} \right). 
\end{align}
Using the telescoping in \eqref{eq:regret_decomposition}, we have
\begin{align*}
 R_T(\pi^{\text{\rm ChEW}}) 
	&\le T\varepsilon_M + \sum_{m=1}^M C\left( \sqrt{\frac{C_{\text{\rm Lip}}T\Delta_{m-1}}{\varepsilon_m}} + (1+\log T)\cdot \frac{C_{\text{\rm Lip}}}{\varepsilon_m} \right) \nonumber \\
	&\le 2\sqrt{T} + \sum_{m=1}^M C\left(\sqrt{\frac{C_{\text{\rm Lip}}T\cdot 2^{-m+3}}{2^{-m}}} + (1+\log T)\cdot \frac{C_{\text{Lip}}}{2^{-m}}\right) \nonumber \\
	&\le \left(2 + C\sqrt{2C_{\text{\rm Lip}} }\log T + 2CC_{\text{\rm Lip}}(1+\log T)\right)\sqrt{T}, 
\end{align*}
where we note that by our choice of $\varepsilon_m = 2^{-m}$, it holds that $\Delta_m = 2\sum_{r=m}^{M-1} \varepsilon_r \le 2\sum_{r=m}^{M-1} 2^{-r} < 2^{-m+2}$. This establishes Theorem \ref{thm:chew}. 

\subsection{Proof of Lemma \ref{lemma:employee}}
We distinguish into three cases. If the image set $f(I_{\ell+1,m'})$ include some element in $J_{\ell+1,2u+1}\backslash J_{\ell+1,2u}$, then by the $1$-Lipschitzness of $f$ and $|I_{\ell+1,m'}|=2^{-\ell-2}$, the total variation of $f$ in $I_{\ell+1,m'}$ is at most $2^{-\ell-2}$. Hence, by construction of the intervals in \eqref{eq:interval_J}, we conclude that $f(I_{\ell+1,m'}) \subseteq J_{\ell+1,2u+1}$. Similarly, if $f(I_{\ell+1,m'})$ include some element in $J_{\ell+1,2u-1}\backslash J_{\ell+1,2u}$, we will have $f(I_{\ell+1,m'}) \subseteq J_{\ell+1,2u-1}$. If neither of the above cases holds, it is then clear that $f(I_{\ell+1,m'}) \subseteq J_{\ell+1,2u}$, as desired. 

\subsection{Proof of Theorem \ref{thm:SEW}}
We first analyze the complexity of the SEW policy. For the space complexity, the initialization step needs to keep track of all cumulative rewards of all experts, which takes 
\begin{align*}
	\sum_{\ell = 1}^L M_{\ell}(U_{\ell} + W_{\ell}) = O\left(\sum_{\ell=1}^L 2^{2\ell} \right) = O\left(2^{2L}\right) = O(T)
\end{align*}
space. As for other steps, only some temporary variables need to be stored each time for the unique bin at each level, and therefore they require $O(\sum_{\ell=1}^L (U_{\ell} + W_{\ell}) ) = O(2^L) = O(\sqrt{T})$ additional space. Hence, the overall space complexity of the algorithm is $O(T)$. 

Next we turn to the time complexity. As before, the initialization step only involves the assignments of $O(T)$ variables and thus takes $O(T)$ time. As for the updates at each time, Step 1 of Algorithm \ref{algo:sew} evaluates the EW probability $W_{\ell}$ times at each level $\ell\in [L]$. Since the EW algorithm (cf. Algorithm \ref{algo:ew}) only takes $O(1)$ time to evaluate the probability vector supported on $4$ elements, Step 1 only takes $O(\sum_{\ell=1}^L W_{\ell}) = O(\sqrt{T})$ time at each round. At Step 2, the bidder only needs to sample a random variable at each round, which takes $O(\log T)$ time in total. As for Step $3$, each reward update in \eqref{eq:update_reward} only takes $O(1)$ time, and thus it takes $O(\sum_{\ell=1}^L (U_{\ell} + W_{\ell}) ) = O(\sqrt{T})$ time in total. Hence, the time complexity of Algorithm \ref{algo:sew} at each round is $O(\sqrt{T})$, and the overall time complexity is $O(T^{3/2})$, as claimed. 

Finally we show the claimed upper bound on the regret. It suffices to show that, for any oracle strategy $f\in \calF_{\text{\rm Lip}}$, the reward difference between the strategy $f$ and the SEW policy $\pi^{\text{SEW}}$ is upper bounded by $O(\sqrt{T}\log T)$. To this end, we aim to construct a chain of policies $\pi_0, \pi_1, \cdots, \pi_L, \pi_{L+1}$ with $\pi_0 = \pi^{\text{SEW}}$, $\pi_{L+1} = \{f(v_t)\}_{t\in[T]}$ and the reward differences between adjacent policies are upper bounded. Specifically, based on $f$, let
\begin{align*}
	\widetilde{f}(v) = \frac{1}{M_L}\left\lceil M_L\sup_{(i-1)/M_L < v \le i/M_L} f(v) \right\rceil, 
\end{align*}
for $\frac{i-1}{M_L} < v\le \frac{i}{M_L}$, be an upper approximation of $f$ which is piecewise constant, where we recall that $M_L = 2^{L+1}$ as given in Algorithm \ref{algo:sew}. Clearly, by the $1$-Lipschitz property of $f$, it holds that $\widetilde{f} - 2/M_L \le f \le \widetilde{f}$ everywhere. Hence, let $\pi_L$ be the policy which bids $\widetilde{f}(v_t)$ at time $t$, Lemma \ref{lemma:continuity} gives
\begin{align}\label{eq:regret_last_level}
	\sum_{t=1}^T r(\pi_{L+1}(t); v_t,m_t) - \sum_{t=1}^T r(\pi_{L}(t); v_t,m_t) \le \frac{2T}{M_L} \le 2\sqrt{T}, 
\end{align}
where the last step is due to the choice of $L = \lfloor \log_2 \sqrt{T} \rfloor$. 

\begin{figure}[!t]
	\centering
	\begin{tikzpicture}
		\draw [thick, <->] (0,9) -- (0,0) -- (10,0); 
		\node [below] at (10,0) {private value}; \node [left] at (0,9) {bid}; 
		\draw [dashed] (1,0) -- (1,8); 	\draw [dashed] (2,0) -- (2,8); 	\draw [dashed] (3,0) -- (3,8); 
		\draw [dashed] (4,0) -- (4,8); 	\draw [dashed] (5,0) -- (5,8); 	\draw [dashed] (6,0) -- (6,8);
		\draw [dashed] (7,0) -- (7,8); 	\draw [dashed] (8,0) -- (8,8);  
		\draw [dashed] (0,1) -- (8,1); 	\draw [dashed] (0,2) -- (8,2); 
		\draw [dashed] (0,3) -- (8,3); 	\draw [dashed] (0,4) -- (8,4);
		\draw [dashed] (0,5) -- (8,5); 	\draw [dashed] (0,6) -- (8,6);
		\draw [dashed] (0,7) -- (8,7); 	\draw [dashed] (0,8) -- (8,8);
		\node [below] at (8,0) {$1$}; \node [left] at (0,8) {$1$}; \node [below] at (0,0) {$0$};   
		\draw [thick] (0,2) -- (1,2) -- (1,3) -- (2,3) -- (2,4) -- (3,4) -- (3,5) -- (4,5) -- (4,6) -- (5,6) -- (5,7) -- (6,7) -- (6,8) -- (7,8) -- (7,7) -- (8,7);
		\node [right] at (8,7) {$\widetilde{f}$}; 
		\draw [thick, blue] (0.05, 1.05) rectangle (0.95, 2.95);
		\draw [thick, blue] (1.05, 2.05) rectangle (1.95, 3.95); 
		\draw [thick, blue] (2.05, 3.05) rectangle (2.95, 4.95);  
		\draw [thick, blue] (3.05, 4.05) rectangle (3.95, 5.95);  
		\draw [thick, blue] (4.05, 5.05) rectangle (4.95, 6.95);  
		\draw [thick, blue] (5.05, 6.05) rectangle (5.95, 7.95);  
		\draw [thick, blue] (6.05, 7.95) -- (6.95, 7.95); 
		\draw [thick, blue] (7.05, 6.05) rectangle (7.95, 7.95);
		\draw [thick, red] (0.01,0.01) rectangle (1.99, 3.99);   
		\draw [thick, red] (2.01,2.01) rectangle (3.99, 5.99);   
		\draw [thick, red] (4.01,4.01) rectangle (5.99, 7.99);  
		\draw [thick, red] (6.01,4.01) rectangle (7.99, 7.99);   
		\draw [thick, dotted] (0,1.9) -- (1,1.7) -- (2,2.6) -- (3,3.4) -- (4,4.4) -- (5,5.3) -- (6,6.3) -- (6.8, 7.1) -- (7, 6.9) -- (8, 6.1);
		\node [right] at (8, 6.1) {$f$}; 
		\node [below, blue] at (7.5, 6) {$\pi_2$}; \node [below, red] at (6.5,4) {$\pi_1$}; 
	\end{tikzpicture}
	\caption{An example of the chain of policies with $L=2$. The dotted and solid lines represent the $1$-Lipschitz strategy $f$ and its piecewise constant approximation $\widetilde{f}$, respectively. The small blue rectangles (or segments for dummy experts) represent the level-$2$ experts associated with $\widetilde{f}$ on each bin, and the large red rectangles represent the level-$1$ experts associated with the previous level-$2$ experts.} \label{fig.chain}
\end{figure}

It then remains to specify the policies $\pi_1, \cdots, \pi_{L-1}$. First, by the $1$-Lipschitz property of $f$, it is easy to see that the value difference of $\widetilde{f}$ at adjacent pieces is at most $1/M_L$. Moreover, each piece of $\widetilde{f}$ is associated with a level-$L$ expert on each bin. Therefore, following the same lines as the proof of Lemma \ref{lemma:employee}, any pair of the previous level-$L$ experts on each level-$(L-1)$ bin is an employee of a level-$(L-1)$ manager\footnote{This holds true even if a level-$L$ expert is a dummy expert, while the level-$(L-1)$ manager is never dummy. See also the seventh piece of Figure \ref{fig.chain}.}. Consequently, we further have a sequence of level-$(L-1)$ experts on each bin, and this process can be continued until we reach level $1$. An example with $L=2$ is illustrated in Figure \ref{fig.chain}. Now let $\pi_{\ell}$ be the policy which follows the previously chosen level-$\ell$ experts inside each level-$\ell$ bin, and it is clear that in this way $\pi_{L}$ exactly bids the same price as $\widetilde{f}$. 

To upper bound the reward difference between each adjacent levels, we utilize the key property that the experts followed by the policy $\pi_{\ell+1}$ in each bin are employees of the expert followed by the (random) policy $\pi_{\ell}$. Moreover, due to the presence of the dummy employee at each level, any level-$\ell$ manager has a good employee with suboptimality gap at most $2^{-\ell}$. Hence, for each $\ell\in \{0,1,\cdots,L-1\}$, we have (recall that $r_t(b):=r_t(b;v_t,m_t)$)
\begin{align}\label{eq:regret_middle_level}
	\sum_{t=1}^T \left( r_t(\pi_{\ell+1}(t)) - \bE[r_t(\pi_{\ell}(t))] \right) &= \sum_{m=1}^{M_{\ell+1}}\sum_{t\in [T]: v_t\in I_{\ell+1,m}}\left( r_t(\pi_{\ell+1}(t)) - \bE[r_t(\pi_{\ell}(t))] \right) \nonumber \\
	&\stepa{\le} \sum_{m=1}^{M_{\ell+1}} C\left(\sqrt{T_{\ell+1,m}\cdot 2^{-\ell}\log 4} + (1+\log T)\cdot \log 4\right) \nonumber\\
	&\stepb{\le} C\sqrt{M_{\ell+1}T\cdot 2^{-\ell}\log 4 } + C(1+\log T)M_{\ell+1}\cdot \log 4 \nonumber\\
	&\le 4C\sqrt{T} + 4C(1+\log T)2^{\ell}, 
\end{align}
where (a) is due to that the policy $\pi_{\ell}(t)$ is running an independent EW algorithm on the subsequence $v_t\in I_{\ell+1,m}$ for each $m\in [M_{\ell+1}]$, where $\pi_{\ell+1}(t)$ is one employee, and therefore Theorem \ref{thm:good_expert} can be applied with $T_{\ell+1,m} = \sum_{t=1}^T \1(v_t\in I_{\ell+1,m})$. The inequality (b) follows from the concavity of $x\mapsto \sqrt{x}$, and the last inequality plugs in the expression of $M_{\ell} = 2^{\ell+1} \le 2\sqrt{T}$. 

Finally, by \eqref{eq:regret_last_level} and \eqref{eq:regret_middle_level}, it holds that
\begin{align*}
	\sum_{t=1}^T \left( r_t(f(v_t)) - \bE[r_t(\pi^{\text{SEW}}(t))] \right) 
	&=\sum_{\ell=0}^{L-1} \sum_{t=1}^T \left( r_t(\pi_{\ell+1}(t)) - \bE[r_t(\pi_{\ell}(t))] \right) + \sum_{t=1}^T (r_t(\pi_{L+1}(t)) - r_t(\pi_{L}(t))) \\
	&\le 4CL\sqrt{T} + 4C(1+\log T)\sum_{\ell=0}^{L-1}2^{\ell} + 2\sqrt{T} \\
	&\le \left(2+4C(1+2\log_2 T)\right)\sqrt{T}, 
\end{align*}
and the arbitrariness of $f\in \calF_{\text{Lip}}$ completes the proof of Theorem \ref{thm:SEW}. 

\subsection{Proof of Theorem \ref{thm:monotone}}
The construction is essentially the same as \cite[Theorem 8]{han2020optimal}, but serves for different purposes. Fix $v_1 = 1/2$, and let $m_t$ be an i.i.d. sequence with $m_1$ uniformly distributed on two points $\{0,1/8\}$. The rest of the private values $v_t$ are chosen sequentially as follows: for $t\ge 2$, 
\begin{align*}
	v_t = \begin{cases}
		v_{t-1} + 2^{-t-1} & \text{if } m_{t-1} = 0, \\
		v_{t-1} - 2^{-t-1} & \text{if } m_{t-1} = 1/8. \\
	\end{cases}
\end{align*}
As $\sum_{t\ge 2}2^{-t-1} = 1/4$, we have $v_t \in [1/4,3/4]$ for all $t\in [T]$. Moreover, the update rule of $v_t$ ensures that if $v_{t+1} > v_t$, then $v_s>v_t$ for all $s>t$. Symmetrically, if $v_{t+1}<v_t$, then $v_s<v_t$ for all $s>t$. Based on this observation, we claim that the best strategy of the monotone oracle is 
\begin{align*}
	f(v) = \frac{1}{8}\cdot \1(v>v_T) + m_T\cdot \1(v=v_T). 
\end{align*}
In fact, as $v_t \ge 1/4> m_t$, the best bid (of any oracle, not necessarily restricted to be monotone) is clearly $m_t$ at each time. We now claim that $f(v_t) = m_t$ for the above strategy $f$. In fact, if $v_t>v_T$, the update rule of $v_t$ show that $v_t > v_{t+1}$, so it must hold that $m_t = 1/8 = f(v_t)$. Similarly, if $v_t<v_T$, it must hold that $m_t = 0$, which is also $f(v_t)$. Finally, $f(v_T) = m_T$ clearly holds. Hence, the strategy $f$ is the best monotone strategy, and the cumulative reward is $\sum_{t=1}^T (v_t - m_t)$. 

Next we consider any bidding strategy used by the bidder. Since $m_t$ is totally unpredictable based on the history, the expected reward at each time is at most 
\begin{align*}
	\bE[r(b_t;v_t,m_t)] &= \bE[(v_t - b_t)\1(b_t\ge m_t)] \\
	&\le \max_{b\in [0,1]} \frac{v_t - b}{2}\left(1 + \1\left(b\ge \frac{1}{8}\right)\right) \\
	&= \max\left\{\frac{v_t}{2}, v_t - \frac{1}{8} \right\} = v_t - \frac{1}{8}, 
\end{align*}
where the last identity is due to $v_t \ge 1/4$. Hence, the reward difference between the monotone oracle and the bidder is at least $\sum_{t=1}^T (1/8 - m_t)$, which has expectation $T/16 = \Omega(T)$. 

\subsection{Proof of Theorem \ref{thm:monotonce_oracle}}
Let $p = q/(q-1) \in [1,\infty)$ be the H\"{o}lder conjugate of $q\in (1,\infty]$. If two functions $f,g \in \calF_{\text{\rm Mono}}$ satisfy $f\le g$ and $\|f-g\|_p \le \varepsilon$, then by H\"{o}lder's inequality, 
\begin{align*}
	&\bE\left[ r(f(v_t);v_t,m_t) - r(g(v_t);v_t,m_t) \right] \\
	&\le \bE_{P_t}|f(v_t) - g(v_t)| \\
	&= \|(f-g)p_t\|_1 \le \|f-g\|_p\|p_t\|_q \le L\varepsilon. 
\end{align*}
As a result, we may treat the $\varepsilon$-bracketing of $\calF_{\text{\rm Mono}}$ under the $L_p$ norm as the experts, where the same approximation property still holds in expectation. Then applying the ChEW policy with the new collection of the experts gives the claimed regret bound. 
\bibliographystyle{alpha}
\bibliography{di.bib}

\end{document}